\renewcommand\thm@space@setup{%
  \thm@preskip=4pt%
  \thm@postskip=4pt%
}
\newcommand{\simone}[1]{#1}
\newcommand{\diff}{\textup{d}}
\def\P{\mathbb{P}}
\def\Cov{\mathrm{Cov}}
\def\Var{\mathrm{Var}}
\def\tr{\mathrm{tr}}
\def\dim{\mathrm{dim}}
\def\diag{\mathrm{diag}}
\def\R{\mathbb{R}}
\newcommand{\opnorm}[1]{\left\lVert#1\right\rVert_{\textup{op}}}
\def\b0{{0}}
\def\>{\rangle}
\def\diag{\operatorname{\mathop{diag}}}
\newcommand{\E}{\mathbb{E}}
\newcommand{\distas}[1]{\mathbin{\overset{#1}{\sim}}}
\newcommand{\bigO}[1]{\mathcal{O}\left(#1\right)}
\newcommand{\norm}[1]{\left\|#1\right\|}
\newcommand{\subGnorm}[1]{\left\|#1\right\|_{\psi_2}}
\newcommand{\evmax}[1]{\lambda_{\rm max}\left(#1\right)}
\newcommand{\evmin}[1]{\lambda_{\rm min}\left(#1\right)}
\def\tr{\mathop{\rm tr}\nolimits}
\def\det{\mathop{\rm det}\nolimits}
\def\ker{\mathop{\rm ker}\nolimits}
\def\dim{\mathop{\rm dim}\nolimits}
\def\min{\mathop{\rm min}\nolimits}
\def\max{\mathop{\rm max}\nolimits}
\numberwithin{equation}{section}
\newtheoremstyle{myexample} 
    {\topsep}                    
    {\topsep}                    
    {\rm }                   
    {}                           
    {\bf }                   
    {.}                          
    {.5em}                       
    {}  
\newtheoremstyle{myremark} 
    {\topsep}                    
    {\topsep}                    
    {\rm}                        
    {}                           
    {\bf}                        
    {.}                          
    {.5em}                       
    {}  
\newtheorem{claim}{Claim}[section]
\newtheorem{lemma}[claim]{Lemma}
\newtheorem{assumption}{Assumption}
\newtheorem{theorem}{Theorem}
\newtheorem{proposition}[claim]{Proposition}
\theoremstyle{myremark}
\theoremstyle{myremark}
\theoremstyle{myexample}
\author{Simone Bombari\thanks{Institute of Science and Technology Austria (ISTA). Emails: \texttt{\{simone.bombari, marco.mondelli\}@ist.ac.at}.}\;,
\;\;Marco Mondelli\footnotemark[1]}
\title{Spurious Correlations in High Dimensional Regression: \\
The Roles of Regularization, Simplicity Bias and Over-Parameterization}
\begin{document}

\newtheorem*{theoremcentering}{Theorem \ref{thm:maincentering}}
\newtheorem*{theoremcentered}{Theorem \ref{thm:centered}}
\newtheorem*{corhammer}{Corollary \ref{cor:hammer}}
\newtheorem*{cormem}{Corollary \ref{cor:memcap}}
\newtheorem*{theoremoptim}{Theorem \ref{thm:optimization}}

\maketitle

\begin{abstract}

Learning models have been shown to rely on spurious correlations between non-predictive features and the associated labels in the training data, with negative implications on robustness, bias and fairness.
In this work, we provide a statistical characterization of this phenomenon for high-dimensional regression, when the data contains a predictive \emph{core} feature $x$ and a \emph{spurious} feature $y$. Specifically, we quantify the amount of spurious correlations $\mathcal C$ learned via linear regression, in terms of the data covariance and the strength $\lambda$ of the ridge regularization. As a consequence, we first capture the simplicity of $y$ through the spectrum of its covariance, and its correlation with $x$ through the Schur complement of the full data covariance. Next, we prove a trade-off between $\mathcal C$ and the in-distribution test loss $\mathcal L$, by showing that the value of $\lambda$ that minimizes $\mathcal L$ lies in an interval where $\mathcal C$ is increasing. Finally, we investigate the effects of over-parameterization via the random features model, by showing its equivalence to regularized linear regression.
Our theoretical results are supported by numerical experiments on Gaussian, Color-MNIST, and CIFAR-10 datasets.
\end{abstract}

\section{Introduction}

Machine learning systems have been shown to learn from patterns that are statistically correlated with the intended task, despite not being causally predictive \cite{geirhos2020, xiao2021noise}. As a concrete example, a blue background in a picture might be positively correlated with the presence of a boat in the foreground, and while not being a predictive feature per se, a trained deep learning model could use this information to bias its prediction. 
In the literature, this statistical (but non causal) connection is referred to as a \emph{spurious correlation} between a 
feature and the learning task. A recent and extensive line of research has investigated the extent to which deep learning models manifest this behavior \cite{geirhos2018imagenettrained, xiao2021noise} and has proposed different mitigation approaches \cite{sagawa2020a, liu2021just}, 
given its implications to robustness, bias, and fairness \cite{zliobaite15, zhou21combating}. 
The phenomenon, also referred to as shortcut learning, is often attributed to the relative ``simplicity'' of spurious features \cite{geirhos2020, shah2020pitfalls, Hermann2020whatshapes} and to the implicit bias of over-parameterized models toward learning simpler patterns \cite{Belkin19, rahaman2019spectral, kalimeris2019sgd}. Consequently, the 
\emph{core features} that are 
informative about the task 
(e.g., the boat in the foreground) may be neglected, as \emph{spurious features} (e.g., the blue background) provide an easier shortcut to 
minimize the loss function. 


\begin{wrapfigure}{r}{0.5\textwidth}
  \begin{center}
    \includegraphics[width=0.5\textwidth]{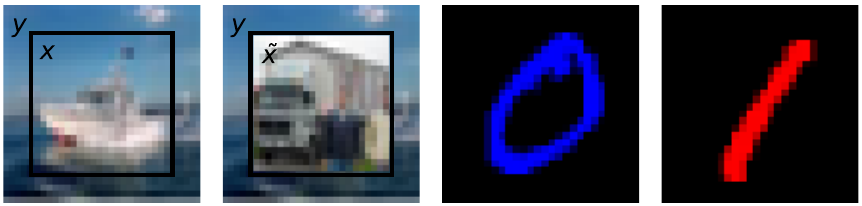}
  \end{center}
  \caption{\emph{Left two panels:} pictorial representation of the core (spurious) feature $x$ ($y$) and an independent core feature $\tilde x$, taken from an image of a boat and a truck in the CIFAR-10 dataset. \emph{Right two panels:} examples from a binary Color-MNIST dataset, where the labels correspond to the number shapes, and the zeros (ones) are colored in blue (red) with probability $(1 + \alpha)/2$.}
  \label{fig:intro}
\end{wrapfigure}

Prior work has attempted to formalize the \emph{simplicity bias} 
relying 
on boolean functions \cite{qiu2024complexity}, model-specific biases \cite{morwani2023simplicity}, one-dimensional features \cite{shah2020pitfalls} and their pairwise interactions \cite{pezeshki2021gradient}. However, when considering high-dimensional natural data (e.g., the boat and its background in Figure \ref{fig:intro}), it remains unclear, based on these notions, what exactly makes the features easy or difficult to learn, and to what extent a trained model relies on spurious correlations.
Furthermore, while \cite{sagawa2020b} show that over-parameterization can exacerbate spurious correlations when re-weighting the objective on minority groups (e.g., boats with a green background), its effect on models trained via empirical risk minimization (ERM) is less understood. This is a critical point 
when additional group membership annotations are too expensive to obtain, and ERM is a key part of training 
\cite{liu2021just, ahmed2021systematic}.




Our work tackles these issues: we provide a rigorous characterization of the statistical mechanisms behind learning spurious correlations in high-dimensional data, focusing on 
the solution obtained via ERM. 
Formally, we model the input sample $z$ as composed by two distinct features, \emph{i.e.}, $z = [x, y]$, where $x \in \R^d$ is the core feature and $y \in \R^d$ the spurious one. The first panel of Figure \ref{fig:intro} provides an illustration with a boat in the foreground ($x$) and its blue background ($y$). Then, we quantify spurious correlations via the covariance $\mathcal C$ (see \eqref{eq:spurcov}) between the label $g$ (``boat'') and the model output given $\tilde z = [\tilde x, y]$ as input. Here, $\tilde x$ (a truck in the foreground) is a new core feature independent of everything else, see the second panel of Figure \ref{fig:intro}. Now, if $\mathcal C$ is positive, it means that the model is biased towards $g$ only because of $y$, since $\tilde x$ is independent from $x$ and $g$.

More precisely, we provide a sharp, non-asymptotic characterization of $\mathcal C$ for linear regression (Theorem \ref{thm:C}). Armed with such a characterization, we then:


$\bullet$ Interpret $\mathcal C$ via upper bounds on its magnitude (Proposition \ref{prop:boundsC}). This highlights the role of the regularization strength and of the data covariance via \emph{(i)} its Schur complement with respect to the covariance of the core feature $x$, and \emph{(ii)} the covariance of the spurious feature $y$. Specifically, 
we link the smallest eigenvalue of the Schur complement to the strength of the correlation between $y$ and $x$, and the largest eigenvalue of the spurious covariance 
to the simplicity of $y$.


$\bullet$ Prove a trade-off between $\mathcal C$ and the test loss (Proposition \ref{prop:incr}), which implies that spurious correlations can be beneficial to performance when learning in-distribution. Specifically, we show that the optimal regularization minimizing the test loss lies in an interval where $\mathcal C$ is positive and monotonically increasing.


$\bullet$ Investigate the role of over-parameterization via a \emph{random features} (RF) model. 
Specifically, we show that the RF model is equivalent to 
linear regression 
with an effective regularization 
that depends on the over-parameterization 
(Theorem \ref{thm:rf}). This 
allows to leverage the earlier analysis on regularized linear regression to quantify spurious correlations in over-parameterized, non-linear models. 


Throughout the paper, the theoretical 
results 
are supported 
by numerical experiments on 
\simone{synthetic} Gaussian data, Color-MNIST, and CIFAR-10, 
which validates 
our analysis even in settings not strictly following the modeling choices. \simone{Our code is publicly available at the GitHub repository \href{https://github.com/simone-bombari/spurious-correlations}{\texttt{simone-bombari/spurious-correlations}}.}


\section{Related work}\label{sec:rel}

\paragraph{Spurious correlations.} Learning from spurious correlations in a training dataset is rather common 
\cite{geirhos2018imagenettrained, arjovsky2020invariant, geirhos2020, sagawa2020a, xiao2021noise, singla2022salient} and it has 
unwanted consequences, e.g., lack of robustness towards domain shift, prediction bias and compromised algorithmic fairness \cite{zliobaite15, geirhos2018imagenettrained, zhou21combating, veitch2021counterfactual, seonguk22bias}.
Thus, multiple mitigation approaches have been proposed, with \cite{sagawa2020a, zhang2021coping} or without \cite{liu2021just, ahmed2021systematic} available annotations. Specifically, \cite{tiwari2023overcoming} exploit the difference in the features learned at different layers of a deep neural network; \cite{izmailov2022on, kirichenko2023last} re-train the last layer of the ERM solution to adapt the features to the distribution shift; and \cite{chang21augmentation, plumb2022finding} mitigate the problem via data augmentation.

\vspace{-0.2cm}

\paragraph{Simplicity bias.}
Recent work has shown that deep learning models have a bias towards learning from ``easier'' patterns \cite{Belkin19, rahaman2019spectral, kalimeris2019sgd}. In 
shortcut learning, this property is formalized in different ways across the 
literature. The difficulty of a feature is defined in terms of the minimum complexity of a network that learns it by \cite{Hermann2020whatshapes} and in terms of the smallest amount of linear segments that separate different classes by \cite{shah2020pitfalls}. \cite{moayeri2022hard}
connect the simplicity to the position and size of the features in an image. 
\cite{morwani2023simplicity} define the simplicity bias in 1-hidden layer neural networks via the rank of a projection operator that does not alter them substantially, and they focus on a dataset generated via an independent features model learned via the NTK. The NTK is also used to analyze gradient starvation \cite{pezeshki2021gradient} and feature availability \cite{hermann2024on}, regarded as explanations of the simplicity bias. \cite{qiu2024complexity} focus on parity functions and staircases, analyzing the learning dynamics of features having different complexity.

\vspace{-0.2cm}
\paragraph{High-dimensional regression.}
The test loss of linear regression when the input dimension $d$ scales proportionally with the sample size $n$ has been 
characterized precisely both in-distribution  \cite{hastie2019surprises, cheng2024dimension} and under covariate shift \cite{yang2023precise, mallinar2024minimumnorm,song2024generalization}. Furthermore, 
\cite{montanari2019generalization,chang2021provable,han2023distribution} have studied the distribution of the ERM solution via 
the convex Gaussian min-max Theorem 
\cite{thrampoulidis2015regularized}. 
Specifically, our work builds on the non-asymptotic characterization provided by \cite{han2023distribution}. 

\vspace{-0.1cm}

In contrast with linear regression where the number of parameters equals the input dimension, random features models \cite{rahimi2007random} 
capture the effects of over-parameterization, as the number of parameters is independent of $d$ and $n$.
\cite{mei2022generalization} have characterized the test loss of random features, showing that it displays a double descent \cite{Belkin19}. Furthermore, the RF model has been used to understand a wide family of phenomena such as feature learning \cite{ba2022highdimensional, damian2022neural, moniri2023theory}, robustness under adversarial attacks \cite{dohmatob2022non, bombari2023universal, hassani2024curse}, and distribution shift \cite{tripuraneni2021overparameterization, lee2023demystifying}. \simone{This model has been also considered in the setting where the data has two dependent components \cite{loureiro21}, although it was used to study the training and generalization error when only partial information is available.}
The equivalence between an over-parameterized RF model and a regularized linear one has also been studied in detail 
\cite{goldt2022gaussian, goldt2020modeling, hu23universality, montanari2022universality}. However, existing rigorous results show the equivalence at the level of training and test error. In contrast, we are interested in the covariance defined in \eqref{eq:spurcov} and, for this reason, we prove an equivalence at the level of the predictor (Theorem \ref{thm:rf}). 


\section{Preliminaries}\label{sec:preliminaries}

\paragraph{Notation.} Given a vector $v$, we denote by $\norm{v}_2$ its Euclidean norm. Given a matrix $A$, we denote by $\tr(A)$ and $\opnorm{A}$ its trace and operator (spectral) norm. Given a symmetric matrix $A$, we denote by $\evmin{A}$ ($\evmax{A}$) its smallest (largest) eigenvalue. \simone{We denote by $\lambda$ (without a sub-script) the $\ell_2$ regularization term in ridge regression.}
All complexity notations $\Omega(\cdot)$, $\mathcal{O}(\cdot)$, $\omega(\cdot)$, $o(\cdot)$ and $\Theta(\cdot)$ are understood for large data size $n$, input dimension $d$, and number of parameters $p$. We indicate with $C,c>0$ numerical constants, independent of $n$, $d$, $p$, whose value may change from line to line.

\vspace{-0.2cm}

\paragraph{Setting.} We consider 
supervised learning 
with $n$ training samples $\{ (z_1, g_1), \ldots, (z_n, g_n) \}$ and labels defined by a (not necessarily deterministic) function of the inputs $g_i = f^*(z_i)$, where $z_i \in \R^{2d}$ denotes the $i$-th training input and $g_i \in \R$ the corresponding 
label. Input samples are composed by two distinct parts (or \emph{features}), \emph{i.e.}, $z_i^\top = [x_i^\top, y_i^\top]$, with $x_i, y_i \in \R^d$, and they are sampled i.i.d.\ from the distribution $\mathcal P_{XY}$. We further denote with $\mathcal P_X$ ($\mathcal P_Y$) the marginal distribution of the $x_i$-s ($y_i$-s). The features $x$ and $y$ have the same dimension $d$ to ease the 
presentation. 
\simone{We opted to focus on the setting $z = [x, y]$ due to its connection with the motivating example of images with background, but the analysis could be extended to other cases, such as $z = x + y$, briefly discussed in Appendix \ref{app:zisxplusy}.}

We focus on the setting where the labels $g_i$ depend only on $x_i$, \emph{i.e.}, $g_i = f^*(z_i) = f^*_x(x_i)$ for some (not necessarily deterministic) function $f^*_x$. Hence, $y_i$ is independent from $g_i$, after conditioning on $x_i$. We highlight that the independence between $y_i$ and $g_i$ is conditional on $x_i$, \simone{and that the} covariance between $y_i$ and $x_i$ is in general non-zero. 
We refer to $y_i$ as the \emph{spurious feature} of the $i$-th sample, and to $x_i$ as its \emph{core feature}. As an example, 
$x_i$ may represent the main object in an image and 
$y_i$ 
the (not necessarily independent) background, see 
Figure \ref{fig:intro}.

In this setup, 
the training data is used to learn $f^*(z)$ through a parametric model $f(\theta, z)$ via regularized empirical risk minimization (ERM). Specifically, we perform the following optimization in parameter space:
\begin{equation}\label{eq:hattheta}
    \hat \theta = \arg \min_\theta \left( \frac{1}{n} \sum_{i = 1}^n \ell \left( f(\theta, z_i), g_i \right) + \lambda \norm{\theta}_2^2 \right),
\end{equation}
for some regularization term $\lambda \geq 0$, where $\ell$ is a loss function\footnote{In general, existence and uniqueness of $\hat \theta$ depend on the choice of the model $f(\theta, z)$, the loss function $\ell$ and the regularization term $\lambda$. For the purposes of our work, we will precisely define $\hat \theta$ for linear regression (Section \ref{sec:lr}) and for random features (Section \ref{sec:rf}).}. We define the test loss associated to the model $f(\hat \theta, \cdot)$ as
\begin{equation}\label{eq:testloss}
    \mathcal L(\hat \theta) = \E_{z \sim \mathcal P_{XY}, \, g = f^*(z)} \left[ \ell \left( f(\hat \theta, z), g \right) \right].
\end{equation}
We note that this test loss is in distribution.

\paragraph{Spurious correlations.} 
We express the extent to which a model $f(\hat \theta, \cdot)$ learns spurious correlations between the spurious feature $y$ and the label $g$ as
\begin{equation}\label{eq:spurcov}
    \mathcal C(\hat \theta) = \Cov \left( f\left(\hat \theta,  [\tilde x^\top, y^\top]^\top \right), g \right),
\end{equation}
where the covariance is computed on the probability space of 
$[x^\top, y^\top]^\top \sim \mathcal P_{XY}$, $g = f^*_x(x)$ and of the independent core feature $\tilde x \sim \mathcal P_X$. In words, $\mathcal C(\hat \theta)$ expresses how the output of the model $f(\hat \theta, \cdot)$ evaluated on an out-of-distribution sample $[\tilde x^\top, y^\top]^\top$ (where the two features are sampled independently from the marginal distributions $\mathcal P_X$ and $\mathcal P_Y$) correlates to the label associated to the in-distribution sample $g = f^*(z) = f^*_x(x)$. We highlight that, if the model $f(\hat \theta, \cdot)$ \emph{does not rely} on the spurious feature $y$, then $\mathcal C(\hat \theta) = 0$ as $x$ and $\tilde x$ are independent. 

We note that \eqref{eq:spurcov} formalizes to the regression setting the definition in the survey \cite{ye2024spurious} and the fairness metric in \cite{zliobaite15} (when interpreting $y$ as the protected variable). Furthermore, 
in the context of classification, it can connected to the worst group accuracy \cite{sagawa2020a, sagawa2020b}. In fact, 
%
%
our definition \eqref{eq:spurcov} is also related to the \emph{out-of-distribution} test loss: 
assuming $\E [ f(\hat \theta, [\tilde x^\top, y])^2 ] = \E [ f^*_x(\tilde x)^2 ] = 1$ 
and $\E [ f(\hat \theta, [\tilde x^\top, y]) ] = \E [ f^*_x(\tilde x) ] = 0$ for simplicity and considering a quadratic loss, we have
\begin{equation}\label{eq:oodlossbody}
    \E_{\tilde x, y} \left[ \left( f(\hat \theta, [\tilde x^\top, y]) - f^*_x(\tilde x) \right)^2 \right] \geq 2 - 2 \sqrt{1 - \mathcal C(\hat \theta)^2}.
\end{equation}
This implies that an increase in $\mathcal C(\hat \theta)$ hurts the performance of the model when 
core and spurious features are sampled independently (and, thus, the model is tested out-of-distribution). The proof of \eqref{eq:oodlossbody} is in Appendix \ref{app:ood}.

\section{Precise analysis for linear regression}\label{sec:lr}

To study $\mathcal C(\cdot)$ as defined in \eqref{eq:spurcov}, we focus on a 
high-dimensional \emph{linear regression} model, \emph{i.e.},
\begin{equation}\label{eq:lrmodel}
    f_{\textup{LR}}(\theta, z) = z^\top \theta,
\end{equation}
where $\theta \in \R^{2d}$. The data also follows a linear model, \emph{i.e.}, 
\begin{equation}\label{eq:datamodel}
g_i = z_i^\top \theta^* + \epsilon_i = x_i^\top \theta^*_x + \epsilon_i,    
\end{equation}
where $\theta^* \in \R^{2d}$, $\theta^*_x \in \R^{d}$, and $\epsilon_i$ is label noise.
The second equality in \eqref{eq:datamodel} implies that $\theta^* = [{\theta^*_x}^\top, \mathbf{0}_d^\top]^\top$, where $\theta^*_x, \mathbf{0}_d \in \R^d$ and each entry of $\mathbf{0}_d$ is 0. We set $\norm{\theta^*}_2 = \norm{\theta^*_x}_2 = 1$ and let the $\epsilon_i$-s be i.i.d.\ (and independent from the $z_i$-s), mean-0, sub-Gaussian, with variance $\sigma^2 > 0$.

We introduce the shorthands $Z = [z_1^\top, \dots, z_n^\top]^\top \in \R^{n \times 2d}$, $G = [g_1, \dots, g_n]^\top \in \R^{n}$, and $\mathcal E = [\epsilon_1, \dots, \epsilon_n]^\top \in \R^{n}$ to indicate the data matrix, the labels, and the noise vector respectively. Then, using a quadratic loss, \eqref{eq:hattheta} reads
\begin{equation}\label{eq:hatthetalr}
    \hat \theta_{\textup{LR}}(\lambda) = \arg \min_\theta \left( \frac{1}{n} \norm{Z \theta - G}_2^2 + \lambda \norm{\theta}_2^2 \right),
\end{equation}
which admits the unique solution
\begin{equation}\label{eq:hatthetalambda}
    \hat \theta_{\textup{LR}}(\lambda) = \left( Z^\top Z + n \lambda I \right)^{-1} Z^\top G,
\end{equation}
for $\lambda > 0$ and, if $Z^\top Z$ is invertible, also for $\lambda = 0$.

\begin{assumption}[Data distribution]\label{ass:data}
    $\{z_i\}_{i=1}^n$ are $n$ i.i.d.\ samples from the multivariate, mean-0, Gaussian distribution $\mathcal P_{XY}$, such that its covariance $\Sigma := \E \left[zz^\top \right] \in \R^{2d \times 2d}$ is invertible, with $\evmax{\Sigma} = \bigO{1}$, $\evmin{\Sigma} = \Omega(1)$, and $\tr(\Sigma) = 2d$.
\end{assumption}
This requirement could be relaxed to having sub-Gaussian data. We focus on the Gaussian case for simplicity, deferring the discussion on the generalization to Appendix \ref{app:datageneral}.

\paragraph{Warm-up: no regularization ($\lambda=0$).} Our first result concerns the amount of spurious correlations learned in the un-regularized setting. 

\begin{proposition}\label{prop:lambda0}
Let $\lambda = 0$ and $Z^\top Z \in \R^{2d \times 2d}$ be invertible\footnote{Under Assumption \ref{ass:data}, this holds with probability 1 for $n \geq 2d$.}. Let $\mathcal C(\hat \theta_{\textup{LR}}(0))$ be the amount of spurious correlations learned by the model $f_{\textup{LR}}(\hat \theta_{\textup{LR}}(0), \cdot )$. 
 Then, we have that 
 \begin{equation}\label{eq:exp0}
 \E_{\mathcal E}[ \mathcal C(\hat \theta_{\textup{LR}}(0)) ] = 0.    
 \end{equation}
Furthermore, if Assumption \ref{ass:data} holds and $n = \omega(d)$, 
 \begin{equation}\label{eq:exp0c}
| \mathcal C(\hat \theta_{\textup{LR}}(0)) | = \mathcal O (\log d / \sqrt d),
\end{equation}
with probability at least $1 - 2 \exp ( -c \log^2 d )$ over $Z$ and $\mathcal E$, where $c$ is an absolute constant.
\end{proposition}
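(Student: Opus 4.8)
The plan is to first reduce $\mathcal C(\hat\theta_{\textup{LR}}(0))$ to an explicit bilinear form in the training data and then control it separately in $\mathcal E$ (for \eqref{eq:exp0}) and in $Z$ (for \eqref{eq:exp0c}). Split $\hat\theta_{\textup{LR}}(0) = [\hat\theta_x^\top,\hat\theta_y^\top]^\top$ with $\hat\theta_x,\hat\theta_y\in\R^d$, and note that in \eqref{eq:spurcov} we may take $g = x^\top\theta^*_x$ up to a fresh mean-$0$ label-noise term that is independent of the test features $(\tilde x, y)$ and of $\hat\theta$, hence does not contribute to the covariance. Since $\tilde x\sim\mathcal P_X$ is independent of $(x,y)$ and all features are mean-$0$ Gaussian, once the training data is fixed the covariance in \eqref{eq:spurcov} is over $(\tilde x,x,y)$ only, and
\begin{equation*}
\mathcal C(\hat\theta_{\textup{LR}}(0)) \;=\; \Cov\!\big(\tilde x^\top\hat\theta_x + y^\top\hat\theta_y,\; x^\top\theta^*_x\big) \;=\; \hat\theta_x^\top\,\E[\tilde x x^\top]\,\theta^*_x + \hat\theta_y^\top\,\E[y x^\top]\,\theta^*_x \;=\; \hat\theta_y^\top\,\Sigma_{yx}\,\theta^*_x,
\end{equation*}
using $\E[\tilde x x^\top]=0$ and writing $\Sigma_{yx} := \E[yx^\top]\in\R^{d\times d}$ for the off-diagonal block of $\Sigma$. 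Inserting the data model $G = Z\theta^* + \mathcal E$ into \eqref{eq:hatthetalambda} at $\lambda=0$ gives $\hat\theta_{\textup{LR}}(0) = \theta^* + (Z^\top Z)^{-1}Z^\top\mathcal E$; since the last $d$ coordinates of $\theta^*$ vanish, $\hat\theta_y = P(Z^\top Z)^{-1}Z^\top\mathcal E$ with $P := [\,\mathbf 0_{d\times d}\ \ I_d\,]$, so that
\begin{equation*}
\mathcal C(\hat\theta_{\textup{LR}}(0)) \;=\; \mathcal E^\top v, \qquad v \;:=\; Z(Z^\top Z)^{-1}P^\top\Sigma_{yx}\theta^*_x \;\in\; \R^n.
\end{equation*}

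Equation \eqref{eq:exp0} is then immediate: conditionally on $Z$, the quantity $\mathcal C(\hat\theta_{\textup{LR}}(0))$ is linear in the mean-$0$ noise vector $\mathcal E$, which is independent of $Z$, hence $\E_{\mathcal E}[\mathcal C(\hat\theta_{\textup{LR}}(0))] = v^\top\E[\mathcal E] = 0$ for every $Z$ with $Z^\top Z$ invertible.

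For \eqref{eq:exp0c} I would condition on $Z$ and treat $v$ as fixed. As $\mathcal E$ has i.i.d., mean-$0$, sub-Gaussian entries of variance $\sigma^2$, the linear form $\mathcal E^\top v$ is sub-Gaussian with parameter $\mathcal O(\sigma\|v\|_2)$, so $|\mathcal C(\hat\theta_{\textup{LR}}(0))| \le C\sigma\|v\|_2\log d$ with probability at least $1 - 2\exp(-c\log^2 d)$. It thus suffices to prove $\|v\|_2 = \mathcal O(1/\sqrt n)$ with very high probability over $Z$. Setting $w := P^\top\Sigma_{yx}\theta^*_x$ (so $\|w\|_2 \le \opnorm{\Sigma}\,\|\theta^*_x\|_2 = \mathcal O(1)$), a cancellation gives $\|v\|_2^2 = w^\top(Z^\top Z)^{-1}w \le \|w\|_2^2/\evmin{Z^\top Z}$. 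Factoring $Z = \mathcal Z\,\Sigma^{1/2}$ with $\mathcal Z\in\R^{n\times 2d}$ having i.i.d.\ $\mathcal N(0,1)$ entries yields $Z^\top Z = \Sigma^{1/2}\mathcal Z^\top\mathcal Z\,\Sigma^{1/2} \succeq \evmin{\mathcal Z^\top\mathcal Z}\,\Sigma$, whence $\evmin{Z^\top Z} \ge \evmin{\Sigma}\,\sigma_{\min}(\mathcal Z)^2$. The standard non-asymptotic estimate $\sigma_{\min}(\mathcal Z) \ge \sqrt n - \sqrt{2d} - t$ with probability at least $1-2e^{-t^2/2}$, combined with $n=\omega(d)$ (which forces $\sqrt n - \sqrt{2d} \ge \tfrac34\sqrt n$ for $d$ large) and $t = \tfrac14\sqrt n$, gives $\evmin{Z^\top Z} = \Omega(n)$ on an event of probability at least $1-2e^{-n/32}$. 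On that event $\|v\|_2 = \mathcal O(1/\sqrt n)$, hence $|\mathcal C(\hat\theta_{\textup{LR}}(0))| = \mathcal O(\sigma\log d/\sqrt n) = \mathcal O(\log d/\sqrt d)$, the last bound using $n \ge d$. A union bound finishes: since $n=\omega(d)$ makes $e^{-n/32} \le e^{-c\log^2 d}$ for $d$ large, the total failure probability is at most $2\exp(-c\log^2 d)$ after shrinking $c$.

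The argument is essentially routine once the closed form $\mathcal C(\hat\theta_{\textup{LR}}(0)) = \mathcal E^\top v$ is available; the only genuinely technical input is the lower bound $\evmin{Z^\top Z} = \Omega(n)$, which relies on the non-asymptotic smallest-singular-value bound for Gaussian matrices together with the factorization through $\Sigma$. The remaining mild subtlety is the probability bookkeeping — one has to use the $n=\omega(d)$ hypothesis both to absorb the $e^{-\Omega(n)}$ spectral-event term into the target $2\exp(-c\log^2 d)$ and to pass from the sharper bound $\log d/\sqrt n$ to the stated $\log d/\sqrt d$.
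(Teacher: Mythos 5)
Your proposal is correct and follows essentially the same route as the paper: the decomposition $\hat\theta_{\textup{LR}}(0)=\theta^*+(Z^\top Z)^{-1}Z^\top\mathcal E$, the closed form $\mathcal C(\hat\theta_{\textup{LR}}(0))=\mathcal E^\top Z(Z^\top Z)^{-1}P_y\Sigma\theta^*$ (your $P^\top\Sigma_{yx}\theta^*_x$ is the same vector), a sub-Gaussian tail bound in $\mathcal E$ conditionally on $Z$, and the spectral estimate $\evmin{Z^\top Z}=\Omega(n)$. The only immaterial difference is how that last estimate is obtained: you whiten $Z=\mathcal Z\,\Sigma^{1/2}$ and invoke the Gaussian smallest-singular-value bound, whereas the paper uses covariance concentration for matrices with independent sub-Gaussian rows together with Weyl's inequality; both give $\Omega(n)$ with exponentially small failure probability, so the bookkeeping and conclusion coincide.
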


In words, 
$f_{\textup{LR}} (\hat \theta_{\textup{LR}}(0) , \cdot)$ \emph{does not learn} any spurious correlation between the spurious feature $y$ and the label $g$. This is also clear from Figure \ref{fig:lambda}, where we report in red the value of $\mathcal C(\hat \theta_{\textup{LR}}(\lambda))$, 
which approaches 0 as $\lambda$ becomes small.

The idea of the argument is to write explicitly the solution 
\begin{equation}
\hat \theta_{\textup{LR}}(0) =  \left( Z^\top Z \right)^{-1} Z^\top G = \theta^* + \left( Z^\top Z \right)^{-1} Z^\top \mathcal E, 
\end{equation}
where in the second step we separate the ground truth $\theta^*$ (which does not capture any dependence on $y$) from a term only depending on the label noise, which is mean-0 and independent from $y$. This directly gives \eqref{eq:exp0}. Then, the 
bound  in \eqref{eq:exp0c} is obtained via standard concentration results on $\evmin{Z^\top Z}$. The details are in 
Appendix \ref{app:lr}. 

\paragraph{General case with regularization ($\lambda > 0$).} 

Setting a regularizer $\lambda > 0$ often reduces the test loss, see the black curve in Figure \ref{fig:lambda}. However, it also leads to non-trivial spurious correlations, and our main result provides a non-asymptotic characterization of this phenomenon. 

\begin{theorem}\label{thm:C}
    Let Assumption \ref{ass:data} hold, 
    $n = \Theta(d)$ and 
    $\mathcal C(\hat \theta_{\textup{LR}}(\lambda))$ be the amount of spurious correlations learned by the model $f_{\textup{LR}} (\hat \theta_{\textup{LR}}(\lambda), \cdot )$ for $\lambda > 0$. 
    Denote by $P_y \in \R^{2d \times 2d}$ the projector on the last $d$ elements of the canonical basis in $\R^{2d}$, and set 
    \begin{equation}\label{eq:CSigmatau}
        \mathcal C^\Sigma(\lambda) := {\theta^*}^\top \Sigma \left( \Sigma + \tau(\lambda) I \right)^{-1} P_y \Sigma \theta^*, 
    \end{equation}
    where $\tau:=\tau(\lambda)$ is implicitly defined as the unique positive solution of
\begin{equation}\label{eq:tau}
    1 - \frac{\lambda}{\tau} = \frac{1}{n} \tr \left( \left( \Sigma + \tau I \right)^{-1} \Sigma\right).
\end{equation}
    Then, 
for every $t \in (0, 1/2)$,
    \begin{equation*}
    \P_{Z, \mathcal E} \left( \left| \mathcal C(\hat \theta_{\textup{LR}}(\lambda)) - \mathcal C^\Sigma(\lambda) \right| \geq t \right)  \leq C d \exp \left( -d t^4 / C \right),
    \end{equation*}
    where $C$ is an absolute constant. 
\end{theorem}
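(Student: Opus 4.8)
The plan is to reduce the covariance $\mathcal{C}(\hat\theta_{\textup{LR}}(\lambda))$ to a single linear functional of the ridge estimator, and then to characterize that functional via a non-asymptotic distributional result for regularized least squares with Gaussian design, in the spirit of \cite{han2023distribution}. For the \emph{reduction}, condition on the training set (hence on $\hat\theta_{\textup{LR}}(\lambda)$) and compute the covariance in \eqref{eq:spurcov} over the fresh triple $(x,y,\tilde x)$. Writing $\hat\theta_{\textup{LR}}(\lambda) = [\hat\theta_x^\top,\hat\theta_y^\top]^\top$, we have $f_{\textup{LR}}(\hat\theta_{\textup{LR}}(\lambda),[\tilde x^\top,y^\top]^\top) = \tilde x^\top\hat\theta_x + y^\top\hat\theta_y$ and $g = x^\top\theta^*_x$ (the independent mean-$0$ label noise does not affect the covariance). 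Since $\tilde x$ is independent of $(x,y)$ with $\E[\tilde x]=0$, the $\tilde x$ term vanishes, and using the block structure of $\Sigma$ together with $\theta^* = [{\theta^*_x}^\top,\mathbf{0}_d^\top]^\top$,
\[
\mathcal{C}(\hat\theta_{\textup{LR}}(\lambda)) \;=\; \hat\theta_y^\top\,\E[yx^\top]\,\theta^*_x \;=\; \inner{v,\hat\theta_{\textup{LR}}(\lambda)}, \qquad v := P_y\Sigma\theta^*,
\]
with $\norm{v}_2 = \mathcal{O}(1)$ under Assumption \ref{ass:data}. It thus suffices to characterize this fixed linear statistic of $\hat\theta_{\textup{LR}}(\lambda)$.

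\emph{Deterministic equivalent.} For $\lambda>0$ the objective in \eqref{eq:hatthetalr} is $2\lambda$-strongly convex and the rows of $Z$ are i.i.d.\ $\mathcal{N}(0,\Sigma)$, so the convex Gaussian min-max theorem applies: the primal program is compared to a scalar auxiliary program whose stationarity condition is exactly the fixed point \eqref{eq:tau} for $\tau=\tau(\lambda)$, and whose minimizer identifies the deterministic surrogate $\bar\theta(\lambda):=\Sigma(\Sigma+\tau I)^{-1}\theta^*$ for the linear statistics of $\hat\theta_{\textup{LR}}(\lambda)$. (Equivalently, $\E_{\mathcal{E}}[\hat\theta_{\textup{LR}}(\lambda)] = \big(I - \lambda(\tfrac{1}{n}Z^\top Z + \lambda I)^{-1}\big)\theta^*$ and one uses the bilinear-form resolvent equivalent $(\tfrac{1}{n}Z^\top Z+\lambda I)^{-1}\approx \tfrac{\tau}{\lambda}(\Sigma+\tau I)^{-1}$, whose self-consistency is \eqref{eq:tau}, plus concentration of the mean-$0$ noise term.) Substituting $v = P_y\Sigma\theta^*$ and using that $P_y$ is symmetric while $\Sigma$ and $(\Sigma+\tau I)^{-1}$ commute, so that $u^\top P_y w = w^\top P_y u$ for $u=\Sigma\theta^*$, $w=\Sigma(\Sigma+\tau I)^{-1}\theta^*$, we obtain
\[
\inner{v,\bar\theta(\lambda)} \;=\; \theta^{*\top}\Sigma P_y\,\Sigma(\Sigma+\tau I)^{-1}\theta^* \;=\; \theta^{*\top}\Sigma(\Sigma+\tau I)^{-1}P_y\Sigma\theta^* \;=\; \mathcal{C}^\Sigma(\lambda),
\]
i.e.\ precisely \eqref{eq:CSigmatau}. (As a sanity check, $P_y\theta^*=\mathbf{0}$ gives $\mathcal{C}^\Sigma(\lambda) = -\tau\,\theta^{*\top}\Sigma P_y(\Sigma+\tau I)^{-1}\theta^*$, which matches the ``bias'' term $-n\lambda\,\theta^{*\top}\Sigma P_y(Z^\top Z+n\lambda I)^{-1}\theta^*$ after the resolvent equivalent.)

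\emph{Non-asymptotic concentration.} Split $\inner{v,\hat\theta_{\textup{LR}}(\lambda)}$ into its $\mathcal{E}$-conditional mean and the residual noise part. On the event $\{\opnorm{Z}=\mathcal{O}(\sqrt n)\}$ (which fails with probability $\exp(-\Omega(n))$), the noise part is a linear functional of the sub-Gaussian vector $\mathcal{E}$ with variance proxy $\tfrac{\sigma^2}{n^2}\norm{Z(\tfrac{1}{n}Z^\top Z+\lambda I)^{-1}v}_2^2 = \mathcal{O}(\sigma^2/(n\lambda))$, hence of size $\mathcal{O}(1/\sqrt d)$ with sub-Gaussian tails; as $t^4\le t^2$ for $t\in(0,1/2)$, this is not the bottleneck. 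For the design part, the CGMT comparison controls the gap between the primal and auxiliary \emph{values} up to $\varepsilon$ with probability $1-\exp(-\Omega(d\varepsilon^2))$ (the value is $\mathcal{O}(1/\sqrt n)$-Lipschitz in $Z$, and $n=\Theta(d)$); perturbing the program by $\pm s\inner{v,\theta}$ and invoking $2\lambda$-strong convexity turns an $\varepsilon$ value gap into an $\mathcal{O}(\sqrt\varepsilon)$ gap for $\inner{v,\hat\theta_{\textup{LR}}(\lambda)}$, so a deviation $t$ forces $\varepsilon\asymp t^2$ and hence the exponent $-dt^4/C$; the prefactor $Cd$ comes from the net/uniformity steps used to control $\norm{\hat\theta_{\textup{LR}}(\lambda)}_2$ and the residual. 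A union bound over the two contributions yields $\P_{Z,\mathcal{E}}\big(|\mathcal{C}(\hat\theta_{\textup{LR}}(\lambda))-\mathcal{C}^\Sigma(\lambda)|\ge t\big)\le Cd\exp(-dt^4/C)$; for $t\ge 1/2$ the bound is vacuous since $|\mathcal{C}|,|\mathcal{C}^\Sigma|=\mathcal{O}(1)$.

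\emph{Main obstacle.} The delicate step is the last one: converting the $\exp(-\Omega(d\varepsilon^2))$ concentration of the \emph{objective value} into $\exp(-\Omega(dt^4))$ concentration of a \emph{linear functional of the minimizer}, with absolute constants and uniformly over $\lambda>0$ --- the strong-convexity modulus $2\lambda$ degrades as $\lambda\to 0$ and must be complemented with lower bounds on $\evmin{\tfrac{1}{n}Z^\top Z}$ when $n$ is large relative to $d$, while the anisotropic design requires careful bookkeeping to ensure the auxiliary program's stationary point is exactly \eqref{eq:tau}. The reduction and the algebraic identities above are routine by comparison.
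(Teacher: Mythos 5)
Your reduction is exactly the paper's: conditioning on the training set, $\mathcal C(\hat\theta_{\textup{LR}}(\lambda)) = \hat\theta_{\textup{LR}}(\lambda)^\top P_y \Sigma \theta^*$, i.e.\ a fixed linear (hence $\opnorm{\Sigma}$-Lipschitz) functional of the ridge estimator, and your identification of its value at the deterministic surrogate $(\Sigma+\tau I)^{-1}\Sigma\theta^*$ with $\mathcal C^\Sigma(\lambda)$ is the same algebra as \eqref{eq:foralltheta} and \eqref{eq:ErhoC}. Where the proposal falls short is the concentration step, which is the entire quantitative content of the theorem. The paper does not run a CGMT argument itself: it invokes Theorem 2.3 of \cite{han2023distribution} as a black box, which states that for Gaussian data and any $1$-Lipschitz $\varphi:\R^{2d}\to\R$, $\varphi(\hat\theta_{\textup{LR}}(\lambda))$ concentrates around $\E_\rho[\varphi(\hat\theta^\rho)]$ for the Gaussian sequence model \eqref{eq:Gaussseq}, with exactly the tail $C d \exp(-dt^4/C)$; linearity of $\mathcal C$ then converts $\E_\rho[\mathcal C(\hat\theta^\rho)]$ into $\mathcal C(\E_\rho[\hat\theta^\rho])=\mathcal C^\Sigma(\lambda)$, and a rescaling by $\opnorm{\Sigma}$ finishes the proof. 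You instead try to re-derive this distributional characterization from scratch (CGMT comparison of values, $\pm s\inner{v,\theta}$ perturbation, strong convexity), but the decisive step --- turning $\exp(-\Omega(d\varepsilon^2))$ concentration of the \emph{objective value} into $\exp(-\Omega(dt^4))$ concentration of a \emph{linear statistic of the minimizer}, with constants uniform in the relevant range of $\lambda$ (the $2\lambda$-strong-convexity modulus degenerates as $\lambda\to0$) and with the auxiliary program's stationarity condition shown to be \eqref{eq:tau} for anisotropic $\Sigma$ --- is only asserted, and you yourself flag it as the ``main obstacle.'' As written, that is a genuine gap: the exponent $dt^4$ and the prefactor $Cd$ are never actually established.

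The gap is, however, exactly the content of the prior result the paper leans on, so the repair is not to complete the CGMT derivation but to quote it: verify that $\theta\mapsto \theta^\top P_y\Sigma\theta^*$ is $\opnorm{\Sigma}$-Lipschitz with $\opnorm{\Sigma}=\mathcal O(1)$ under Assumption \ref{ass:data}, apply the cited concentration result to $\varphi = \mathcal C/\opnorm{\Sigma}$, and use linearity to identify the centering with \eqref{eq:CSigmatau}. One further small caution in your sketch: your bound on the noise contribution, with variance proxy $\mathcal O(\sigma^2/(n\lambda))$, silently introduces a $\lambda$-dependence into the constants, which the final statement (absolute $C$, fixed $\lambda>0$) cannot tolerate without extra care; the black-box route avoids having to track this by hand.
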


\begin{wrapfigure}{r}{0.6\textwidth}
  \begin{center}
    \includegraphics[width=0.6\textwidth]{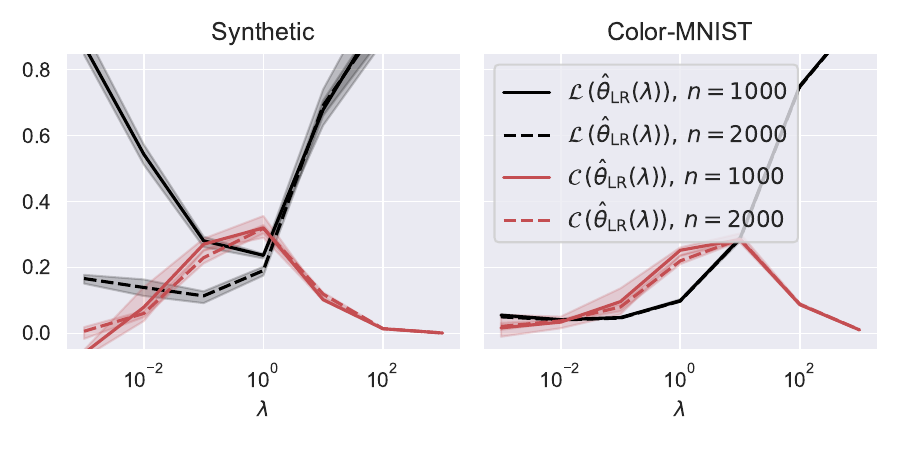}
  \end{center}
  \vspace{-0.2cm}
  \caption{Test loss $\mathcal L(\hat \theta_{\textup{LR}}(\lambda))$ (black) and spurious correlations $\mathcal C(\hat \theta_{\textup{LR}}(\lambda)$ (red) as a function of the regularization term $\lambda$ for two values of the number of samples $n$. \emph{Left:} synthetic Gaussian dataset, with $d = 400$ (additional details in Appendix \ref{app:experiments}); \emph{right:} binary Color-MNIST dataset with correlation $\sqrt{1 - \alpha^2} = 0.25$ between color and digit (see Figure \ref{fig:intro}).}
  \vspace{-0.1cm}
  \label{fig:lambda}
\end{wrapfigure}

In words, Theorem \ref{thm:C} 
guarantees that $| \mathcal C(\hat \theta_{\textup{LR}}(\lambda)) - \mathcal C^\Sigma(\lambda) | = o(1)$ with high probability (e.g., setting $t = d^{-1/5}$). Thus, for large $d, n$, we can \simone{theoretically analyze $\mathcal C(\hat \theta_{\textup{LR}}(\lambda))$ via the deterministic quantity $\mathcal C^\Sigma(\lambda)$, 
which, as highlighted by \eqref{eq:CSigmatau},  depends on $\theta^*$, the covariance of the data $\Sigma$, and the regularization $\lambda$ via the parameter $\tau(\lambda)$ introduced in \eqref{eq:tau}.} \simone{We further analyze the object $\mathcal C^\Sigma(\lambda)$ in Section \ref{sec:regsimp} that immediately follows}. 

Note that, since $\hat \theta_{\textup{LR}}(\lambda)$ is given by \eqref{eq:hatthetalambda}, when $\lambda>0$ it cannot be 
decomposed as $\theta^* + \left( Z^\top Z \right)^{-1} Z^\top \mathcal E$ (as in the proof of Proposition \ref{prop:lambda0} for $\lambda=0$). Thus, 
we 
rely on the non-asymptotic characterization of $\hat \theta_{\textup{LR}}(\lambda)$ recently provided by \cite{han2023distribution}. In particular, their analysis focuses on the proportional regime $n = \Theta(d)$ \simone{(as opposed to the regime $n = \omega(d)$, considered in \eqref{eq:exp0c})}, and it allows to provide concentration bounds on a certain family of low-dimensional functions of $\hat \theta_{\textup{LR}}(\lambda)$, which includes $\mathcal C$ as defined in \eqref{eq:spurcov}. 
The details are in Appendix \ref{app:lr}.

\section{Roles of regularization and simplicity bias}\label{sec:regsimp}

We now interpret 
$\mathcal C^\Sigma(\lambda)$, which characterizes the
spurious correlations via Theorem \ref{thm:C}, in terms of the data covariance $\Sigma$ and the regularization $\lambda$. To do so, 
we express $\Sigma$ as 
\begin{equation}\label{eq:Sigmablocks}
\Sigma = \left(\begin{array}{@{}c|c@{}}
  \Sigma_{xx}
  & \Sigma_{xy} \\
\hline
  \Sigma_{yx} &
  \Sigma_{yy}
\end{array}\right),
\end{equation}
where the block $\Sigma_{xx} = \E_{x \sim \mathcal P_{X}} \left[ xx^\top \right]\in \R^{d \times d}$ ($\Sigma_{yy} = \E_{y \sim \mathcal P_{Y}} \left[ yy^\top \right]\in \R^{d \times d}$) denotes the covariance of the core (spurious) feature sampled from its marginal distribution. The off-diagonal blocks are $\Sigma_{xy} = \Sigma_{yx}^\top = \E_{[x^\top, y^\top]^\top \sim \mathcal P_{XY}} \left[ xy^\top \right] \in \R^{d \times d}$. Let us denote by 
\begin{equation}\label{eq:schur}
 S_x^\Sigma := \Sigma_{yy} - \Sigma_{yx} \Sigma_{xx}^{-1} \Sigma_{xy}   
\end{equation}
the Schur complement of $\Sigma$ with respect to the top-left $d \times d$ block $\Sigma_{xx}$.
In our setting, $S_x^\Sigma$ offers a helpful statistical interpretation. In fact, for multivariate Gaussian data, it corresponds to the conditional covariance of $y$ given $x$, \emph{i.e.},
\begin{equation}\label{eq:schurcondcovariance}
    S_x^\Sigma = \Cov \left( y | x = \bar x \right) = \E_{y | x = \bar x} \left[ \left(y - \E_{y | x = \bar x} [y] \right) \left(y - \E_{y | x = \bar x} [y] \right)^\top \right].
\end{equation}
Therefore, the spectrum of $S_x^\Sigma$ describes the degree of dependence between $y$ and $x$: on the one hand, if its eigenvalues are small, the feature $y$ is close to be determined by the knowledge of the feature $x$ (\emph{i.e.}, $y$ is highly correlated with $x$); on the other hand, if its eigenvalues are large, the two features tend to be independent. 

As an intuitive example, let us interpret classification for the binary Color-MNIST dataset (see Figure \ref{fig:intro}) as a 2-dimensional problem: the core feature is $x=+1$ ($-1$) if the digit on the image is $1$ ($0$); and the spurious feature is $y=+1$ ($-1$) if the color is red (blue). If the digits and the colors have a correlation of $\alpha$, then $\Sigma_{xx} = \Sigma_{yy} = 1$ and $\Sigma_{xy} = \Sigma_{yx} = \alpha$. Thus, 
$S_x^\Sigma = 1 - \alpha^2$, which becomes 0 if $|\alpha| = 1$ (full correlation), and it is maximized in the setting of independence between colors and parity ($\alpha = 0$).

At this point, leveraging the decomposition of $\Sigma$ in \eqref{eq:Sigmablocks} and the Schur complement in \eqref{eq:schur}, we provide the following bounds on $\mathcal C^\Sigma(\lambda)$, which are proved in Appendix \ref{app:lr}. 

\begin{proposition}\label{prop:boundsC}
    Let $\mathcal C^\Sigma(\lambda)$ and $S_x^{\Sigma}$ be defined in \eqref{eq:CSigmatau} and \eqref{eq:schur}, respectively. Then, 
    \begin{equation}\label{eq:boundsC}
        \left| \mathcal C^\Sigma(\lambda) \right| \leq \min \left(\opnorm{\Sigma_{yx}}, \frac{\evmax{\Sigma}^2}{\tau(\lambda)}, \tau(\lambda) \sqrt{\Var(g) - \sigma^2} \frac{\evmax{\Sigma_{yy}} - \evmin{S_x^\Sigma}}{\evmin{S_x^\Sigma} \sqrt{\evmin{\Sigma_{xx}}}} \right).
    \end{equation}
\end{proposition}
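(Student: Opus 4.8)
The plan is to bound the scalar $\mathcal C^\Sigma(\lambda) = {\theta^*}^\top \Sigma (\Sigma + \tau I)^{-1} P_y \Sigma \theta^*$ in three different ways, one for each term inside the $\min$, and then take the smallest. Throughout I write $\tau = \tau(\lambda)$ and use the block structure \eqref{eq:Sigmablocks}, recalling that $\theta^* = [{\theta^*_x}^\top, \mathbf 0_d^\top]^\top$, so that $\Sigma \theta^* = [(\Sigma_{xx}\theta^*_x)^\top, (\Sigma_{yx}\theta^*_x)^\top]^\top$. A convenient rewriting is to introduce $u := \Sigma^{1/2}\theta^*$, so that $\mathcal C^\Sigma(\lambda) = u^\top \Sigma^{1/2}(\Sigma + \tau I)^{-1} P_y \Sigma^{1/2} u$; since $\|\theta^*\|_2 = 1$ we have $\|u\|_2^2 = {\theta^*}^\top \Sigma \theta^* = \Var(g) - \sigma^2$ (using the data model \eqref{eq:datamodel} and $\Var(\epsilon)=\sigma^2$), which is where the factor $\sqrt{\Var(g)-\sigma^2}$ in the third bound comes from.

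\textbf{First bound ($\opnorm{\Sigma_{yx}}$).} Write $\mathcal C^\Sigma(\lambda) = \big(\Sigma(\Sigma+\tau I)^{-1}\Sigma\theta^*\big)^\top P_y \theta'$ is not quite the cleanest route; instead note $P_y \Sigma \theta^* = [\,\mathbf 0_d^\top, (\Sigma_{yx}\theta^*_x)^\top\,]^\top$ has norm at most $\opnorm{\Sigma_{yx}}\|\theta^*_x\|_2 = \opnorm{\Sigma_{yx}}$, while the operator norm of $\Sigma(\Sigma+\tau I)^{-1}$ is at most $1$ (it is a function of the PSD matrix $\Sigma$ with values in $[0,1)$), and $\|\theta^*\|_2 = 1$; Cauchy--Schwarz then gives $|\mathcal C^\Sigma(\lambda)| \le \opnorm{\Sigma(\Sigma+\tau I)^{-1}}\cdot \|P_y\Sigma\theta^*\|_2\cdot\|\theta^*\|_2 \le \opnorm{\Sigma_{yx}}$.

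\textbf{Second bound ($\evmax{\Sigma}^2/\tau$).} Here I bound $\opnorm{\Sigma(\Sigma+\tau I)^{-1}\Sigma} \le \opnorm{\Sigma}^2 \opnorm{(\Sigma+\tau I)^{-1}} \le \evmax{\Sigma}^2/\tau$, and use $\|P_y\theta^*\|_2 \le 1$, $\|\theta^*\|_2 = 1$, so $|\mathcal C^\Sigma(\lambda)| = |{\theta^*}^\top \Sigma(\Sigma+\tau I)^{-1}\Sigma P_y'\theta^*|$ — one has to be slightly careful that $P_y$ sits between the two $\Sigma$'s, so the clean estimate is $|\mathcal C^\Sigma(\lambda)| \le \opnorm{\Sigma} \cdot \opnorm{(\Sigma+\tau I)^{-1}} \cdot \opnorm{\Sigma} \cdot \|P_y\|_{\mathrm{op}}\cdot\|\theta^*\|_2^2 \le \evmax{\Sigma}^2/\tau$ since $\|P_y\|_{\mathrm{op}}=1$.

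\textbf{Third bound (the Schur-complement term).} This is the substantive one. The idea is to peel off $P_y$ and reduce everything to the $y$-block. First, $\mathcal C^\Sigma(\lambda) = (P_y\Sigma\theta^*)^\top (\Sigma+\tau I)^{-1}(\Sigma\theta^*)$; write $a := \Sigma_{xx}\theta^*_x$, $b := \Sigma_{yx}\theta^*_x$, so $\Sigma\theta^* = [a^\top,b^\top]^\top$ and $P_y\Sigma\theta^* = [\mathbf 0^\top, b^\top]^\top$. Using the block-inverse formula for $(\Sigma+\tau I)^{-1}$, the $(y,\cdot)$ row of $(\Sigma+\tau I)^{-1}$ involves the inverse Schur complement $(S_x^\Sigma(\tau))^{-1} := \big(\Sigma_{yy}+\tau I - \Sigma_{yx}(\Sigma_{xx}+\tau I)^{-1}\Sigma_{xy}\big)^{-1}$, which satisfies $\evmin{S_x^\Sigma(\tau)} \ge \evmin{S_x^\Sigma}$ (adding $\tau I$ only increases eigenvalues and subtracting the smaller quantity $\Sigma_{yx}(\Sigma_{xx}+\tau I)^{-1}\Sigma_{xy} \preceq \Sigma_{yx}\Sigma_{xx}^{-1}\Sigma_{xy}$ helps), hence $\opnorm{(S_x^\Sigma(\tau))^{-1}} \le 1/\evmin{S_x^\Sigma}$. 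After carrying out the block-matrix multiplication, $\mathcal C^\Sigma(\lambda)$ becomes $b^\top(S_x^\Sigma(\tau))^{-1}\big(b - \Sigma_{yx}(\Sigma_{xx}+\tau I)^{-1}a\big)$; one rewrites $b - \Sigma_{yx}(\Sigma_{xx}+\tau I)^{-1}a = \Sigma_{yx}\big(I - (\Sigma_{xx}+\tau I)^{-1}\Sigma_{xx}\big)\theta^*_x = \tau\,\Sigma_{yx}(\Sigma_{xx}+\tau I)^{-1}\theta^*_x$, which produces the factor $\tau$ in front. Then one estimates $\|b\|_2 = \|\Sigma_{yx}\theta^*_x\|_2$ and the norm of the second factor, extracting $\opnorm{\Sigma_{yx}}$ and $1/\sqrt{\evmin{\Sigma_{xx}}}$ from $\opnorm{(\Sigma_{xx}+\tau I)^{-1}} \le \opnorm{\Sigma_{xx}^{-1}} \le 1/\evmin{\Sigma_{xx}}$ (or the milder $1/(\evmin{\Sigma_{xx}})$ as written); the factor $\evmax{\Sigma_{yy}} - \evmin{S_x^\Sigma}$ should emerge as a bound on $\opnorm{\Sigma_{yx}\Sigma_{xx}^{-1}\Sigma_{xy}} = \opnorm{\Sigma_{yy} - S_x^\Sigma} \le \evmax{\Sigma_{yy}} - \evmin{S_x^\Sigma}$ (valid since $\Sigma_{yy} \succeq S_x^\Sigma$ makes $\Sigma_{yy}-S_x^\Sigma$ PSD), so that $\|\Sigma_{yx}\|_{\mathrm{op}} = \opnorm{\Sigma_{yx}\Sigma_{xx}^{-1/2}\cdot\Sigma_{xx}^{1/2}}$ contributes $\sqrt{\evmax{\Sigma_{yy}}-\evmin{S_x^\Sigma}}\cdot\sqrt{\evmax{\Sigma_{xx}}}$ — and finally $\|\theta^*_x\|_2 = 1$ together with $\|u\|_2 = \sqrt{\Var(g)-\sigma^2}$ for the remaining factor.

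\textbf{Main obstacle.} The routine parts are the first two bounds; the real work is bookkeeping the block-inverse algebra in the third bound so that exactly the combination $\sqrt{\Var(g)-\sigma^2}\,(\evmax{\Sigma_{yy}}-\evmin{S_x^\Sigma})/(\evmin{S_x^\Sigma}\sqrt{\evmin{\Sigma_{xx}}})$ drops out, rather than a looser expression. In particular one must (i) correctly identify that the ``$\tau$'' prefactor arises from $I - (\Sigma_{xx}+\tau I)^{-1}\Sigma_{xx} = \tau(\Sigma_{xx}+\tau I)^{-1}$, (ii) replace $S_x^\Sigma(\tau)$ by $S_x^\Sigma$ at the cost only of shrinking the relevant eigenvalue, and (iii) absorb the $\Sigma_{yx}$ norms into Schur-complement quantities via $\opnorm{\Sigma_{yy}-S_x^\Sigma} \le \evmax{\Sigma_{yy}} - \evmin{S_x^\Sigma}$. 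Matching constants precisely is the delicate step; everything else is Cauchy--Schwarz and operator-norm submultiplicativity.
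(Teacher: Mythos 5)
Your first two bounds are correct and essentially identical to the paper's, and your block-inverse identity
\begin{equation*}
\mathcal C^\Sigma(\lambda) \;=\; \tau\,{\theta^*_x}^\top \Sigma_{xy}\bigl(S_x^{\Sigma+\tau I}\bigr)^{-1}\Sigma_{yx}\bigl(\Sigma_{xx}+\tau I\bigr)^{-1}\theta^*_x ,
\qquad S_x^{\Sigma+\tau I}=\Sigma_{yy}+\tau I-\Sigma_{yx}(\Sigma_{xx}+\tau I)^{-1}\Sigma_{xy},
\end{equation*}
is also right (it is the transpose of the paper's Proposition \ref{prop:Cwithblocks}), as are the two key observations $S_x^{\Sigma+\tau I}\succeq S_x^\Sigma+\tau I$ and $\evmax{\Sigma_{yy}-S_x^\Sigma}\leq \evmax{\Sigma_{yy}}-\evmin{S_x^\Sigma}$. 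The genuine gap is in the final assembly of the third bound, which is exactly the step you flag as delicate: the bookkeeping you describe does not yield \eqref{eq:boundsC}. If you bound $\opnorm{\Sigma_{yx}}\leq \sqrt{\evmax{\Sigma_{xx}}}\sqrt{\evmax{\Sigma_{yy}}-\evmin{S_x^\Sigma}}$, $\opnorm{(\Sigma_{xx}+\tau I)^{-1}}\leq 1/\evmin{\Sigma_{xx}}$ and $\norm{\theta^*_x}_2=1$, you obtain $\tau\,\evmax{\Sigma_{xx}}\bigl(\evmax{\Sigma_{yy}}-\evmin{S_x^\Sigma}\bigr)/\bigl(\evmin{S_x^\Sigma}\,\evmin{\Sigma_{xx}}\bigr)$, which exceeds the claimed third term by a factor of at least $\sqrt{\evmax{\Sigma_{xx}}/\evmin{\Sigma_{xx}}}$ (since $\Var(g)-\sigma^2={\theta^*_x}^\top\Sigma_{xx}\theta^*_x\leq \evmax{\Sigma_{xx}}$), so it proves only a strictly weaker statement unless $\Sigma_{xx}$ is a multiple of the identity. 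Moreover, the factor $\sqrt{\Var(g)-\sigma^2}$ never actually appears in your estimate: $\theta^*_x$ occurs twice in the identity, and you cannot simultaneously charge $\norm{\theta^*_x}_2=1$ and $\norm{u}_2=\sqrt{\Var(g)-\sigma^2}$ "for the remaining factor"; also, extracting $1/\sqrt{\evmin{\Sigma_{xx}}}$ from $\opnorm{(\Sigma_{xx}+\tau I)^{-1}}\leq 1/\evmin{\Sigma_{xx}}$ is inconsistent as written.

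The missing idea is a weighting of \emph{both} occurrences of $\Sigma_{yx}$ by $\Sigma_{xx}^{-1/2}$, so that $\evmax{\Sigma_{xx}}$ never enters. Concretely, bound $\norm{\Sigma_{yx}\theta^*_x}_2\leq \opnorm{\Sigma_{yx}\Sigma_{xx}^{-1/2}}\,\norm{\Sigma_{xx}^{1/2}\theta^*_x}_2=\sqrt{\evmax{\Sigma_{yy}}-\evmin{S_x^\Sigma}}\,\sqrt{\Var(g)-\sigma^2}$ and $\norm{\Sigma_{yx}(\Sigma_{xx}+\tau I)^{-1}\theta^*_x}_2\leq \opnorm{\Sigma_{yx}\Sigma_{xx}^{-1/2}}\,\opnorm{\Sigma_{xx}^{1/2}(\Sigma_{xx}+\tau I)^{-1}}\leq \sqrt{\evmax{\Sigma_{yy}}-\evmin{S_x^\Sigma}}\,/\sqrt{\evmin{\Sigma_{xx}}}$, then use $\evmin{S_x^{\Sigma+\tau I}}\geq \evmin{S_x^\Sigma}$; this recovers the third term of \eqref{eq:boundsC} exactly. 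The paper implements the same idea slightly differently: it inserts $(\Sigma_{xx}+\tau I)^{\pm 1/2}$ around $\theta^*_x$, bounds the symmetric middle block $(\Sigma_{xx}+\tau I)^{-1/2}\Sigma_{xy}\bigl(S_x^{\Sigma+\tau I}\bigr)^{-1}\Sigma_{yx}(\Sigma_{xx}+\tau I)^{-1/2}$ as a whole by $\bigl(\evmax{\Sigma_{yy}}-\evmin{S_x^\Sigma}\bigr)/\bigl(\evmin{S_x^\Sigma}+\tau\bigr)$, and finally removes $\tau$ from the ratio $\sqrt{\Var(g)-\sigma^2+\tau}/\sqrt{\evmin{\Sigma_{xx}}+\tau}$ by monotonicity, using $\Var(g)-\sigma^2\geq \evmin{\Sigma_{xx}}$. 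With one of these two weightings spelled out, your argument goes through; without it, the step as sketched fails to match the stated constant.
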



We discuss the three upper bounds in \eqref{eq:boundsC} below.

\emph{(i)}: $|\mathcal C^\Sigma(\lambda)| \leq \opnorm{\Sigma_{yx}}$. The off-diagonal blocks $\Sigma_{yx} = \E[y x^\top]$ and $\Sigma_{xy} = \Sigma_{yx}^\top$ describe the correlation between $y$ and $x$. In the limit case $\opnorm{\Sigma_{yx}} = 0$, we have that $x$ and $y$ are uncorrelated and, therefore, $\mathcal C^\Sigma(\lambda) = 0$, as there is no spurious correlation that the model can learn.

\emph{(ii)}: $|\mathcal C^\Sigma(\lambda)| \leq \evmax{\Sigma}^2 / \tau(\lambda)$. From \eqref{eq:tau}, one 
obtains that $\tau (\lambda) \to \infty$ as $\lambda \to \infty$. Thus, 
the bound implies that $\mathcal C^\Sigma(\lambda)$ approaches 0 as $\lambda$ grows large. This captures the intuition that, when the regularization 
$\lambda$ is large, the minimization in \eqref{eq:hatthetalr} is biased towards solutions with small norm and, therefore, the output of the model is small, which drives to $0$ 
the spurious correlations as defined in \eqref{eq:spurcov}. The behavior is confirmed by 
Figure \ref{fig:lambda}:  
$|\mathcal C(\hat \theta_{\textup{LR}}(\lambda))|$ is decreasing for large values of $\lambda$ and it eventually vanishes; at the same time, large values of $\lambda$ make the output of the model small, which in turn increases the test loss $\mathcal L(\hat \theta_{\textup{LR}}(\lambda))$. 

\emph{(iii)}: The third bound in \eqref{eq:boundsC} can be rewritten as
\begin{equation}\label{eq:rewrite}
    \frac{|\mathcal C^\Sigma(\lambda)| \sqrt{\evmin{\Sigma_{xx}}} }{\sqrt{\Var(g) - \sigma^2}} \leq \tau(\lambda)  \frac{\evmax{\Sigma_{yy}} - \evmin{S_x^\Sigma}}{\evmin{S_x^\Sigma}},
\end{equation}
where we have isolated on the LHS the terms depending on the covariance of the core feature $x$ ($\sqrt{\evmin{\Sigma_{xx}}}$) and on the scaling of the labels ($\sqrt{\Var(g) - \sigma^2}$). We now discuss the dependence of the RHS of \eqref{eq:rewrite} w.r.t.\ \emph{(a)} $\tau(\lambda)$, \emph{(b)} $\evmin{S_x^\Sigma}$, and \emph{(c)} $\evmax{\Sigma_{yy}}$. As for \emph{(a)},
we note that $\mathcal C^\Sigma(\lambda)$ approaches 0 for small values of $\lambda$. In fact, 
the RHS of \eqref{eq:tau} is smaller or equal to $2d / n$; thus, if we consider $2d < n$, we also get $\tau \leq \lambda \left( 1 - 2d / n\right)^{-1}$, which implies $\tau(\lambda) \to 0$ as $\lambda \to 0$. This is in agreement with Proposition \ref{prop:lambda0}, which handles the case without regularization, and also with the numerical experiments of Figure \ref{fig:lambda}. As for \emph{(b)}, we note that the bound is decreasing with $\evmin{S_x^\Sigma}$. This is in agreement with the earlier discussion on how the spectrum of the Schur complement $S_x^\Sigma$ measures the degree of independence between the spurious feature $y$ and the core feature $x$. Finally, as for \emph{(c)}, we note that the bound is increasing with $\evmax{\Sigma_{yy}}$, which is connected below to the \emph{simplicity} of the spurious feature $y$. The increasing (decreasing) trend of $\mathcal C^\Sigma(\lambda)$ w.r.t.\ $\evmax{\Sigma_{yy}}$ ($\evmin{S_x^\Sigma}$) is clearly displayed in Figure \ref{fig:simplicitysynthetic} for Gaussian data. 


\begin{wrapfigure}{r}{0.6\textwidth}
  \begin{center}
    \includegraphics[width=0.6\textwidth]{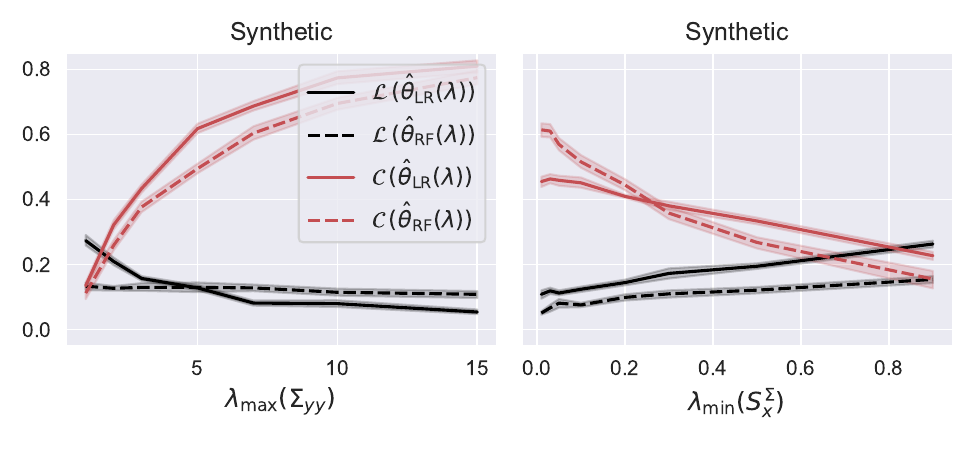}
  \end{center}
  \vspace{-0.1cm}
  \caption{Test loss $\mathcal L(\hat \theta_{\textup{LR}/\textup{RF}}(\lambda))$ (black) and spurious correlations $\mathcal C(\hat \theta_{\textup{LR}/\textup{RF}}(\lambda))$ (red) as a function of $\evmax{\Sigma_{yy}}$ (left) and $\evmin{S_x^\Sigma}$ (right) on a synthetic Gaussian dataset, for both linear regression and random features,  with $\lambda = 1$ (additional details in Appendix \ref{app:experiments}).}
  \label{fig:simplicitysynthetic}
  \vspace{-0.1cm}
\end{wrapfigure}

The connection between $\evmax{\Sigma_{yy}}$ and the \emph{simplicity bias} of ERM can be illustrated via our initial image recognition example. The (spurious) background feature is intuitively an easy pattern to learn from the model: 
the pixels corresponding to the spurious feature behave consistently 
across the training data. 
This in turn 
skews the spectrum of $\Sigma_{yy}$, which has few dominant directions with eigenvalues much larger than the others. 
Note that this interpretation is similar to the model-dependent definition of simplicity in \cite{morwani2023simplicity}. 
An 
empirical verification 
is provided in Figure \ref{fig:simplicitycifar}, where we consider the CIFAR-10 dataset, restricted to the ``boat'' and ``truck'' classes. 
Before training a regression model, we whiten up to some level the background feature (as defined in Figure \ref{fig:intro}) to make it harder to learn, see the right side of Figure \ref{fig:simplicitycifar}. Then, for different levels of whitening, we report $\mathcal C(\hat \theta_{\textup{LR}}(\lambda))$ as a function of $\evmax{\Sigma_{yy}}$. We normalize $\evmax{\Sigma_{yy}}$ by the trace  $\tr(\Sigma_{yy})$ to exclude the size of the pattern from our experiment\footnote{If $y$ has 0-mean, then $\E\|y\|_2^2= \tr(\Sigma_{yy})$, i.e., the trace captures the size of the pattern.}. The red curve shows an increasing trend analogous to that displayed in Figure \ref{fig:simplicitysynthetic} for Gaussian data: small values of $\evmax{\Sigma_{yy}}$ 
correspond to significant whitening and, hence, to small spurious correlations, as predicted by Proposition \ref{prop:boundsC}. 


\simone{We remark that our results concern the parameter $\hat \theta$ as defined in \eqref{eq:hattheta}, which can be interpreted as the convergence point of an optimization algorithm such as gradient descent. This perspective differs from the prior work of \cite{pezeshki2021gradient, qiu2024complexity}, which focus on how spurious correlations evolve during training.
On the other hand, in linear regression, solving the gradient flow equation $\diff \theta = - \nabla_\theta \mathcal L_\lambda(\theta) \diff t$, where $\mathcal L_\lambda(\theta)$ is defined as the argument of the $\arg \min$ in \eqref{eq:hattheta}, gives
\begin{equation}
    \theta(t) = \left(1 - \exp \left(-2 \left(X^\top X / n + \lambda I\right) t \right) \right) \hat \theta.
\end{equation}
Thus, the components of $\hat \theta$ aligned with the top eigen-spaces of $X^\top X$ converge earlier than the others. Hence, from a dynamical point of view, if $X^\top X \sim n \Sigma$, our results suggest that spurious features are learned faster the easier they are.}

\paragraph{Trade-off between $\mathcal L (\hat \theta_{\textup{LR}}(\lambda))$ and $\mathcal C (\hat \theta_{\textup{LR}}(\lambda))$.}
Figure \ref{fig:lambda} shows that there is an interval of values for the regularization ($\lambda \sim 10^{-1}$) where the test loss $\mathcal L (\hat \theta_{\textup{LR}}(\lambda))$ is decreasing in $\lambda$, while the spurious correlations $\mathcal C (\hat \theta_{\textup{LR}}(\lambda))$ are 
increasing. This evidence suggests a natural trade-off between these two quantities, mediated by $\lambda$.
To theoretically capture such trade-off, 
we first provide a non-asymptotic concentration bound for $\mathcal L(\hat \theta_{\textup{LR}}(\lambda))$. 
\vspace{0.5cm}

\begin{figure}[!t]
  \centering
  \begin{minipage}[t]{0.49\textwidth}
    \centering
    \includegraphics[width=\linewidth]{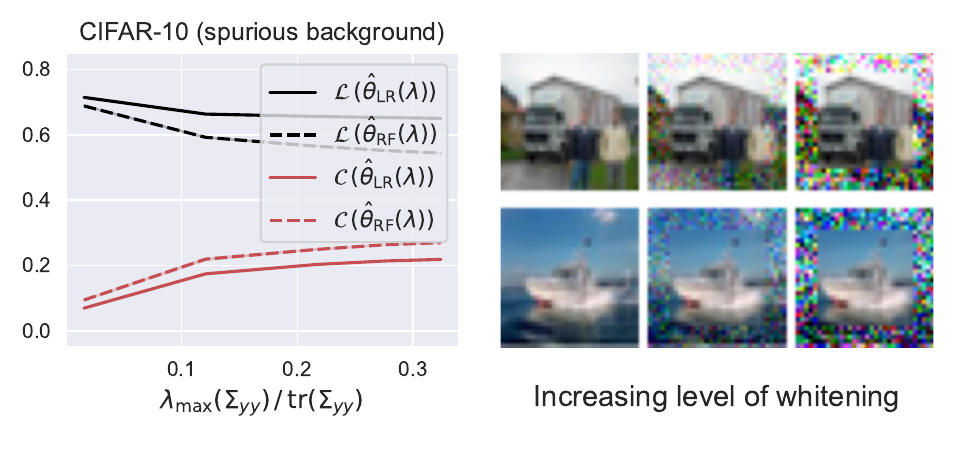}
    \captionof{figure}{Test loss $\mathcal L(\hat \theta_{\textup{LR}/\textup{RF}}(\lambda))$ (black) and spurious correlations $\mathcal C(\hat \theta_{\textup{LR}/\textup{RF}}(\lambda)$ (red) as a function of $\evmax{\Sigma_{yy}} / \tr(\Sigma_{yy})$ on a CIFAR-10 dataset for different levels of whitening (details on the whitening process in Appendix \ref{app:experiments}). We restrict to the classes ``boat'' and ``truck'' ($n = 10000$) and consider both linear regression and random features, with $\lambda = 1$.}
    \label{fig:simplicitycifar}
  \end{minipage}
  \hfill
  \begin{minipage}[t]{0.49\textwidth}
    \centering
    \includegraphics[width=\linewidth]{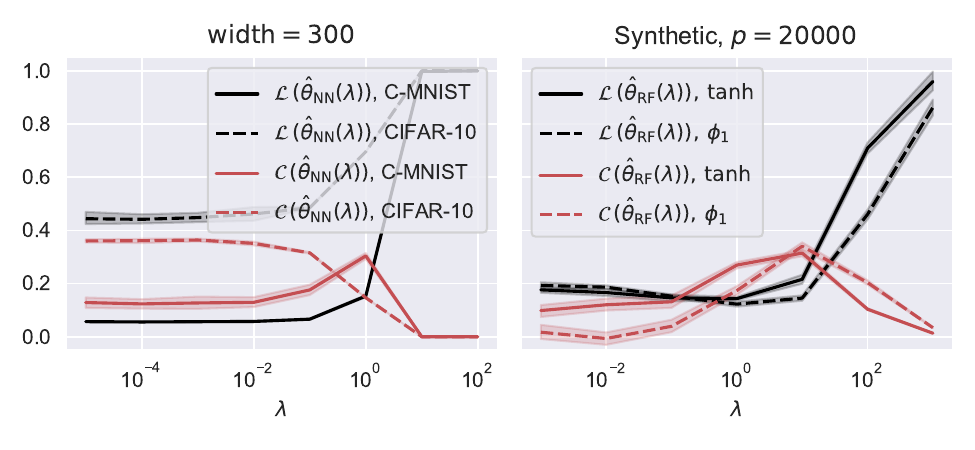}
    \captionof{figure}{Test loss $\mathcal L(\hat \theta_{\textup{NN}/\textup{RF}}(\lambda))$ (black) and spurious correlations $\mathcal C(\hat \theta_{\textup{NN}/\textup{RF}}(\lambda)$ (red) as a function of $\lambda$. \emph{Left:} 2-layer fully connected ReLU network, trained on the binary color(C)-MNIST and CIFAR-10 (boats and trucks). \emph{Right:} RF model with $\tanh$ and $\phi_1 = h_1 + 0.1 \, h_3$ activation. Implementation details 
  are in Appendix \ref{app:experiments}.}
  \label{fig:nn}
  \end{minipage}
\end{figure}

\begin{proposition}\label{lemma:L}
    Let Assumption \ref{ass:data} hold, $n = \Theta(d)$ and 
    $\mathcal L(\hat \theta_{\textup{LR}}(\lambda))$ be the in-distribution test loss of the model $f_{\textup{LR}}(\hat \theta_{\textup{LR}}(\lambda), \cdot )$ for $\lambda>0$. 
Set
    \begin{equation}\label{eq:LSigmatau}
        \mathcal L^\Sigma(\lambda) := \frac{\sigma^2 + \tau(\lambda)^2 \norm{ \left( \Sigma + \tau (\lambda)I\right)^{-1} \Sigma^{1/2}  \theta^* }_2^2}{1 - \frac{ \tr \left( \left( \Sigma + \tau (\lambda)I\right)^{-2} \Sigma^2 \right)}{n}},
    \end{equation}
    where $\tau(\lambda)$ is defined via $\eqref{eq:tau}$. Then, for every $t \in (0, 1/2)$,
    \begin{equation*}
    \P_{Z, \mathcal E} \left( \left| \mathcal L(\hat \theta_{\textup{LR}}(\lambda)) - \mathcal L^\Sigma(\lambda) \right| \geq t \right)  \leq C d \exp \left( -d t^4 / C \right),
    \end{equation*}
    where $C$ is an absolute constant.
\end{proposition}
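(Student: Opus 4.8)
The plan is to derive $\mathcal L(\hat\theta_{\textup{LR}}(\lambda))$ in closed form as a quadratic function of $\hat\theta_{\textup{LR}}(\lambda)$ and then invoke the same non-asymptotic machinery of \cite{han2023distribution} already used for Theorem \ref{thm:C}. Since $f_{\textup{LR}}(\theta,z)=z^\top\theta$ and the data follows the linear model $g = z^\top\theta^* + \epsilon$ with $\E[zz^\top]=\Sigma$ and $\epsilon$ independent of $z$, the quadratic test loss expands as
\begin{equation*}
\mathcal L(\hat\theta_{\textup{LR}}(\lambda)) = \E_{z,g}\left[(z^\top\hat\theta_{\textup{LR}}(\lambda) - g)^2\right] = \sigma^2 + (\hat\theta_{\textup{LR}}(\lambda) - \theta^*)^\top\Sigma(\hat\theta_{\textup{LR}}(\lambda) - \theta^*).
\end{equation*}
So the test loss is, up to the irreducible $\sigma^2$, a quadratic form in $\hat\theta_{\textup{LR}}(\lambda)-\theta^*$ with the fixed PSD matrix $\Sigma$. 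The first step is therefore to recognize that, after centering by $\theta^*$ and rescaling, this quadratic form falls into the class of low-dimensional functionals of $\hat\theta_{\textup{LR}}(\lambda)$ covered by the distributional characterization of \cite{han2023distribution} (the same result that underlies Theorem \ref{thm:C} and Proposition \ref{prop:lambda0}).

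Next I would apply that characterization to obtain concentration of $(\hat\theta_{\textup{LR}}(\lambda)-\theta^*)^\top\Sigma(\hat\theta_{\textup{LR}}(\lambda)-\theta^*)$ around its deterministic equivalent. Here the bias–variance split is the natural organizing principle: the deterministic equivalent of $\hat\theta_{\textup{LR}}(\lambda)$ is $(\Sigma+\tau(\lambda)I)^{-1}\Sigma\theta^*$ (with $\tau(\lambda)$ the self-consistent parameter from \eqref{eq:tau}), so the bias contribution to the loss should concentrate to $\tau(\lambda)^2\|(\Sigma+\tau(\lambda)I)^{-1}\Sigma^{1/2}\theta^*\|_2^2$ — this is exactly the $\theta^*$-dependent term in the numerator of $\mathcal L^\Sigma(\lambda)$. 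The variance contribution, coming from the noise term $(Z^\top Z + n\lambda I)^{-1}Z^\top\mathcal E$, produces the $\sigma^2$ term and, crucially, the overall denominator $1 - \tr((\Sigma+\tau I)^{-2}\Sigma^2)/n$: this inflation factor is the standard "degrees of freedom" correction that appears when the empirical resolvent $(Z^\top Z/n + \lambda I)^{-1}$ is replaced by its deterministic equivalent. I would either quote the corresponding formula directly from \cite{han2023distribution} (their main theorem should give exactly this deterministic equivalent for the prediction risk) or assemble it from the resolvent concentration plus a Gaussian concentration argument on the quadratic form, keeping track that the same $t^4/C$-rate in the exponent and $Cd$ prefactor propagate through, since the functional is Lipschitz in $\hat\theta_{\textup{LR}}(\lambda)$ on the high-probability event where $\|\hat\theta_{\textup{LR}}(\lambda)\|_2 = \mathcal O(1)$ (which follows from $\lambda>0$ and $\evmin{\Sigma}=\Omega(1)$, $n=\Theta(d)$).

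The main obstacle is bookkeeping rather than conceptual: I expect the denominator $1 - \tr((\Sigma+\tau I)^{-2}\Sigma^2)/n$ to require care because $\tr((\Sigma+\tau I)^{-2}\Sigma^2)/n$ can be close to $1$ in the interpolation-adjacent regime, so I must verify it stays bounded away from $1$ under $n=\Theta(d)$ and $\lambda>0$ (differentiating the fixed-point equation \eqref{eq:tau} shows $\tr((\Sigma+\tau I)^{-2}\Sigma^2)/n < 1$ whenever $\lambda>0$, since $d\tau/d\lambda > 0$ and a short computation relates this derivative to $1 - \tr((\Sigma+\tau I)^{-2}\Sigma^2)/n$). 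Establishing this strict bound is what makes $\mathcal L^\Sigma(\lambda)$ well-defined and lets the Lipschitz constant of the functional be controlled, which in turn gives the stated sub-exponential tail; the rest — expanding the quadratic, matching bias and variance terms, and tracking constants — is routine and parallels the proof of Theorem \ref{thm:C}.
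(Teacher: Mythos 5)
Your proposal is correct and follows essentially the same route as the paper: the paper sets up the Gaussian sequence model, solves the fixed point for $\gamma^2$ to identify $\mathcal L^\Sigma(\lambda)$ as the deterministic equivalent of the prediction risk, and then directly invokes Theorem 3.1 of \cite{han2023distribution} for Gaussian data — exactly your primary option of quoting their prediction-risk theorem. Only your fallback route (treating the quadratic form as a Lipschitz functional restricted to a bounded event) would need extra care, since the Lipschitz characterization used for Theorem \ref{thm:C} does not apply verbatim to a quadratic functional; the paper sidesteps this by citing the risk result directly.
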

In words, 
Proposition \ref{lemma:L} 
guarantees that $| \mathcal L(\hat \theta_{\textup{LR}}(\lambda)) - \mathcal L^\Sigma(\lambda) | = o(1)$ with high probability. Its proof is an adaptation of Theorem 3.1 in \cite{han2023distribution}, and the details are in Appendix \ref{app:lr}.

\vspace{0.2cm}

Armed with the non-asymptotic bounds of Theorem \ref{thm:C} and Proposition \ref{lemma:L}, we characterize the trade-off between $\mathcal L (\hat \theta_{\textup{LR}}(\lambda))$ and $\mathcal C (\hat \theta_{\textup{LR}}(\lambda))$ by studying the monotonicity of $\mathcal L^\Sigma(\lambda)$ and $\mathcal C^\Sigma(\lambda)$. 

\vspace{0.5cm}

\begin{proposition}\label{prop:incr}
    Let $\mathcal C^\Sigma(\lambda)$ and $\mathcal L^\Sigma(\lambda)$ be defined as in \eqref{eq:CSigmatau} and \eqref{eq:LSigmatau}. 
    Then, if $2d < n$, we have that $\mathcal L^\Sigma (\lambda)$ is monotonically decreasing in a right neighborhood of $\lambda = 0$, and there exists $\lambda_{\mathcal L}>0$ such that $\mathcal L^\Sigma(\lambda)$ is monotonically increasing for $\lambda \ge \lambda_{\mathcal L}$. 
    Furthermore, if $\Sigma_{xx} = I$, then $\mathcal C^\Sigma(\lambda)$ is non-negative and there exists $\lambda_{\mathcal C}$ such that $\mathcal C^\Sigma(\lambda)$ is monotonically increasing for $\lambda \leq \lambda_{\mathcal C}$.
    Finally, as long as
    \begin{equation}\label{eq:shaperatiobound}
    \frac{2d}{n} \leq \frac{\lambda_{\min}(\Sigma)}{4 } \min \left(1,  \frac{2\lambda_{\max}(\Sigma) / \sigma^2}{\left(\lambda_{\max}(\Sigma) / \lambda_{\min}(\Sigma)  + 1\right)^2} \right),
    \end{equation}
    we have that $\lambda_{\mathcal C} \ge \lambda_{\mathcal L}$.
\end{proposition}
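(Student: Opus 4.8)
The plan is to route everything through the effective regularization $\tau=\tau(\lambda)$ of \eqref{eq:tau}. Writing $h(\tau):=\tfrac1n\tr((\Sigma+\tau I)^{-1}\Sigma)$, equation \eqref{eq:tau} reads $\lambda=\tau\,(1-h(\tau))$; since $h$ is strictly decreasing with $h(0)=2d/n<1$, the map $\tau\mapsto\tau(1-h(\tau))$ is a strictly increasing $C^1$ bijection of $[0,\infty)$ onto itself with derivative $\ge 1-2d/n>0$, so $\tau(\cdot)$ is well defined, strictly increasing, $C^1$, with $\tau(0^+)=0$. Hence any monotonicity statement for $\mathcal L^\Sigma$ or $\mathcal C^\Sigma$ in the variable $\tau$ transfers verbatim to the variable $\lambda$, and it suffices to work with $\tau$.

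For the loss, write $\mathcal L^\Sigma=N(\tau)/D(\tau)$ with $N(\tau)=\sigma^2+\tau^2\|(\Sigma+\tau I)^{-1}\Sigma^{1/2}\theta^*\|_2^2$ and $D(\tau)=1-\tfrac1n\tr((\Sigma+\tau I)^{-2}\Sigma^2)$, so the sign of $\tfrac{d}{d\tau}\mathcal L^\Sigma$ equals that of $N'D-ND'$. Diagonalizing $\Sigma=\sum_i s_iv_iv_i^\top$ and setting $a_i=\langle v_i,\theta^*\rangle$ (so $\sum_i a_i^2=1$), one gets $N'(\tau)=\sum_i\tfrac{2\tau s_i^2a_i^2}{(s_i+\tau)^3}\ge0$ and $D'(\tau)=\tfrac2n\sum_i\tfrac{s_i^2}{(s_i+\tau)^3}>0$. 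At $\tau=0$, $N'(0)=0$, $D'(0)>0$, $N(0)=\sigma^2>0$, hence $N'D-ND'=-\sigma^2D'(0)<0$; by continuity $\mathcal L^\Sigma$ is strictly decreasing for small $\tau>0$, hence for small $\lambda>0$ (here we use $2d<n$, which makes $D(0)=1-2d/n>0$). This proves the first claim. For large $\tau$, $N'(\tau)=\tfrac{2}{\tau^2}{\theta^*}^\top\Sigma^2\theta^*\,(1+o(1))$, $D'(\tau)=\tfrac{2}{n\tau^3}\tr(\Sigma^2)(1+o(1))$, and $N,D=\Theta(1)$, so $N'D-ND'=\tfrac{2}{\tau^2}{\theta^*}^\top\Sigma^2\theta^*\,(1+o(1))>0$ for all large $\tau$; since $N'D-ND'$ is a rational function of $\tau$ it changes sign finitely often, so there is $\tau_{\mathcal L}$ with $N'D-ND'>0$ on $[\tau_{\mathcal L},\infty)$, and taking $\lambda_{\mathcal L}$ with $\tau(\lambda_{\mathcal L})=\tau_{\mathcal L}$ proves the second claim.

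For the correlation, assume $\Sigma_{xx}=I$, write $\theta^*=[\,{\theta^*_x}^\top,\mathbf 0^\top\,]^\top$ and $b:=\Sigma_{yx}\theta^*_x$. Block-inverting $\Sigma+\tau I$ (whose top-left block is $(1+\tau)I$) and using $\Sigma_{yy}=S_x^\Sigma+\Sigma_{yx}\Sigma_{xy}$ gives the closed form $\mathcal C^\Sigma(\lambda)=\tfrac{\tau}{1+\tau}\,b^\top T(\tau)^{-1}b$ with $T(\tau):=S_x^\Sigma+\tfrac{\tau}{1+\tau}\Sigma_{yx}\Sigma_{xy}+\tau I$. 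Since $S_x^\Sigma\succ0$ (Schur complement of the positive definite $\Sigma$), $\Sigma_{yx}\Sigma_{xy}\succeq0$ and $\tau>0$, we have $T(\tau)\succ0$ and hence $\mathcal C^\Sigma(\lambda)\ge0$. Differentiating, $\tfrac{d}{d\tau}\mathcal C^\Sigma=\tfrac{1}{(1+\tau)^2}b^\top T^{-1}b-\tfrac{\tau}{1+\tau}b^\top T^{-1}T'(\tau)T^{-1}b$ with $T'(\tau)=\tfrac{1}{(1+\tau)^2}\Sigma_{yx}\Sigma_{xy}+I\succ0$; at $\tau=0$ the second term vanishes and $\tfrac{d}{d\tau}\mathcal C^\Sigma|_{\tau=0}=b^\top(S_x^\Sigma)^{-1}b\ge0$, strictly positive unless $b=0$ (in which case $\mathcal C^\Sigma\equiv0$ and any $\lambda_{\mathcal C}$ works). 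By continuity $\tfrac{d}{d\tau}\mathcal C^\Sigma>0$ on $[0,\tau_{\mathcal C})$ for some $\tau_{\mathcal C}>0$, and $\lambda_{\mathcal C}$ with $\tau(\lambda_{\mathcal C})=\tau_{\mathcal C}$ proves the third claim.

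Finally, since $\tau(\cdot)$ is increasing, $\lambda_{\mathcal C}\ge\lambda_{\mathcal L}$ follows once we exhibit explicit $\bar\tau_{\mathcal L}\le\bar\tau_{\mathcal C}$ with $\mathcal L^\Sigma$ increasing on $[\bar\tau_{\mathcal L},\infty)$ and $\mathcal C^\Sigma$ increasing on $(0,\bar\tau_{\mathcal C}]$. On the correlation side, bound $b^\top T^{-1}T'T^{-1}b\le\opnorm{T'(\tau)}\,\opnorm{T(\tau)^{-1}}\,b^\top T^{-1}b$, use $\lambda_{\min}(T(\tau))\ge\lambda_{\min}(S_x^\Sigma)+\tau\ge\lambda_{\min}(\Sigma)+\tau$ (from $(S_x^\Sigma)^{-1}=(\Sigma^{-1})_{yy}$, which yields $\lambda_{\min}(S_x^\Sigma)\ge\lambda_{\min}(\Sigma)$) and $\opnorm{\Sigma_{yx}\Sigma_{xy}}=\opnorm{\Sigma_{xy}}^2\le\lambda_{\max}(\Sigma_{yy})\le\lambda_{\max}(\Sigma)$ (the first step since $\Sigma_{yx}\Sigma_{xy}\preceq\Sigma_{yy}$): this turns $\tfrac{d}{d\tau}\mathcal C^\Sigma\ge0$ into a scalar inequality valid for $\tau$ up to a $\bar\tau_{\mathcal C}$ of order $\lambda_{\min}(\Sigma)/\lambda_{\max}(\Sigma)$, refined by the factor $\big(\lambda_{\max}(\Sigma)/\lambda_{\min}(\Sigma)+1\big)^{-2}$ occurring in \eqref{eq:shaperatiobound}. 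On the loss side, the point is that $q(\tau):=N(\tau)-\sigma^2\le\tau^2/\lambda_{\min}(\Sigma)$ is small for small $\tau$, while $D'(\tau)\le\tfrac{2(1-D(\tau))}{\lambda_{\min}(\Sigma)+\tau}\le\tfrac{4d}{n(\lambda_{\min}(\Sigma)+\tau)}$, $D(\tau)\ge1-2d/n$, and $N'(\tau)$ is bounded below using $\sum_i a_i^2=1$ together with $\Sigma_{xx}=I$ (which forces ${\theta^*}^\top\Sigma^{-1}\theta^*\ge1$); combining these, the inequality $N'D-ND'\ge0$ can be shown to hold for \emph{all} $\tau\ge\bar\tau_{\mathcal L}$ with $\bar\tau_{\mathcal L}$ of order $\tfrac dn\cdot\tfrac{\sigma^2\lambda_{\max}(\Sigma)}{\lambda_{\min}(\Sigma)}$ up to absolute constants and condition-number factors, so that $\mathcal L^\Sigma$ is unimodal in $\tau$ with minimizer at this small $\bar\tau_{\mathcal L}$. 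The resulting requirement $\bar\tau_{\mathcal L}\le\bar\tau_{\mathcal C}$ is precisely a rearrangement of \eqref{eq:shaperatiobound}. The main obstacle is exactly this last step: making the two thresholds sharp enough in $\lambda_{\min}(\Sigma),\lambda_{\max}(\Sigma),\sigma^2,d/n$ — in particular controlling $N'(\tau)$ and $D'(\tau)$ uniformly over the whole half-line $\tau\ge\bar\tau_{\mathcal L}$, not merely for $\tau\to0$ and $\tau\to\infty$, so that $\mathcal L^\Sigma$ is genuinely increasing starting from a $\bar\tau_{\mathcal L}$ that still lies below the correlation threshold $\bar\tau_{\mathcal C}$ — is where the precise form of the ratio bound \eqref{eq:shaperatiobound} is needed.
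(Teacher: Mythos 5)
Your treatment of the first three claims is correct and essentially mirrors the paper's argument: you pass to the variable $\tau$ via the monotone bijection defined by \eqref{eq:tau}, analyze the quotient $N/D$ at $\tau=0$ and $\tau\to\infty$ for $\mathcal L^\Sigma$, and obtain the closed form $\mathcal C^\Sigma=\frac{\tau}{1+\tau}b^\top T(\tau)^{-1}b$ (your $T(\tau)$ coincides with the paper's $A(\tau)$), from which non-negativity and the existence of some $\lambda_{\mathcal C}$ follow.

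The genuine gap is in the last claim, and you acknowledge it yourself: under exactly the condition \eqref{eq:shaperatiobound} you never actually produce compatible thresholds with $\bar\tau_{\mathcal L}\le\bar\tau_{\mathcal C}$. Two things go wrong. First, your correlation threshold is too lossy: bounding $b^\top T^{-1}T'T^{-1}b\le\opnorm{T'}\opnorm{T^{-1}}\,b^\top T^{-1}b$ only certifies monotonicity of $\mathcal C^\Sigma$ up to a $\bar\tau_{\mathcal C}$ of order $\evmin{\Sigma}/\evmax{\Sigma}$ (or smaller), whereas the comparison needs a threshold at least as large as the point where $\mathcal L^\Sigma$ provably starts increasing. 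The paper avoids this by an exact computation: differentiating the closed form gives
\begin{equation*}
\frac{\diff\,\mathcal C^\Sigma}{\diff\tau}=\frac{1}{(1+\tau)^2}\,{\theta_x^*}^\top\Sigma_{xy}\,A(\tau)^{-1}\left(S_x^\Sigma-\tau^2 I\right)A(\tau)^{-1}\Sigma_{yx}\theta_x^*,
\end{equation*}
so $\mathcal C^\Sigma$ is increasing for all $\tau\le\tau_{\mathcal C}:=\sqrt{\evmin{S_x^\Sigma}}$ — no operator-norm slack — and then $\tau_{\mathcal C}\ge\evmin{S_x^\Sigma}\ge\evmin{\Sigma}$, using $\evmin{S_x^\Sigma}\le\tr(\Sigma_{yy})/d=1$ (from $\tr(\Sigma)=2d$ and $\Sigma_{xx}=I$) together with the Schur-complement eigenvalue inequality (Lemma \ref{lemma:schurevmin}). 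Second, on the loss side you only assert, without proof, that $\mathcal L^\Sigma$ is unimodal with minimizer at a small $\bar\tau_{\mathcal L}\sim\frac dn\cdot\frac{\sigma^2\evmax{\Sigma}}{\evmin{\Sigma}}$; establishing that $N'D-ND'\ge0$ holds uniformly on a half-line requires the explicit bounds the paper derives (lower bound on $N'$ via $\tau^2\|(\Sigma+\tau I)^{-1}\Sigma^{1/2}\theta^*\|_2^2\ge\evmax{\Sigma}/(\evmax{\Sigma}/\evmin{\Sigma}+1)^2$ for $\tau\ge\evmin{\Sigma}$, $D\ge1-2d/n$, and $D'\le 4d/(n\tau(\evmin{\Sigma}+\tau))$), and it is precisely these bounds, combined with the two parts of \eqref{eq:shaperatiobound}, that certify increase of $\mathcal L^\Sigma$ for all $\tau\ge\tau_{\mathcal L}:=\evmin{\Sigma}$. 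Your sketch neither fixes these thresholds nor verifies that the inequality you would need rearranges into \eqref{eq:shaperatiobound} (your heuristic thresholds involve different condition-number factors), so the final statement $\lambda_{\mathcal C}\ge\lambda_{\mathcal L}$ remains unproven as written.
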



In words, Proposition \ref{prop:incr} shows that $\mathcal C^\Sigma(\lambda)$ grows with $\lambda$ at least until the regularization equals a value $\lambda_{\mathcal C}$. For example, in Figure \ref{fig:lambda}, $\lambda_{\mathcal C} \sim 1$ for a Gaussian data and $\lambda_{\mathcal C} \sim 10$ for Color-MNIST. Furthermore, in this interval, $\mathcal L^\Sigma(\lambda)$ is initially decreasing and then increasing as $\lambda \ge \lambda_{\mathcal L}$. These trends in turn imply that the optimal value $\lambda^*_{\mathcal L}$ that minimizes the test loss is s.t.\ $\lambda^*_{\mathcal L} \in (0, \lambda_{\mathcal C}]$ -- an interval where the spurious correlations are strictly positive and increasing. 

The proof of Proposition \ref{prop:incr} (whose details are in Appendix \ref{app:lr}) relies on the monotonicity of $\tau(\lambda)$ in $\lambda$, and the last statement follows from showing that $\tau(\lambda_{\mathcal C}) \geq \evmin{S^\Sigma_{x}} \geq \evmin{\Sigma} \geq \tau(\lambda_{\mathcal L})$. The upper bound on $2d / n$ in \eqref{eq:shaperatiobound} is required to prove that $\evmin{\Sigma} \geq \tau(\lambda_{\mathcal L})$ and, due to Assumption \ref{ass:data}, it is implied by taking $n = \omega(d)$. We note that the latter scaling holds in standard 
datasets, e.g., MNIST ($n = 6 \cdot 10^4$, $2d \approx 2 \cdot 10^3$ when considering the 3 color channels) and CIFAR-10 ($n = 5 \cdot 10^4$, $2d \approx 3 \cdot 10^3$).

We conclude the section by noting that learning spurious correlations can be beneficial to minimize the (in-distribution) test loss. In fact, the spurious features in $y$ are effectively correlated with the labels, due to their correlation with the core feature $x$, and hence they can be helpful at prediction time. This phenomenon is numerically supported by Figures \ref{fig:simplicitysynthetic} and \ref{fig:simplicitycifar}, where for a fixed value of $\lambda$, easier spurious features (or higher correlations) generate both higher values of $\mathcal C (\hat \theta_{\textup{LR}}(\lambda))$ and lower values of $\mathcal L (\hat \theta_{\textup{LR}}(\lambda))$. In words, while a blue background cannot strictly predict the label ``boat'', it is a useful feature in prediction as long as the boats in the test data tend to have a blue background. 

The same conclusion does not hold for the out-of-distribution test loss, where the features $x$ and $y$ are sampled independently: the lower bound in \eqref{eq:oodlossbody} increases with  $\mathcal C$. Figure \ref{fig:out} in Appendix \ref{app:ood} provides an additional  numerical validation of this statement.

\section{Role of Over-parameterization}\label{sec:rf}


Our analysis has so far focused on 
linear regression, highlighting the role of data covariance and regularization. However, moving to complex predictive models, such as neural networks, may lead to differences in the degree to which spurious correlations are learned. As an example, in the left panel of Figure \ref{fig:nn}, we 
train an over-parameterized two-layer neural network on the binary Color-MNIST and CIFAR-10 datasets, 
for different values of the regularizer $\lambda$.
While for high values of $\lambda$ the results are qualitatively similar to the ones in Figure \ref{fig:lambda}, a striking difference 
is that spurious correlations remain significant even when there is little to no regularization (i.e., $\lambda \approx 0$), in sharp contrast with Proposition \ref{prop:lambda0}. We also note that the phenomenon is in line with previous empirical work \cite{sagawa2020a}. 


We bridge the gap between linear regression 
and over-parameterized models 
by focusing on \emph{random features}: 
\begin{equation}\label{eq:rf}
    f_{\textup{RF}}(\theta, z) = \phi(V z)^\top \theta,
\end{equation}
where $V$ is a $p \times 2d$ matrix s.t.\ $V_{i,j} \distas{}_{\rm i.i.d.}\mathcal{N}(0, 1 / (2d))$, and $\phi$ is an activation 
applied component-wise. The number of parameters of this model is $p$, as $V$ is a fixed random matrix and $\theta\in \mathbb R^p$ contains trainable parameters. The scaling of input data ($\tr(\Sigma) = 2d$) and the variance of the entries of $V$ guarantee that the pre-activations of the model (\emph{i.e.}, the entries of the vector $Vz \in \R^p$) are of constant order. 

We consider the ERM in \eqref{eq:hattheta} with a quadratic loss
\begin{equation}\label{eq:argminrf}
    \hat \theta_{\textup{RF}}(\lambda) =\arg \min_\theta \left( \frac{1}{n} \norm{\Phi \theta - G}_2^2 + \lambda \norm{\theta}_2^2 \right),
\end{equation}
where we set $\Phi := [\phi(V z_1), \ldots, \phi(V z_n)]^\top \in \R^{n \times p}$. When $\lambda = 0$, if $\Phi \Phi^\top$ is invertible, the minimization above does not necessarily have a unique solution. In that case, we set $\hat \theta_{\textup{RF}}(0)$ to be the solution obtained via gradient descent with 0 initialization, which corresponds to the min-norm interpolator (see equation (33) in \cite{bartlett2021deep}). Then\footnote{$\Phi \Phi^\top$ is proved to be invertible with high probability in Lemma \ref{lemma:conckernel}.}, we can write, for $\lambda \geq 0$,
\begin{equation}\label{eq:hatthetarf}
    \hat \theta_{\textup{RF}}(\lambda) = \Phi^\top \left( \Phi \Phi^\top + n \lambda I \right)^{-1} G.
\end{equation}

\begin{assumption}[Activation function]\label{ass:activation}
    The activation $\phi: \R \to \R$ is a non-linear, odd, Lipschitz function, such that its first Hermite coefficient $\mu_1 \neq 0$.
\end{assumption}
\vspace{-0.2cm}
This choice is motivated by theoretical convenience and is similar to the one considered in \cite{hu23universality}. We believe that our result can be extended to a more general setting, as the ones in \cite{mei2022generalization, mmm2022}, with a more involved analysis. We refer to \cite{booleananalysis} for background on Hermite coefficients. 

\begin{assumption}[Over-parameterization]\label{ass:overparam}
We let $p$ grow s.t.\ $p = \omega \left(n \log^4 n \right)$ and $\log p = \Theta( \log n)$.
\end{assumption}
\vspace{-0.2cm}
This requires the width of the model (and, hence, its number of parameters) to grow faster (by at least a poly-log factor) than the number of training samples.

Finally, 
our requirements on the data are less restrictive than those coming from Assumption \ref{ass:data}. 
\vspace{-0.1cm}
\begin{assumption}[Data distribution, less restrictive]\label{ass:data2}
    $\{z_i\}_{i=1}^n$ are $n$ i.i.d.\ samples from a mean-0, Lipschitz concentrated distribution $\mathcal P_{XY}$, with covariance $\Sigma$ s.t.\ $\tr(\Sigma) = 2d$. Furthermore, 
    the labels $g_i$ are i.i.d. sub-Gaussian random variables.
\end{assumption}

Note 
that the labels $g_i$ are not required to follow a linear model $g_i = z_i^\top \theta^* + \epsilon_i$. 
The Lipschitz concentration property (see Appendix \ref{app:notation} for details) corresponds to data having well-behaved tails, it includes the distributions considered in Assumption \ref{ass:data}, as well as the uniform distribution on the sphere or the hypercube \cite{vershynin2018high}, and it is a common requirement in the related literature \cite{tightbounds, bubeck2021a, bombari2022memorization}.

\begin{theorem}\label{thm:rf}
    Let Assumptions \ref{ass:activation}, \ref{ass:overparam}, and \ref{ass:data2} hold, $n = \Theta(d)$, and $z \in \R^{2d}$ be sampled from a distribution satisfying Assumption \ref{ass:data2}, not necessarily with the same covariance as $\mathcal P_{XY}$, independent from everything else. Let $f_{\textup{RF}}(\hat \theta_{\textup{RF}}(\lambda), z)$ be the RF model defined in \eqref{eq:rf} with $\hat \theta_{\textup{RF}}(\lambda)$ given by \eqref{eq:hatthetarf}, and $f_{\textup{LR}}(\hat \theta_{\textup{LR}}(\tilde \lambda), z)$ be the linear regression model defined in \eqref{eq:lrmodel} with $\hat \theta_{\textup{LR}}(\tilde \lambda)$ given by \eqref{eq:hatthetalambda}.
    Then, for $\lambda \geq 0$, 
    \begin{equation}\label{eq:equivalence}
    \begin{aligned}
        &\left| f_{\textup{RF}}(\hat \theta_{\textup{RF}}(\lambda), z) - f_{\textup{LR}}(\hat \theta_{\textup{LR}}(\tilde \lambda), z)\right| \\
        &\hspace{4em}= \bigO{\frac{d^{1/4} \log d}{p^{1/4}} + \frac{\log^{3/2} d}{d^{1/8}}} = o(1),
    \end{aligned}
    \end{equation}
    with probability at least $1 - C \sqrt d \log^2 d / \sqrt p - C \log^3 d / d^{1/4}$, where the \emph{effective regularization} $\tilde \lambda$ is given by
    \begin{equation}\label{eq:tildelambda}
        \tilde \lambda = \frac{2{\tilde \mu}^2 d}{\mu_1^2 n} + \frac{2d}{\mu_1^2 p} \lambda,
    \end{equation}
    and ${\tilde \mu}^2 =  \sum_{k \geq 2} \mu_k^2$, with $\mu_k$ denoting the $k$-th Hermite coefficient of $\phi$.
\end{theorem}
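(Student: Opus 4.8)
The plan is to establish the equivalence in two stages: first linearize the random features map around its "linear part", then match the resulting linear predictor to ridge regression with the claimed effective regularization. The starting point is the Hermite expansion of the (odd, Lipschitz) activation: $\phi(t) = \mu_1 t + \sum_{k\geq 3}\mu_k h_k(t)$ (only odd $k$ survive since $\phi$ is odd, so there is no bias term). Applying this componentwise to $Vz_i$ and using that the rows of $V$ are approximately unit-norm Gaussians, one gets $\phi(Vz_i) = \mu_1 Vz_i + \xi_i$, where $\xi_i$ collects the higher-order Hermite contributions. The key structural fact — already exploited in works like \cite{hu23universality} — is that $\xi_i$ behaves like an isotropic noise vector of variance $\approx \tilde\mu^2 \|z_i\|_2^2/(2d) \approx \tilde\mu^2$ per coordinate, and is (to leading order) uncorrelated across the $p$ hidden units and across samples. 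Concretely, I would prove concentration for the kernel matrix: $\Phi\Phi^\top / p = \frac{\mu_1^2}{2d} Z Z^\top + \tilde\mu^2 I + E$, with $\|E\|_{\op}$ controlled via matrix concentration (this is presumably the content of the cited Lemma \ref{lemma:conckernel}), and the over-parameterization Assumption \ref{ass:overparam} ensures the error terms are $o(1)$ after the appropriate normalization.

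Next I would push this decomposition through the predictor formula \eqref{eq:hatthetarf}. Write
\[
f_{\textup{RF}}(z,\hat\theta_{\textup{RF}}(\lambda)) = \phi(Vz)^\top \Phi^\top \left(\Phi\Phi^\top + n\lambda I\right)^{-1} G.
\]
Substituting $\phi(Vz) \approx \mu_1 Vz + \xi$ and $\Phi \approx \mu_1 Z V^\top /\sqrt{2d}\cdot\sqrt{2d} + \Xi$ (matching scalings carefully), the cross terms $\phi(Vz)^\top\Phi^\top$ reduce, after using $V^\top V \approx \frac{p}{2d} I$ and the approximate orthogonality of the $\xi$'s, to
\[
\mu_1^2 \tfrac{p}{2d} z^\top Z^\top + (\text{vanishing}),
\]
while $\Phi\Phi^\top + n\lambda I \approx \mu_1^2\tfrac{p}{2d}\big(ZZ^\top + \tfrac{2d}{\mu_1^2}(\tilde\mu^2 + \tfrac{n\lambda}{p}) I\big)$. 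The scalar prefactors $\mu_1^2 p/(2d)$ cancel between numerator and denominator, leaving exactly $z^\top Z^\top (ZZ^\top + n\tilde\lambda I)^{-1} G$ with $\tilde\lambda = \frac{2\tilde\mu^2 d}{\mu_1^2 n} + \frac{2d}{\mu_1^2 p}\lambda$, which — via the push-through identity $Z^\top(ZZ^\top + n\tilde\lambda I)^{-1} = (Z^\top Z + n\tilde\lambda I)^{-1} Z^\top$ — is precisely $z^\top \hat\theta_{\textup{LR}}(\tilde\lambda)$ from \eqref{eq:hatthetalambda}. Throughout I would carry the error terms explicitly and show they sum to the stated $\bigO{d^{1/4}\log d / p^{1/4} + \log^{3/2} d / d^{1/8}}$ bound, the two pieces coming respectively from the Monte-Carlo error in approximating $V^\top V$ by $\frac{p}{2d}I$ (hence $p^{-1/4}$) and from the higher-order Hermite fluctuations and the anisotropy of $\xi$ (hence the $d^{-1/8}$, which is the weaker rate).

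The main obstacle I anticipate is controlling the nonlinear noise term $\Xi = \Phi - \mu_1 Z V^\top$ with enough precision: it is not independent of $Z$ (it is a deterministic function of $ZV^\top$), its coordinates are only approximately centered and isotropic, and it appears both in the resolvent $(\Phi\Phi^\top + n\lambda I)^{-1}$ (requiring a perturbation/resolvent-expansion argument, and a lower bound $\evmin{\Phi\Phi^\top}\gtrsim p$ to make the inverse well-conditioned) and multiplied against $\phi(Vz)$ for the fresh test point $z$. Establishing that $\|\Xi\|_{\op} = \bigO{\sqrt p}$ and, more delicately, that the cross-terms $\mu_1 Z V^\top \Xi^\top$ and $\Xi\Xi^\top - \tilde\mu^2 p I$ have operator norm $o(p)$ requires the Lipschitz-concentration hypothesis (Assumption \ref{ass:data2}) combined with Hermite orthogonality bounds, and this is where most of the technical work lives; the over-parameterization $p = \omega(n\log^4 n)$ is exactly what is needed to beat the accumulated error in the resolvent expansion. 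The fact that the test distribution need not share the covariance of $\mathcal P_{XY}$ causes no extra difficulty, since $z$ enters only through $\phi(Vz)$ and its concentration is governed solely by Assumption \ref{ass:data2} applied to the test distribution.
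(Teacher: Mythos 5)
Your plan follows essentially the same route as the paper's proof: decompose $\Phi = \mu_1 Z V^\top + \tilde\Phi$ via the Hermite expansion, establish concentration of $\Phi\Phi^\top$ around $p\left(\mu_1^2 ZZ^\top/(2d) + \tilde\mu^2 I\right)$ together with $\evmin{\Phi\Phi^\top}=\Omega(p)$, control the nonlinear residuals (the paper uses matrix Bernstein for $\tilde\Phi V$ and a second-moment-plus-Markov bound for the test-point term $\tilde\phi(Vz)^\top\Phi\left(\Phi\Phi^\top+n\lambda I\right)^{-1}G$), and conclude with a resolvent perturbation (Woodbury) and the push-through identity to recover $f_{\textup{LR}}(z,\hat\theta_{\textup{LR}}(\tilde\lambda))$. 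The one inaccuracy is your attribution of the final rates: both the $d^{1/4}\log d/p^{1/4}$ and the $\log^{3/2}d/d^{1/8}$ terms come from the Markov-inequality step on the test point's nonlinear component (whose conditional second moment is $\bigO{d\log^4 d/p + \log^3 d/\sqrt d}$), not from the concentration of $V^\top V$, which only contributes the smaller $\sqrt d\log d/\sqrt p$ error.
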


In words, Theorem \ref{thm:rf} shows that the over-parameterized RF model, when evaluated on a new test sample (not necessarily from the same distribution as the input data), is asymptotically equivalent to linear regression with regularization $\tilde \lambda$ given by \eqref{eq:tildelambda}. In particular, even in the ridgeless case ($\lambda = 0$), the RF model is equivalent to linear regression with strictly positive regularization. Thus, we expect the presence of spurious correlations, just like in Figure \ref{fig:nn}, since $\mathcal C (\hat \theta_{\textup{RF}}(0))$ approaches $\mathcal C^\Sigma(\tilde \lambda)$ with $\tilde\lambda>0$ . Notably, the effective regularization $\tilde \lambda$ depends on the activation $\phi$ via its Hermite coefficients, and it increases with the ratio $\tilde \mu^2 / \mu_1^2$.

We point out some differences between Theorem \ref{thm:rf} and earlier work on the equivalence of random features with regularized linear regression \cite{goldt2020modeling,goldt2022gaussian,hu23universality,montanari2022universality}. First, as mentioned in Section \ref{sec:rel}, existing results prove equivalence of training and test loss, which does not imply either the equivalence of the covariance in \eqref{eq:spurcov} nor the point-wise guarantee of \eqref{eq:equivalence}. Second, existing results focus on the regime where $p$ and $n$ are proportional, while Assumption \ref{ass:overparam} requires $p = \omega \left(n \log^4 n \right)$. This reflects on the third difference which is in the proof technique: existing results use Lindeberg method, while our strategy (summarized below) relies on concentration tools. 

\paragraph{Proof sketch:} 
The proof builds on 3 core steps.

\emph{Step 1:} We show that, with high probability,
$$\opnorm{\frac{\Phi \Phi^\top}{p} - \mu_1^2 \frac{ZZ^\top}{2d} - {\tilde \mu}^2 I} 
= o \left(\frac{\evmin{\Phi \Phi^\top}}{p}\right)
,$$
which is a consequence of the concentration of $\Phi \Phi^\top$ to its expectation with respect to $V$, 
then expressed in terms of the Hermite coefficients of $\phi$ (Lemmas \ref{lemma:long} and \ref{lemma:conckernel}).

\emph{Step 2:} In Lemma \ref{lemma:Eopnormsmall} we upper bound the term $\|\E_{z} [ \tilde \phi( V z) \tilde  \phi( V z)^\top ] \|_{\textup{op}}$ (where we set $\tilde \phi(\cdot) := \phi(\cdot) - \mu_1 (\cdot)$), which is then used to show that
$$\left| \phi(Vz)^\top \hat \theta_{\textup{RF}} - \mu_1 z^\top V^\top \hat \theta_{\textup{RF}} \right| = o(1),$$
with high probability, due to Markov inequality. This means that the non-linear component of $\phi(Vz)$ has a negligible effect on the output (see \eqref{eq:forproofsketch} in Lemma \ref{lemma:3terms}).

\emph{Step 3:} Using a similar intuition, we use matrix Bernstein inequality (Lemma \ref{lemma:matbernstein}) to show that $\|(\Phi - \mu_1 Z V^\top) V\|_{\textup{op}}$ is small with high probability, so that
$$\left| \mu_1 z^\top V^\top \left(\Phi^\top - \mu_1 V Z^\top\right)  \left( \Phi \Phi^\top + n \lambda I \right)^{-1} G \right| = o(1),$$
i.e., the non-linear component of $\Phi^\top$ in \eqref{eq:hatthetarf} is also negligible (see \eqref{eq:forproofsketch2}). 

Finally, by combining \emph{Step 2} and \emph{Step 3} with standard concentration arguments, we conclude that 
$$\left| \phi(Vz)^\top \hat \theta_{\textup{RF}} - \mu_1^2 p \frac{z^\top Z^\top}{2d} (\Phi \Phi^\top + n \lambda I)^{-1} G \right| = o(1),$$
and the thesis follows from \emph{Step 1} and Woodbury matrix identity for the inverse. \qed 

The right panel of Figure \ref{fig:nn} presents the test loss $\mathcal L (\hat \theta_{\textup{RF}}(\lambda))$ (in black) and the spurious correlations $\mathcal C (\hat \theta_{\textup{RF}}(\lambda))$ (in red) for two activation functions: $\tanh$ and $\phi_1 = h_1 + 0.1 h_3$, where $h_1$ and $h_3$ denote the first and third Hermite polynomials, respectively. Notice that this gives ${\tilde \mu}^2 / \mu_1^2 \sim 0.1$ for $\tanh$, ${\tilde \mu}^2 / \mu_1^2 \sim 0.01$ for $\phi_1$, and we take $d = 400$ and $n = 2000$. 
As expected, $\mathcal C (\hat \theta_{\textup{RF}}(0)) > 0$ for the $\tanh$ activation function, since $\tilde \lambda \sim 0.05$ (which matches the corresponding value in Figure \ref{fig:lambda}). On the other hand, $\mathcal C (\hat \theta_{\textup{RF}}(0)) \sim 0$ for the activation $\phi_1$, since $\tilde \lambda \sim 0.005$. As $\lambda$ grows, $\mathcal C (\hat \theta_{\textup{RF}}(\lambda))$ goes to 0 faster for the $\tanh$ activation function (which has higher $\tilde \lambda$), as predicted by the second upper bound in Proposition \ref{prop:boundsC}.
These results match the qualitative behavior of the 2-layer neural network trained on the Color-MNIST dataset (left panel).

\section{Conclusions}


Our work provides a rigorous study of spurious correlations in high-dimensional regression models, with the purpose of connecting the statistical foundation of this phenomenon with its practical intuition. Specifically, we translate the problem into characterizing the deterministic object $\mathcal C^\Sigma(\lambda)$, which allows us to quantitatively capture the roles of ridge regularization, data covariance and over-parameterization. 

An interesting future direction is to employ our principled approach to design algorithms that go beyond ERM to mitigate spurious correlations. In that regard, let us mention the usage of multiple models
with different ridge regularization or early stopping (connected to ridge penalties, see \cite{raskutti2014early}), in order to generate additional supervision. Here, having a quantitative control on which features (core or spurious) are learned by which model would 
allow to optimally use the extra labels (e.g., for up-weighting minority groups, as in \cite{liu2021just}). 

\section*{Acknowledgements}

Marco Mondelli is funded by the European Union (ERC, INF$^2$, project number 101161364). Views and opinions expressed are however
those of the author(s) only and do not necessarily reflect those of the European Union or the
European Research Council Executive Agency. Neither the European Union nor the granting
authority can be held responsible for them.
Simone Bombari is supported by a Google PhD fellowship. The authors would like to thank GuanWen Qiu for helpful discussions.

{
\small

\bibliographystyle{plain}
\bibliography{bibliography.bib}

}

\newpage

\appendix

\section{Additional Notation}\label{app:notation}

We define a sub-Gaussian random variable according to Proposition 2.5.2 in \cite{vershynin2018high}, and $\subGnorm{X} := \inf \{ t>0 \; : \; \E \left[ \exp(X^2/t^2) \right] \leq 2 \}$. If $X \in \R^n$ is a random vector, then $\subGnorm{X} := \sup_{\norm{u}_2=1} \subGnorm{u^\top X}$. When we state that a random variable or vector $X$ is sub-Gaussian, 
we implicitly mean $\subGnorm{X} = \bigO{1}$, \emph{i.e.} its sub-Gaussian norm does not increase with the scalings of the problem.

We say that $X$ respects the Lipschitz concentration property if, for all 1-Lipschitz continuous functions $\varphi$, we have $\subGnorm{\varphi(X) - \E \left[ \varphi(X) \right]} = \bigO{1}$. Notice that then, if $X$ is Lipschitz concentrated, then $X - \E[X]$ is sub-Gaussian.

Given two symmetric matrices $A, B$, we use the notation $A \succeq B$ if $A - B$ is p.s.d. Notice that if $A \succeq B \succ 0$, then we also have $B^{-1} \succeq A^{-1}$. We denote with $\norm{A}_F$ the Frobenius norm of $A$, and with $\ker(A)$ its kernel space. If $A$ is a square matrix, we use the notation $\diag(A)$ to denote a matrix identical to $A$ on the diagonal, and 0 everywhere else. We let $A \circ B$ denote the Hadamard (component-wise) product between matrices, and $A^{\circ k}$ denote $A \circ A \circ ... \circ A$, where $A$ appears $k$ times.

\section{Proofs for Linear Regression}\label{app:lr}

\paragraph{Proof of Proposition \ref{prop:lambda0}.}

Note that 
\begin{equation}\label{eq:hatthetalr0}
   \hat \theta_{\textup{LR}}(0) = \left( Z^\top Z \right)^{-1} Z^\top G .
\end{equation}
Since we have $g_i = z_i^\top \theta^* + \epsilon_i$, \eqref{eq:hatthetalr0} reads
\begin{equation}\label{eq:easydecomp}
    \hat \theta_{\textup{LR}}(0) = \left( Z^\top Z \right)^{-1} Z^\top \left( Z \theta^* + \mathcal E \right) = \theta^* + \left( Z^\top Z \right)^{-1} Z^\top \mathcal E.
\end{equation} 
Then, we can plug this result in the definition of $\mathcal C(\hat \theta)$ in \eqref{eq:spurcov} to obtain
\begin{equation}\label{eq:Clr0}
\begin{aligned}
    \E_{\mathcal E} \left[ \mathcal C(\hat \theta_{\textup{LR}}(0)) \right] &= \E_{\mathcal E} \left[ \Cov_{[x^\top, y^\top]^\top \sim P_{XY}, \, g = f^*_x(x), \, \tilde x \sim P_X} \left( f_{\textup{LR}} \left(\hat \theta_{\textup{LR}}(0),  [\tilde x^\top, y^\top]^\top \right), g \right) \right] \\
    &= \E_{\mathcal E} \left[ \Cov_{[x^\top, y^\top]^\top \sim P_{XY}, \, \tilde x \sim P_X} \left( [\tilde x^\top, y^\top] \hat \theta_{\textup{LR}}(0) , x^\top \theta^*_x \right) \right] \\
    &= \E_{\mathcal E} \left[ \Cov_{[x, y] \sim P_{XY}, \, \tilde x \sim P_X} \left( \tilde x^\top \theta^*_x + [\tilde x^\top, y^\top] \left( Z^\top Z \right)^{-1} Z^\top \mathcal E , x^\top \theta^*_x  \right) \right] \\
    &= \Cov_{[x, y] \sim P_{XY}, \, \tilde x \sim P_X} \left( \tilde x^\top \theta^*_x, x^\top \theta^*_x  \right) \\
    &= 0,
\end{aligned}
\end{equation}
where in the second line we used that $\mathcal E$ is independent from everything else, in fourth line we used $\E \left[ \mathcal E \right] = 0$, and that $\mathcal E$ is independent from all the other random variables, and the last step holds since $\tilde x$ is independent from $x$.

For the second part of the statement we have that
\begin{equation}\label{eq:Clr0new}
\begin{aligned}
    \mathcal C(\hat \theta_{\textup{LR}}(0)) &= \Cov_{[x^\top, y^\top]^\top \sim P_{XY}, \, \tilde x \sim P_X} \left( [\tilde x^\top, y^\top] \left( Z^\top Z \right)^{-1} Z^\top \mathcal E , x^\top \theta^*_x  \right)  \\
    &= \Cov_{[x^\top, y^\top]^\top \sim P_{XY}, \, \tilde x \sim P_X} \left(  \mathcal E^\top  Z \left( Z^\top Z \right)^{-1}  P_y [x^\top, y^\top]^\top, [x^\top, y^\top] \theta^* \right) \\
    &=  \mathcal E^\top  Z \left( Z^\top Z \right)^{-1} P_y \Sigma \theta^*,
\end{aligned}
\end{equation}
where in the second line we introduced $P_y \in \R^{2d \times 2d}$, defined as the projector on the last $d$ elements of the canonical basis in $\R^{2d}$. Then, since $\mathcal E$ is a sub-Gaussian vector (the entries are mean-0, i.i.d. sub-Gaussian) independent from everything else, we have that, with probability at least $1 - 2 \exp \left( -c_1 \log^2 d \right)$,
\begin{equation}\label{eq:nonasympt0}
\begin{aligned}
    \left| \mathcal C(\hat \theta_{\textup{LR}}(0)) \right| &\leq \log d \norm{Z \left( Z^\top Z \right)^{-1} P_y \Sigma \theta^*}_2 \\
    &\leq \log d \opnorm{Z \left( Z^\top Z \right)^{-1}} \opnorm{P_y} \opnorm{\Sigma} \norm{\theta^*}_2 \\
    &\leq \frac{\log d \opnorm{\Sigma}}{\sqrt{\evmin{Z^\top Z}}},
\end{aligned}
\end{equation}
where we used $\opnorm{P_y} = 1$ and $\norm{\theta^*}_2 = 1$. Since $Z$ is a $n \times 2d$ matrix with independent rows having second moment $\Sigma$, by Theorem 5.39 in \cite{vershrandmat} (see Remark 5.40), we have that
\begin{equation}
    \opnorm{\frac{Z^\top Z}{n} - \Sigma} = \bigO{\sqrt{\frac{d}{n}}} = o(1),
\end{equation}
with probability at least $1 - 2 \exp \left( -c_2 d \right)$. Hence, with this probability, by Weyl's inequality, we also have
\begin{equation}
    \evmin{Z^\top Z} \geq n \evmin{\Sigma} - \opnorm{Z^\top Z - n \Sigma} = \Theta(n),
\end{equation}
where the last step holds because of Assumption \ref{ass:data}. Thus, we have that \eqref{eq:nonasympt0} reads
\begin{equation}
    \left| \mathcal C(\hat \theta_{\textup{LR}}(0)) \right| \leq \frac{\log d \opnorm{\Sigma}}{\sqrt{\evmin{Z^\top Z}}} = \bigO{\frac{\log d}{\sqrt n}},
\end{equation}
with probability at least $1 - 2 \exp \left( -c_3 \log^2 d \right)$ over $Z$ and $\mathcal E$, which gives the desired result.
\qed

\paragraph{Proof of Theorem \ref{thm:C}.} As in \cite{han2023distribution}, we define the Gaussian sequence model $\hat \theta^{\rho} \in \R^{2d}$ as 
\begin{equation}\label{eq:Gaussseq}
    \hat \theta^{\rho} = \left( \Sigma + \tau (\lambda) I \right)^{-1} \Sigma^{1/2} \left( \Sigma^{1/2} \theta^* + \frac{\gamma \rho}{\sqrt {2d}}\right),
\end{equation}
where $\rho$ is a standard Gaussian vector in $\R^{2d}$ \simone{(the Gaussian sequence model is defined in Equation (1.5) in \cite{han2023distribution}, via the different notation ${\hat \mu}^{\textup{seq}}_{(\Sigma, \mu_0)}$, and our Gaussian vector $\rho$ is denoted as $g$ in the same equation)}.
In the equation above, $\gamma > 0$ is implicitly defined via
\begin{equation}
    \frac{n \gamma^2}{2d} = \sigma^2 + \E_\rho \left[ \norm{\Sigma^{1/2} \left( \hat \theta^{\rho} - \theta^*  \right)}_2^2 \right]. 
\end{equation}
On the other hand, following a similar argument as the one in \eqref{eq:Clr0}, we have that, for every $\theta \in \R^{2d}$,
\begin{equation}\label{eq:foralltheta}
\begin{aligned}
    \mathcal C( \theta) &= \Cov_{[x^\top, y^\top]^\top \sim P_{XY}, \, \tilde x \sim P_X} \left( [\tilde x^\top, y^\top] \theta , x^\top \theta^*_x  \right) \\
    &= \theta^\top \E_{[x^\top, y^\top]^\top \sim P_{XY}, \, \tilde x \sim P_X} \left[ [\tilde x^\top, y^\top]^\top x^\top \right] \theta^*_x \\
    &= \theta^\top \E_{[x^\top, y^\top]^\top \sim P_{XY}} \left[ [\mathbf 0^\top, y^\top]^\top [x^\top, \mathbf 0^\top] \right] \theta^* \\
    &= \theta^\top P_y \E_{[x^\top, y^\top]^\top \sim P_{XY}} \left[ [x^\top, y^\top]^\top [x^\top, y^\top] \right] \theta^* \\
    &= \theta^\top P_y \Sigma \theta^*,
\end{aligned}
\end{equation}
where the third line holds since $\tilde x$ has 0 mean and is independent from $x$ and $y$, and by definition of $\theta^*_x$, and the fourth line holds because $P_y [x^\top, y^\top]^\top = [\mathbf 0^\top, y^\top]^\top$ and because the last $d$ entries of $\theta^*$ are 0 (\emph{i.e.}, $P_y \theta^* = 0$). Thus, since we have that $\norm{P_y \Sigma \theta^*}_2 \leq \opnorm{P_y} \opnorm{\Sigma} \norm{\theta^*}_2 \leq \opnorm{\Sigma}$ because of Assumption \ref{ass:data} (and since $\norm{\theta^*} \leq 1$), we have that $\mathcal C(\cdot) : \R^{2d} \to \R$ is a $\opnorm{\Sigma}$-Lipschitz function.

Now, 
since $\mathcal P_{XY}$ is multivariate Gaussian, Theorem 2.3 of \cite{han2023distribution} gives that, for any 1-Lipschitz function $\varphi: \R^{2d} \to \R$ \simone{(denoted as $g$ in \cite{han2023distribution})}, and any $t \in (0, 1/2)$,
\begin{equation}\label{eq:thmhan}
    \P_{Z, G} \left( \left| \varphi(\hat \theta_{\textup{LR}}(\lambda)) - \E_\rho \left[ \varphi \left( \hat \theta^{\rho} \right) \right]  \right| \geq t \right)  \leq C_1 d \exp \left( -d t^4 / C_1 \right),
\end{equation}
where $C_1$ is a constant depending on $\evmin{\Sigma}$, $\opnorm{\Sigma}$, $\sigma^2$, and $n/d = \Theta(1)$. Since $\mathcal C(\cdot)$ is linear, notice that we have
\begin{equation}\label{eq:ErhoC}
    \E_\rho \left[ \mathcal C \left( \hat \theta^{\rho}  \right) \right] =  \mathcal C \left( \E_\rho \left[ \hat \theta^{\rho}  \right] \right) = \mathcal C \left( \left( \Sigma + \tau(\lambda) I \right)^{-1} \Sigma \theta^* \right) = {\theta^*}^\top \Sigma \left( \Sigma + \tau(\lambda) I \right)^{-1} P_y \Sigma \theta^* = \mathcal C^\Sigma(\lambda),
\end{equation}
where we used \eqref{eq:foralltheta} in the third step, and the definition of $\mathcal C^\Sigma(\lambda)$  in \eqref{eq:CSigmatau} in the last one. Thus, setting $\varphi(\cdot)$ to be $\mathcal C(\cdot) / \opnorm{\Sigma}$, and plugging \eqref{eq:ErhoC} in \eqref{eq:thmhan} we obtain
\begin{equation}
    \P_{Z, G} \left( \frac{\left| \mathcal C(\hat \theta_{\textup{LR}}(\lambda)) - \mathcal C^\Sigma(\lambda) \right|}{\opnorm{\Sigma}} \geq t \right)  \leq C_1 d \exp \left( -d t^4 / C_1 \right),
\end{equation}
which gives the thesis after absorbing the constant $\opnorm{\Sigma}$ in $t$, and noticing that the bound is still true for $t \in (0, 1/2)$ since $\opnorm{\Sigma} \geq \tr (\Sigma) / 2d = 1$ by Assumption \ref{ass:data}.
\qed

\begin{proposition}\label{prop:Cwithblocks}
    Let $\mathcal C^\Sigma(\lambda)$ be defined in \eqref{eq:CSigmatau}, and let $S_x^{\Sigma + \tau(\lambda) I}$ be the Schur complement of $\Sigma + \tau(\lambda) I$ with respect to the top-left $d \times d$ block. Then, we have that
    \begin{equation}\label{eq:statement1bound}
    \begin{aligned}
        \mathcal C^\Sigma(\lambda) = \tau(\lambda) \, {\theta^*_x}^\top  \left( \Sigma_{xx} + \tau(\lambda) I \right)^{-1} \Sigma_{xy} \left( S_x^{\Sigma + \tau(\lambda) I} \right)^{-1} \Sigma_{yx} \theta^*_x.
    \end{aligned}
    \end{equation}
\end{proposition}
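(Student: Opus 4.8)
The plan is to start from the definition $\mathcal C^\Sigma(\lambda) = {\theta^*}^\top \Sigma (\Sigma + \tau I)^{-1} P_y \Sigma \theta^*$ and exploit two structural facts: first, that $\theta^* = [{\theta^*_x}^\top, \mathbf 0^\top]^\top$, so that $\Sigma \theta^*$ and $\Sigma(\Sigma+\tau I)^{-1}$ acting on $\theta^*$ only ``see'' the left block columns; and second, that $P_y$ projects onto the lower $d$ coordinates, so $P_y \Sigma \theta^* = [\mathbf 0^\top, (\Sigma_{yx}\theta^*_x)^\top]^\top$. Writing $A := \Sigma + \tau I$ in the same block form as \eqref{eq:Sigmablocks}, with blocks $A_{xx} = \Sigma_{xx}+\tau I$, $A_{xy}=\Sigma_{xy}$, $A_{yx}=\Sigma_{yx}$, $A_{yy}=\Sigma_{yy}+\tau I$, the quantity becomes ${\theta^*_x}^\top [\Sigma_{xx}, \Sigma_{xy}] A^{-1} [\mathbf 0; \Sigma_{yx}\theta^*_x]$, i.e.\ it only involves the top-right $d\times d$ block of $A^{-1}$ sandwiched between $[\Sigma_{xx},\Sigma_{xy}]$ on the left and $\Sigma_{yx}\theta^*_x$ on the right.

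Next I would invoke the standard block-inverse (Schur complement) formula: the top-right block of $A^{-1}$ equals $-A_{xx}^{-1} A_{xy} (S_x^{A})^{-1}$, where $S_x^{A} = A_{yy} - A_{yx}A_{xx}^{-1}A_{xy} = S_x^{\Sigma+\tau I}$ in the notation of the statement. Substituting, the expression collapses to $-{\theta^*_x}^\top [\Sigma_{xx} A_{xx}^{-1} A_{xy} - \Sigma_{xy}](S_x^A)^{-1}\Sigma_{yx}\theta^*_x$ (the $\mathbf 0$ in the right vector kills the top-left block of $A^{-1}$). The bracket simplifies: $\Sigma_{xx}A_{xx}^{-1}\Sigma_{xy} - \Sigma_{xy} = (\Sigma_{xx} - A_{xx})A_{xx}^{-1}\Sigma_{xy} = -\tau A_{xx}^{-1}\Sigma_{xy}$, since $\Sigma_{xx} - A_{xx} = -\tau I$. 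This cancels the leading minus sign and yields exactly $\tau\, {\theta^*_x}^\top (\Sigma_{xx}+\tau I)^{-1}\Sigma_{xy}(S_x^{\Sigma+\tau I})^{-1}\Sigma_{yx}\theta^*_x$, which is \eqref{eq:statement1bound}.

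The main obstacle is mostly bookkeeping rather than a conceptual difficulty: one must be careful that the block-inverse formula requires $A_{xx} = \Sigma_{xx}+\tau I$ to be invertible (immediate since $\Sigma_{xx}\succ 0$ or at least $\Sigma_{xx}\succeq 0$ and $\tau>0$) and that the Schur complement $S_x^{\Sigma+\tau I}$ is invertible (which holds because $\Sigma+\tau I \succ 0$, so all its Schur complements are positive definite). One should also verify that the identity $\Sigma_{xx} - (\Sigma_{xx}+\tau I) = -\tau I$ is being applied to the correct block and that no transpose is misplaced, using $\Sigma_{yx} = \Sigma_{xy}^\top$ and the symmetry of $\Sigma$. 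A clean way to present this is to first record the block-inverse identity as a displayed equation, then do the two-line algebraic reduction; I would keep the computation entirely deterministic (no probability enters here, since $\mathcal C^\Sigma(\lambda)$ is already the deterministic surrogate from Theorem \ref{thm:C}).
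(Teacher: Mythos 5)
Your proof is correct and follows essentially the same route as the paper's: both reduce $\mathcal C^\Sigma(\lambda)$ to a block computation on $\Sigma + \tau I$ and invoke the standard Schur-complement block-inverse formula, the only difference being where the $\tau$ cancellation happens (the paper first uses $\Sigma(\Sigma+\tau I)^{-1} = I - \tau(\Sigma+\tau I)^{-1}$ together with $P_y\theta^*=0$ so that only the $xy$ block of the inverse is needed, whereas you cancel at the block level via $\Sigma_{xx} - (\Sigma_{xx}+\tau I) = -\tau I$). One minor imprecision to fix in the write-up: your expression involves both right-column blocks of $(\Sigma+\tau I)^{-1}$, not only the top-right one, so you should also record the formula $[(\Sigma+\tau I)^{-1}]_{yy} = (S_x^{\Sigma+\tau(\lambda) I})^{-1}$ which your collapsed bracket implicitly uses; with that stated, the algebra is complete.
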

\begin{proof}
During the proof, to ease the notation, we will often leave implicit the dependence of $\tau$ on $\lambda$. Then, we can write
\begin{equation}\label{eq:beforeblocking}
\begin{aligned}
    \mathcal C^\Sigma(\lambda) &= {\theta^*}^\top \Sigma \left( \Sigma + \tau I \right)^{-1} P_y \Sigma \theta^* \\
    &= {\theta^*}^\top (\Sigma + \tau I - \tau I) \left( \Sigma + \tau I \right)^{-1} P_y \Sigma  \theta^* \\
    &= - \tau {\theta^*}^\top \left( \Sigma + \tau I \right)^{-1} P_y \Sigma \theta^* + {\theta^*}^\top P_y \Sigma \theta^* \\
    &= - \tau {\theta^*}^\top \left( \Sigma + \tau I \right)^{-1} P_y \Sigma \theta^*,
\end{aligned}
\end{equation}
where the last step holds since $P_y \theta^* = 0$. This expression can be further manipulated using the notation introduced in \eqref{eq:Sigmablocks}. We also introduce the following notation
\begin{equation}\label{eq:Sigmablocksinverse}
  \left( \Sigma + \tau I\right)^{-1} = \left(\begin{array}{@{}c|c@{}}
  \left[\left( \Sigma + \tau I\right)^{-1}\right]_{xx}
  & \left[\left( \Sigma + \tau I\right)^{-1}\right]_{xy} \\
\hline
  \left[\left( \Sigma + \tau I\right)^{-1}\right]_{yx} &
  \left[\left( \Sigma + \tau I\right)^{-1}\right]_{yy},
\end{array}\right)
\end{equation}
where we divided $\left( \Sigma + \tau I\right)^{-1}$ in four $d \times d$ blocks. Notice that, the expression in \eqref{eq:beforeblocking} only depends on $\left[\left( \Sigma + \tau I\right)^{-1}\right]_{xy}$, \emph{i.e.},
\begin{equation}\label{eq:stat1int}
\begin{aligned}
    \mathcal C^\Sigma(\lambda) 
    &= - \tau {\theta^*}^\top P_x \left( \Sigma + \tau I \right)^{-1} P_y \Sigma \theta^* 
= 
- \tau {\theta^*_x}^\top \left[\left( \Sigma + \tau I\right)^{-1}\right]_{xy} \Sigma_{yx} \theta_x^*,
\end{aligned}
\end{equation}
where we denoted the projector on the first $d$ elements of the canonical basis of $\R^{2d}$ as $P_x \in \R^{2d \times 2d}$. Exploiting the Schur complement $S_x^{\Sigma + \tau I}$, it holds that
\begin{equation}
\begin{aligned}
    \left[\left( \Sigma + \tau I\right)^{-1}\right]_{xy} = - \left( \Sigma_{xx} + \tau I \right)^{-1} \Sigma_{xy} \left( S_x^{\Sigma + \tau I} \right)^{-1}, 
\end{aligned}
\end{equation}
which combined with \eqref{eq:stat1int} 
proves \eqref{eq:statement1bound}. 
\end{proof}

\paragraph{Proof of Proposition \ref{prop:boundsC}.}
During the proof, to ease the notation, we will often leave implicit the dependence of $\tau$ on $\lambda$. Then, according to \eqref{eq:CSigmatau}, we have that
\begin{equation}\label{eq:easybound1}
    \left| \mathcal C^\Sigma(\lambda) \right| = \left| {\theta^*}^\top \Sigma \left( \Sigma + \tau I \right)^{-1} P_y \Sigma P_x \theta^* \right| \leq \norm{\theta^*}_2^2 \opnorm{ \Sigma \left( \Sigma + \tau I \right)^{-1}} \opnorm{P_y \Sigma P_x} \leq \opnorm{\Sigma_{yx}},
\end{equation}
and
\begin{equation}\label{eq:easybound2}
    \left| \mathcal C^\Sigma(\lambda) \right| = \left| {\theta^*}^\top \Sigma \left( \Sigma + \tau I \right)^{-1} P_y \Sigma \theta^* \right| \leq \norm{\theta^*}_2^2 \opnorm{\Sigma}^2 \frac{1}{\evmin{\Sigma} + \tau} \leq \frac{\evmax{\Sigma}^2}{\tau}.
\end{equation}

Then, using \eqref{eq:statement1bound}, we get
\begin{equation}
\begin{aligned}
    \mathcal C^\Sigma(\lambda) &= \tau \, {\theta^*_x}^\top  \left(\Sigma_{xx} + \tau I \right)^{-1} \Sigma_{xy} \left( S_x^{\Sigma + \tau I} \right)^{-1} \Sigma_{yx} \theta^*_x \\
    & = \tau \, {\theta^*_x}^\top  \left(\Sigma_{xx} + \tau I \right)^{-1/2} \left(\Sigma_{xx} + \tau I \right)^{-1/2} \Sigma_{xy} \left( S_x^{\Sigma + \tau I} \right)^{-1} \Sigma_{yx} \left(\Sigma_{xx} + \tau I \right)^{-1/2} \left(\Sigma_{xx} + \tau I \right)^{1/2} \theta^*_x \\
    &\leq \tau \norm{\left(\Sigma_{xx} + \tau I \right)^{-1/2} \theta_x^*}_2 \norm{\left(\Sigma_{xx} + \tau I \right)^{1/2} \theta^*_x}_2 \opnorm{\left(\Sigma_{xx} + \tau I \right)^{-1/2} \Sigma_{xy} \left( S_x^{\Sigma + \tau I} \right)^{-1} \Sigma_{yx} \left(\Sigma_{xx} + \tau I \right)^{-1/2}} \\
    &\leq \tau \, \frac{1}{\sqrt{\evmin{\Sigma_{xx}} + \tau}} \, \sqrt{{\theta^*_x}^\top \Sigma_{xx} \theta^*_x + \tau} \, \opnorm{\left(\Sigma_{xx} + \tau I \right)^{-1/2} \Sigma_{xy} \left( S_x^{\Sigma + \tau I} \right)^{-1} \Sigma_{yx} \left(\Sigma_{xx} + \tau I \right)^{-1/2}} \\
    &\leq \tau \frac{\sqrt{\E_{x \sim P_X} \left[ \left( x^\top \theta^*_x \right)^2 \right] + \tau}}{\sqrt{\evmin{\Sigma_{xx}} + \tau}}  \frac{\opnorm{\left(\Sigma_{xx} + \tau I \right)^{-1/2} \Sigma_{xy}}^2}{\evmin{S_x^{\Sigma + \tau I}}} \\
    &= \tau \frac{\sqrt{\E_{g = x^\top \theta^*_x + \epsilon} \left[ g ^2 \right] - \sigma^2 + \tau}}{\sqrt{\evmin{\Sigma_{xx}} + \tau}}  \frac{\evmax{\Sigma_{yx} \left(\Sigma_{xx} + \tau I\right)^{-1} \Sigma_{xy}}}{\evmin{\Sigma_{yy} + \tau I - \Sigma_{yx} \left(\Sigma_{xx} + \tau I\right)^{-1} \Sigma_{xy}}} \\
    &\leq \tau \frac{\sqrt{\E_{g} \left[ g ^2 \right] - \sigma^2 + \tau}}{\sqrt{\evmin{\Sigma_{xx}} + \tau}} \frac{\evmax{\Sigma_{yx} \Sigma_{xx} ^{-1} \Sigma_{xy}}}{\evmin{\Sigma_{yy} + \tau I - \Sigma_{yx} \Sigma_{xx} ^{-1} \Sigma_{xy}}} \\
    &= \tau \frac{\sqrt{\E_{g} \left[ g ^2 \right] - \sigma^2 + \tau}}{\sqrt{\evmin{\Sigma_{xx}} + \tau}} \frac{\evmax{\Sigma_{yy}} - \evmin{S_x^\Sigma}}{\evmin{S_x^\Sigma} + \tau} \\
    &\leq \tau \sqrt{\Var(g) - \sigma^2} \frac{\evmax{\Sigma_{yy}} - \evmin{S_x^\Sigma}}{\evmin{S_x^\Sigma} \sqrt{\evmin{\Sigma_{xx}}}},
\end{aligned}
\end{equation}
where in the fifth line we denoted with $\mathcal P_X$ the marginal distribution of the core feature $x$, and $\E_g \left[ \cdot \right]$ from the sixth line on denotes an expectation with respect to $g$ distributed as the labels of the model.
The last step simplifies the expression with respect to $\tau$, and it holds since $\Var(g) - \sigma^2 = {\theta^*_x}^\top \Sigma_{xx} \theta^*_x \geq \evmin{\Sigma_{xx}}$. 
This, together with \eqref{eq:easybound1} and \eqref{eq:easybound2} gives the desired result.
\qed

\paragraph{Proof of Proposition \ref{lemma:L}.}
During the proof, to ease the notation, we will often leave implicit the dependence of $\tau$ on $\lambda$. Then, as in \cite{han2023distribution} and in the proof of Theorem \ref{thm:C}, we define the Gaussian sequence model $\hat \theta^{\rho} \in \R^{2d}$ as \eqref{eq:Gaussseq}
where $\rho$ is a standard Gaussian vector in $\R^{2d}$ and 
$\gamma > 0$ is implicitly defined via
\begin{equation}
    \frac{n \gamma^2}{2d} = \sigma^2 + \E_\rho \left[ \norm{\Sigma^{1/2} \left( \hat \theta^{\rho} - \theta^*  \right)}_2^2 \right] = \sigma^2 + \norm{\Sigma^{1/2} \left( \left( \Sigma + \tau I\right)^{-1} \Sigma - I \right) \theta^* }_2^2 + \frac{\gamma^2}{2d} \tr \left( \left( \Sigma + \tau I\right)^{-2} \Sigma^2 \right),
\end{equation}
which also reads
\begin{equation}\label{eq:gamma}
    \frac{n \gamma^2}{2d} = \frac{\sigma^2 + \tau^2 \norm{ \left( \Sigma + \tau I\right)^{-1} \Sigma^{1/2}  \theta^* }_2^2}{1 - \frac{ \tr \left( \left( \Sigma + \tau I\right)^{-2} \Sigma^2 \right)}{n}}.
\end{equation}
Then, due to 
Theorem 3.1 of \cite{han2023distribution} on the prediction risk, since $\mathcal P_{XY}$ is a multivariate Gaussian due to Assumption \ref{ass:data}, we have that, 
for any $t \in (0, 1/2)$,
\begin{equation}\label{eq:thmhan2}
    \P_{Z, G} \left( \left| \mathcal L(\hat \theta_{\textup{LR}}(\lambda)) - \mathcal L^\Sigma(\lambda) \right| \geq t \right)  \leq C d \exp \left( -d t^4 / C \right),
\end{equation}
where $C$ is a positive constant depending on $\evmin{\Sigma}$, $\opnorm{\Sigma}$, $\sigma^2$, and $n/d = \Theta(1)$.
\qed

\paragraph{Proof of Proposition \ref{prop:incr}}

As $\tau(\lambda)$ is an increasing function of $\lambda$, all the statements on the monotonicity of $\mathcal L^\Sigma(\lambda)$ and $\mathcal C^\Sigma(\lambda)$ can be proved by showing monotonicity w.r.t.\ $\tau$ (whose dependence w.r.t.\ $\lambda$ is left implicit throughout the argument). In particular, we have
\begin{equation}\label{eq:4terms}
\begin{aligned}
    \frac{\diff \mathcal L^\Sigma(\lambda)}{\diff \tau} &= \frac{\frac{\diff}{\diff \tau} \left( \tau^2 \norm{ \left( \Sigma + \tau I\right)^{-1} \Sigma^{1/2}  \theta^* }_2^2 \right) \left( 1 - \frac{ \tr \left( \left( \Sigma + \tau I\right)^{-2} \Sigma^2 \right)}{n} \right)}{\left( 1 - \frac{ \tr \left( \left( \Sigma + \tau I\right)^{-2} \Sigma^2 \right)}{n} \right)^2} \\
    & \qquad - \frac{\left( \sigma^2 + \tau^2 \norm{ \left( \Sigma + \tau I\right)^{-1} \Sigma^{1/2}  \theta^* }_2^2 \right) \frac{\diff}{\diff \tau} \left(  1 - \frac{ \tr \left( \left( \Sigma + \tau I\right)^{-2} \Sigma^2 \right)}{n} \right)}{\left( 1 - \frac{ \tr \left( \left( \Sigma + \tau I\right)^{-2} \Sigma^2 \right)}{n} \right)^2}.
\end{aligned}
\end{equation}
To study the sign of the above expression, it suffices to focus on the numerators, as the denominator is always positive. 

Note that the RHS of \eqref{eq:tau} is smaller or equal to $2d / n$; thus, as $2d < n$, we also get $\tau \leq \lambda \left( 1 - 2d / n\right)^{-1}$, which implies $\tau(\lambda) \to 0$ as $\lambda \to 0$.
Hence, to show that $\mathcal L^\Sigma(\lambda)$ is monotonically decreasing in a right neighborhood of $\lambda=0$, it suffices to show that \eqref{eq:4terms} evaluated in $\tau = 0$ is strictly negative. For $\tau = 0$, the first factor in the numerator of the first term in \eqref{eq:4terms} is 0, as the following chain of equalities holds:
\begin{equation}\label{eq:term11newbef}
\begin{aligned}
    \frac{\diff}{\diff \tau} \left( \tau^2 \norm{ \left( \Sigma + \tau I\right)^{-1} \Sigma^{1/2}  \theta^* }_2^2 \right) &= \frac{\diff}{\diff \tau} \left(  \norm{ \left( \Sigma / \tau + I\right)^{-1} \Sigma^{1/2}  \theta^* }_2^2 \right) \\
    &= {\theta^*}^\top \frac{\diff}{\diff \tau} \left( \Sigma / \tau + I\right)^{-2} \Sigma \theta^* \\
    &= - {\theta^*}^\top \left( \Sigma / \tau + I\right)^{-2} \left( \frac{\diff}{\diff \tau}  \left( \Sigma / \tau + I\right)^{2} \right) \left( \Sigma / \tau + I\right)^{-2} \Sigma \theta^* \\
    &= - {\theta^*}^\top \left( \Sigma / \tau + I\right)^{-2} \left( -\frac{2 \Sigma}{\tau^2}  \left( \Sigma / \tau + I\right) \right) \left( \Sigma / \tau + I\right)^{-2} \Sigma \theta^* \\
    &= 2 \tau {\theta^*}^\top \left( \Sigma + \tau I\right)^{-3} \Sigma ^2 \theta^*.
\end{aligned}
\end{equation}
Furthermore, the second term gives
\begin{equation}
    - \sigma^2 \frac{\diff}{\diff \tau} \left(  1 - \frac{ \tr \left( \left( \Sigma + \tau I\right)^{-2} \Sigma^2 \right)}{n} \right) =  \frac{\sigma^2}{n} \frac{\diff}{\diff \tau} \left( \sum_{k = 1}^{2d} \frac{\lambda_k^2}{\left(\lambda_k + \tau \right)^2} \right) = - \frac{2 \sigma^2}{n} \sum_{k = 1}^{2d} \frac{\lambda_k^2}{\left(\lambda_k + \tau \right)^3} < 0,
\end{equation}
where $\lambda_k$ denotes the $k$-th eigenvalue of $\Sigma$. This gives the first claim.

To show that there exists $\lambda_{\mathcal L}>0$ such that $\mathcal L^\Sigma(\lambda)$ is monotonically increasing for $\lambda \ge \lambda_{\mathcal L}$ we will show that 
the derivative of $\mathcal L^\Sigma(\lambda)$ with respect to $\tau$ is positive for all $\tau \geq \tau_{\mathcal L} := \tau(\lambda_{\mathcal L})$.
For simplicity, in the rest of the argument we use the notation 
$\lambda_{\max}$ and $\lambda_{\min}$ to indicate the largest and smallest eigenvalues of $\Sigma$, respectively. Instead, the notation $\evmin{\cdot}$ still represents the smallest eigenvalue of its argument.
For the first factor of the first term of \eqref{eq:4terms}, continuing from \eqref{eq:term11newbef}, we have
\begin{equation}\label{eq:term11new}
\begin{aligned}
    \frac{\diff}{\diff \tau} \left( \tau^2 \norm{ \left( \Sigma + \tau I\right)^{-1} \Sigma^{1/2}  \theta^* }_2^2 \right) 
    &\geq 2 \frac{1}{\evmax{\Sigma / \tau + I}^3} \evmin{ \Sigma / \tau}^2 = \frac{2 \lambda_{\min}^2}{ \tau^2 \left(\lambda_{\max} / \tau + 1 \right)^3}.
\end{aligned}
\end{equation}
For the second factor of the first term of \eqref{eq:4terms}, we have
\begin{equation}\label{eq:term12new}
    1 - \frac{ \tr \left( \left( \Sigma + \tau I\right)^{-2} \Sigma^2 \right)}{n} = 1 - \frac{1}{n} \sum_{k = 1}^{2d} \frac{\lambda_k^2}{\left(\lambda_k + \tau \right)^2} \geq 1 - \frac{2d \lambda_{\max}^2}{n \tau^2}.
\end{equation}
For the first factor of the second term of \eqref{eq:4terms}, we have
\begin{equation}\label{eq:term21new}
    \tau^2 \norm{ \left( \Sigma + \tau I\right)^{-1} \Sigma^{1/2}  \theta^* }_2^2  \leq \tau^2 \frac{\lambda_{\max}}{\left( \lambda_{\min} + \tau \right)^2}
\end{equation}
For the second factor of the second term of \eqref{eq:4terms}, we have
\begin{equation}\label{eq:term22new}
    \frac{\diff}{\diff \tau} \left(  1 - \frac{ \tr \left( \left( \Sigma + \tau I\right)^{-2} \Sigma^2 \right)}{n} \right) = - \frac{1}{n} \frac{\diff}{\diff \tau} \left( \sum_{k = 1}^{2d} \frac{\lambda_k^2}{\left(\lambda_k + \tau \right)^2} \right) = \frac{2}{n} \sum_{k = 1}^{2d} \frac{\lambda_k^2}{\left(\lambda_k + \tau \right)^3} \leq \frac{4d \lambda_{\max}^2}{n \tau^3}.
\end{equation}

Thus, putting together \eqref{eq:4terms}, \eqref{eq:term11new}, \eqref{eq:term12new}, \eqref{eq:term21new}, and \eqref{eq:term22new}, the monotonicity of $\mathcal L^\Sigma(\lambda)$ is implied by 
\begin{equation}\label{eq:thesis2propnew}
    \frac{2 \lambda_{\min}^2}{ \tau^2 \left(\lambda_{\max} / \tau + 1 \right)^3} \left(1 - \frac{2d \lambda_{\max}^2}{n \tau^2} \right) \stackrel{?}{\geq} \left( \sigma^2 +\tau^2 \frac{\lambda_{\max}}{\left( \lambda_{\min} + \tau \right)^2} \right) \frac{4d \lambda_{\max}^2}{n \tau^3}.
\end{equation}
Now, we have that the above inequality holds for sufficiently large $\tau$: the LHS is $\Theta(1/\tau^2)$ (considering fixed the other quantities), while the RHS is $\Theta(1/\tau^3)$; and the desired statement is therefore proved.

Next, we set $\tau_{\mathcal C}:= \tau(\lambda_{\mathcal C}) = \sqrt{\evmin{S^\Sigma_{x}}}$ and show that $\mathcal C^\Sigma(\lambda)$ is monotonically increasing for $\tau \in [0, \tau_{\mathcal C}]$. Plugging $\Sigma_{xx} = I$ in \eqref{eq:statement1bound} we get
\begin{equation}\label{eq:simplid}
     \mathcal C^\Sigma(\lambda) = \tau \, {\theta^*_x}^\top  \left( \Sigma_x + \tau I \right)^{-1} \Sigma_{xy} \left( S_x^{\Sigma + \tau I} \right)^{-1} \Sigma_{yx} \theta^*_x = \frac{\tau}{1 + \tau} \, {\theta^*_x}^\top \Sigma_{xy} \left( \Sigma_{yy} + \tau I - \frac{\Sigma_{yx}\Sigma_{xy}}{1 + \tau} \right)^{-1} \Sigma_{yx} \theta^*_x.
\end{equation}
By the product rule, and introducing the shorthand $A(\tau) =  \Sigma_{yy} + \tau I - \frac{\Sigma_{xy}^\top \Sigma_{xy}}{1 + \tau}$, we have
\begin{equation}
\begin{aligned}
    \frac{\diff \mathcal \mathcal C^\Sigma(\lambda)}{\diff \tau} &= \left( \frac{\diff}{\diff \tau} \left( \frac{\tau}{1+ \tau} \right) \right) \left(  {\theta^*_x}^\top \Sigma_{xy} A(\tau)^{-1} \Sigma_{xy}^\top \theta^*_x \right) + \left( \frac{\tau}{1+ \tau} \right) \left( \frac{\diff}{\diff \tau}  \left( {\theta^*_x}^\top \Sigma_{xy} A(\tau)^{-1} \Sigma_{xy}^\top \theta^*_x \right) \right) \\
    &= \frac{1}{(1 + \tau)^2} \, {\theta^*_x}^\top \Sigma_{xy} A(\tau)^{-1} \Sigma_{xy}^\top \theta^*_x + \left( \frac{\tau}{1+ \tau} \right) \left( {\theta^*_x}^\top \Sigma_{xy} A(\tau)^{-1} \left( - \frac{\diff}{\diff \tau} A(\tau) \right) A(\tau)^{-1} \Sigma_{xy}^\top \theta^*_x \right) \\
    &= \frac{1}{(1 + \tau)^2} {\theta^*_x}^\top \Sigma_{xy} A(\tau)^{-1} \Sigma_{xy}^\top \theta^*_x - \frac{\tau}{1+ \tau}  \left(  {\theta^*_x}^\top \Sigma_{xy} A(\tau)^{-1} \left( I +  \frac{\Sigma_{xy}^\top \Sigma_{xy}}{(1 + \tau)^2} \right) A(\tau)^{-1} \Sigma_{xy}^\top \theta^*_x \right) \\
    &= \frac{1}{(1 + \tau)} {\theta^*_x}^\top \Sigma_{xy} A(\tau)^{-1} \left( \frac{A(\tau)}{1+\tau}  - \tau \left( I +  \frac{\Sigma_{xy}^\top \Sigma_{xy}}{(1 + \tau)^2} \right) \right) A(\tau)^{-1} \Sigma_{xy}^\top \theta^*_x \\
    &= \frac{1}{(1 + \tau)} {\theta^*_x}^\top \Sigma_{xy} A(\tau)^{-1} \left( \frac{\Sigma_{yy} + \tau I}{1+\tau} - \frac{\Sigma_{xy}^\top \Sigma_{xy}}{(1 + \tau)^2} - \tau I - \tau \frac{\Sigma_{xy}^\top \Sigma_{xy}}{(1 + \tau)^2} \right) A(\tau)^{-1} \Sigma_{xy}^\top \theta^*_x \\
    &= \frac{1}{(1 + \tau)^2} {\theta^*_x}^\top \Sigma_{xy} A(\tau)^{-1} \left( \Sigma_{yy} - \Sigma_{xy}^\top \Sigma_{xy} - \tau^2 I \right) A(\tau)^{-1} \Sigma_{xy}^\top \theta^*_x \\
    &= \frac{1}{(1 + \tau)^2} {\theta^*_x}^\top \Sigma_{xy} A(\tau)^{-1} \left( S^\Sigma_{x} - \tau^2 I \right) A(\tau)^{-1} \Sigma_{xy}^\top \theta^*_x,
\end{aligned}
\end{equation}
where in the second line we used the identity $\frac{\diff}{\diff \tau}  \left( A(\tau)^{-1} \right) = A(\tau)^{-1} \left( - \frac{\diff}{\diff \tau} A(\tau) \right) A(\tau)^{-1}$. Then, if $\tau \leq \sqrt{\evmin{S^\Sigma_{x}}} = \tau_{\mathcal C}$, we have that $\left( S^\Sigma_{x} - \tau^2 I \right)$ is p.s.d., which in turn implies $\frac{\diff \mathcal \mathcal C^\Sigma(\lambda)}{\diff \tau} \geq 0$, thus giving the desired claim. The non-negativity of $\mathcal C^\Sigma(\lambda)$ readily follows from \eqref{eq:simplid}.

For the last statement, setting $\tau_{\mathcal L} = \evmin{\Sigma}$ we show that $\mathcal L^\Sigma(\lambda)$ is monotonically increasing for all $\tau \in [\tau_{\mathcal L}, + \infty)$ as long as the additional bound on $2d/n$ 
holds.
As $\evmin{S^\Sigma_{x}} \leq \evmin{\Sigma_{yy}} \leq \tr \left( \Sigma_{yy} \right) / d = \tr \left( \Sigma - \Sigma_{xx} \right) / d = 1$, we also have 
\begin{equation}\label{eq:thesispropositionincreasing}
    \tau_{\mathcal C} = \sqrt{\evmin{S^\Sigma_{x}}} \geq \evmin{S^\Sigma_{x}} \geq \evmin{\Sigma} = \tau_{\mathcal L},
\end{equation}
where the second inequality follows from Lemma \ref{lemma:schurevmin}. Thus, from the monotonicity of 
$\tau(\lambda)$ in $\lambda$, the final result readily follows.

It remains to prove the monotonicity of $\mathcal L^\Sigma(\lambda)$ in $[\evmin{\Sigma}, + \infty)$. To do so, we again study the sign of \eqref{eq:4terms}.

For the first factor of the first term of \eqref{eq:4terms}, we have
\begin{equation}\label{eq:term11}
\begin{aligned}
    \frac{\diff}{\diff \tau} \left( \tau^2 \norm{ \left( \Sigma + \tau I\right)^{-1} \Sigma^{1/2}  \theta^* }_2^2 \right) &= 2 {\theta^*}^\top \left( \Sigma / \tau + I\right)^{-3} \left( \Sigma / \tau \right)^2 \theta^* \\
    &= 2 {\theta^*}^\top \Sigma^{1/2} \left( \Sigma + \tau I\right)^{-1} \left( \Sigma + \tau I \right)^{-1} \tau \Sigma \left( \Sigma + \tau I\right)^{-1} \Sigma^{1/2} \theta^* \\
    &\geq \frac{2}{\tau} \evmin{\Sigma \left( \Sigma + \tau \right)^{-1}} \tau^2 \norm{\left( \Sigma + \tau I\right)^{-1} \Sigma^{1/2} \theta^*}_2^2 \\
    &= \frac{2}{\tau} \frac{\lambda_{\min}}{\lambda_{\min} + \tau} \tau^2 \norm{\left( \Sigma + \tau I\right)^{-1} \Sigma^{1/2} \theta^*}_2^2.
\end{aligned}
\end{equation}
For the second factor of the first term of \eqref{eq:4terms}, we have
\begin{equation}\label{eq:term12}
    1 - \frac{ \tr \left( \left( \Sigma + \tau I\right)^{-2} \Sigma^2 \right)}{n} = 1 - \frac{1}{n} \sum_{k = 1}^{2d} \frac{\lambda_k^2}{\left(\lambda_k + \tau \right)^2} \geq 1 - \frac{2d}{n}.  
\end{equation}

For the second factor of the second term of \eqref{eq:4terms}, we have
\begin{equation}\label{eq:term22}
    \frac{\diff}{\diff \tau} \left(  1 - \frac{ \tr \left( \left( \Sigma + \tau I\right)^{-2} \Sigma^2 \right)}{n} \right) = \frac{2}{n} \sum_{k = 1}^{2d} \frac{\lambda_k^2}{\left(\lambda_k + \tau \right)^3} \leq \frac{2}{n} \frac{1}{\tau \left(\lambda_{\min} + \tau\right)} \sum_{k = 1}^{2d} \frac{\lambda_k^2}{\left(\lambda_k + \tau \right)} \leq \frac{4d}{n \tau \left( \lambda_{\min} + \tau \right)}.
\end{equation}

Thus, putting together \eqref{eq:4terms}, \eqref{eq:term11}, \eqref{eq:term12}, and \eqref{eq:term22}, the monotonicity of $\mathcal L^\Sigma(\lambda)$ is implied by
\begin{equation}\label{eq:thesis2prop}
    \frac{2}{\tau} \frac{\lambda_{\min}}{\lambda_{\min} + \tau} \tau^2 \norm{\left( \Sigma + \tau I\right)^{-1} \Sigma^{1/2} \theta^*}_2^2 \left( 1 - \frac{2d}{n} \right) \stackrel{?}{\geq} \left( \sigma^2 + \tau^2 \norm{ \left( \Sigma + \tau I\right)^{-1} \Sigma^{1/2}  \theta^* }_2^2 \right) \frac{4d}{n \tau \left( \lambda_{\min} + \tau \right)}
\end{equation}

Since we assumed that $2d / n \leq \lambda_{\min}/4 \leq 1 / 4$, we have
\begin{equation}\label{eq:firstboundused}
    \frac{2}{\tau} \frac{\lambda_{\min}}{\lambda_{\min} + \tau}  \left( 1 - \frac{2d}{n} \right) -  \frac{4d}{n \tau \left( \lambda_{\min} + \tau \right)} = \frac{2}{\tau} \frac{\lambda_{\min}}{\lambda_{\min} + \tau}  \left( 1 - \frac{2d}{n} - \frac{2d}{n \lambda_{\min}}\right) \geq \frac{1}{ \tau} \frac{\lambda_{\min}}{\lambda_{\min} + \tau},
\end{equation}
and
\begin{equation}\label{eq:tausquarenorm}
\begin{aligned}
    \tau^2 \norm{\left( \Sigma + \tau I\right)^{-1} \Sigma^{1/2} \theta^*}_2^2 &\geq \evmin{\tau^2 \Sigma \left( \Sigma + \tau I\right)^{-2}} \\
    &= \min_k \frac{\tau^2 \lambda_k}{\left(\lambda_k + \tau\right)^2} \\
    &= \min_k \frac{\lambda_k}{\left(\lambda_k / \tau + 1 \right)^2} \\
    &\geq \min_k \frac{\lambda_k}{\left(\lambda_k / \lambda_{\min} + 1 \right)^2} \\
    &= \lambda_{\min} \min_k \frac{\lambda_k / \lambda_{\min}}{\left(\lambda_k / \lambda_{\min}  + 1\right)^2} \\
    &= \frac{\lambda_{\max}}{\left(\lambda_{\max} / \lambda_{\min}  + 1\right)^2},
\end{aligned}
\end{equation}
where in the fourth line we used that $\tau \geq \lambda_{\min}$, 
and in the last step we used that $f(x) := x / (x + 1)^2$ is decreasing for $x \geq 1$.

Thus, using \eqref{eq:firstboundused} and \eqref{eq:tausquarenorm} gives that \eqref{eq:thesis2prop} is implied by
\begin{equation}
    \frac{\lambda_{\max}}{\left(\lambda_{\max} / \lambda_{\min}  + 1\right)^2} \frac{1}{ \tau} \frac{\lambda_{\min}}{\lambda_{\min} + \tau} \stackrel{?}{\geq} \sigma^2 \frac{4d}{n \tau \left( \lambda_{\min} + \tau \right)},
\end{equation}
which holds since we assumed
\begin{equation}
    \frac{2d}{n} \leq \frac{1}{2 \sigma^2} \frac{\lambda_{\max} \lambda_{\min}}{\left(\lambda_{\max} / \lambda_{\min}  + 1\right)^2}.
\end{equation}
\qed

\subsection[proofs on S sigma x]{Proofs on $S^\Sigma_x$}\label{app:schur}

For completeness, in this section we prove two known results about $S^\Sigma_x$. 

\begin{lemma}\label{lemma:schurcovariance}
    Let $z = \left[x^\top, y^\top\right]^\top \sim \mathcal P_{XY}$ be distributed according to a mean-0, multivariate Gaussian distribution with covariance $\Sigma$, such that $\Sigma$ is invertible. Then, the Schur complement $S^\Sigma_x$ of $\Sigma$ with respect to the top left block $\Sigma_{xx}$ (see \eqref{eq:Sigmablocks}) corresponds to the conditional covariance of $y$ given $x$, \emph{i.e.},
\begin{equation}
    S_x^\Sigma = \Cov \left( y | x = \bar x \right) = \E_{y | x = \bar x} \left[ \left(y - \E_{y | x = \bar x} [y] \right) \left(y - \E_{y | x = \bar x} [y] \right)^\top \right].
\end{equation}
\end{lemma}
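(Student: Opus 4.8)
The plan is to exploit the classical fact that jointly Gaussian vectors are independent whenever they are uncorrelated, which sidesteps any manipulation of conditional densities. First I would introduce the residual vector $w := y - \Sigma_{yx}\Sigma_{xx}^{-1} x \in \R^d$; here $\Sigma_{xx}$ is invertible because $\Sigma$ is a covariance matrix, hence positive semidefinite, and invertible by hypothesis, hence positive definite, so every principal submatrix (in particular $\Sigma_{xx}$) is positive definite. Since $[x^\top, w^\top]^\top$ is a linear image of $z = [x^\top, y^\top]^\top$, it is jointly Gaussian and mean-$0$.

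Next I would compute the cross-covariance $\E[w x^\top] = \E[y x^\top] - \Sigma_{yx}\Sigma_{xx}^{-1}\E[x x^\top] = \Sigma_{yx} - \Sigma_{yx}\Sigma_{xx}^{-1}\Sigma_{xx} = 0$, so $w$ and $x$ are uncorrelated and therefore independent. Consequently, conditioning on $x = \bar x$ leaves the law of $w$ unchanged, so $\Cov(y \mid x = \bar x) = \Cov\bigl(w + \Sigma_{yx}\Sigma_{xx}^{-1}x \,\bigm|\, x = \bar x\bigr) = \Cov(w \mid x = \bar x) = \Cov(w)$, where the middle equality uses that $\Sigma_{yx}\Sigma_{xx}^{-1}\bar x$ is a deterministic shift once we condition on $x = \bar x$. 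Finally, expanding $\Cov(w) = \E[ww^\top]$ by bilinearity produces the four terms $\Sigma_{yy}$, $-\Sigma_{yx}\Sigma_{xx}^{-1}\Sigma_{xy}$, $-\Sigma_{yx}\Sigma_{xx}^{-1}\Sigma_{xy}$, and $+\Sigma_{yx}\Sigma_{xx}^{-1}\Sigma_{xx}\Sigma_{xx}^{-1}\Sigma_{xy} = \Sigma_{yx}\Sigma_{xx}^{-1}\Sigma_{xy}$, whose sum is $\Sigma_{yy} - \Sigma_{yx}\Sigma_{xx}^{-1}\Sigma_{xy} = S_x^\Sigma$.

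There is no genuine obstacle here, as this is a textbook identity; the only points requiring a word of care are justifying the invertibility of $\Sigma_{xx}$ and invoking the Gaussian ``uncorrelated $\Rightarrow$ independent'' implication rather than trying to condition densities by hand. An alternative route would be to write the block-inverse formula for $\Sigma^{-1}$, whose bottom-right block equals $(S_x^\Sigma)^{-1}$, and recall that the conditional precision of $y$ given $x$ is precisely that block; but the residual-vector argument is cleaner and avoids ever inverting $\Sigma$ itself.
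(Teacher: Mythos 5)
Your proof is correct, but it takes a genuinely different route from the paper. You use the residual decomposition $w = y - \Sigma_{yx}\Sigma_{xx}^{-1}x$, verify $\E[wx^\top]=0$, invoke the fact that uncorrelated jointly Gaussian vectors are independent, and then compute $\Cov(w)=\Sigma_{yy}-\Sigma_{yx}\Sigma_{xx}^{-1}\Sigma_{xy}=S_x^\Sigma$ by bilinearity. The paper instead works directly with densities: it expands the quadratic form $z^\top\Sigma^{-1}z$ via the block-inverse formulas (whose blocks are expressed through $(S_x^\Sigma)^{-1}$), completes the square to obtain $x^\top\Sigma_{xx}^{-1}x + (y-\Sigma_{yx}\Sigma_{xx}^{-1}x)^\top (S_x^\Sigma)^{-1}(y-\Sigma_{yx}\Sigma_{xx}^{-1}x)$, and then computes $p(y\mid x)=p(x,y)/p(x)$ explicitly, using the Schur determinant identity to fix the normalization, concluding that the conditional law is Gaussian with covariance $S_x^\Sigma$. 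Your argument is shorter and avoids both the block-inverse and determinant formulas (essentially the ``alternative route'' you mention is closer in spirit to what the paper does); the paper's density computation, on the other hand, yields the full conditional distribution (including the conditional mean $\Sigma_{yx}\Sigma_{xx}^{-1}\bar x$) as a byproduct, not just the conditional covariance. One small point of care in your version: the claim that every principal submatrix of a positive definite matrix is positive definite is what justifies inverting $\Sigma_{xx}$, and you state this correctly; the paper implicitly relies on the same fact. Both proofs are complete and valid.
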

\begin{proof}
    Consider the expression $z^\top \Sigma^{-1} z$. According to the notation in \eqref{eq:Sigmablocks} and in \eqref{eq:Sigmablocksinverse}, we have
    \begin{equation}
        z^\top \Sigma^{-1} z = x^\top \left[ \Sigma^{-1} \right]_{xx} x + y^\top \left[ \Sigma^{-1} \right]_{yy} y + x^\top \left[ \Sigma^{-1} \right]_{xy} y + y^\top \left[ \Sigma^{-1} \right]_{yx} x.
    \end{equation}
    Then, the formulas for the inverse of a block matrix give
    \begin{equation}\label{eq:lemmaschur1}
    \begin{aligned}
         &z^\top \Sigma^{-1} z \\
         &= x^\top \left( \Sigma_{xx}^{-1} + \Sigma_{xx}^{-1} \Sigma_{xy} {S^\Sigma_x}^{-1} \Sigma_{yx} \Sigma_{xx}^{-1}  \right) x +  y^\top {S^\Sigma_x}^{-1} y + x^\top \left( - \Sigma_{xx}^{-1} \Sigma_{xy} {S^\Sigma_x}^{-1} \right) y +  y^\top \left( - {S^\Sigma_x}^{-1} \Sigma_{yx} \Sigma_{xx}^{-1} \right) x \\
         &= x^\top  \Sigma_{xx}^{-1} x +  \left(y - \Sigma_{yx} \Sigma_{xx}^{-1} x \right)^\top {S^\Sigma_x}^{-1} \left(y - \Sigma_{yx} \Sigma_{xx}^{-1} x \right).
    \end{aligned}
    \end{equation}
    Then, denoting with $p(x, y)$ and $p(x)$ the probability density functions of $z = \left[x^\top, y^\top\right]^\top$ and $x$ respectively, we get that the probability density function of $y$ conditioned on $x$ takes the form
    \begin{equation}
    \begin{aligned}
        p(y | x) &= \frac{p(x, y)}{p(x)} \\
        &= \frac{\sqrt{\left( 2 \pi \right)^{d} \det \left( \Sigma_{xx} \right) }}{\sqrt{\left( 2 \pi \right)^{2d} \det \left( \Sigma \right) }} \frac{\exp\left( - \left[x^\top, y^\top\right] \Sigma^{-1}\left[x^\top, y^\top\right]^\top / 2\right)}{\exp\left( - x^\top \Sigma_{xx}^{-1} x / 2\right)} \\
        &= \frac{1}{\sqrt{\left( 2 \pi \right)^{d} \det \left( S^\Sigma_x \right) }} \exp\left( - \left[x^\top, y^\top\right] \Sigma^{-1}\left[x^\top, y^\top\right]^\top / 2  + x^\top \Sigma_{xx}^{-1} x / 2\right) \\
        &= \frac{\exp\left( - \left(y - \Sigma_{yx} \Sigma_{xx}^{-1} x \right)^\top {S^\Sigma_x}^{-1} \left(y - \Sigma_{yx} \Sigma_{xx}^{-1} x \right) / 2 \right)}{\sqrt{\left( 2 \pi \right)^{d} \det \left( S^\Sigma_x \right) }},
    \end{aligned}
    \end{equation}
    where we used Schur formula for the determinants in the third line, and \eqref{eq:lemmaschur1} in the last step. Thus, we have that $p(y|x)$ describes the density of a multivariate Gaussian random variable, with covariance $S^\Sigma_x$.
\end{proof}

\begin{lemma}\label{lemma:schurevmin}
    Let $\Sigma \in \R^{2d \times 2d}$ be a p.s.d., invertible matrix. Then, the Schur complement $S^\Sigma_x \in \R^{d \times d}$ of $\Sigma$ with respect to the top left block $\Sigma_{xx}$ (see \eqref{eq:Sigmablocks}) is such that
    \begin{equation}
        \evmin{S^\Sigma_x} \geq \evmin{\Sigma}.
    \end{equation}
\end{lemma}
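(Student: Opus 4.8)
The plan is to deduce the bound from a statement about the bottom-right block of $\Sigma^{-1}$, rather than working with $S^\Sigma_x$ directly. The first step is to recall the block-matrix inverse identity already invoked in the proof of Lemma~\ref{lemma:schurcovariance} (cf.\ \eqref{eq:Sigmablocksinverse}): the bottom-right $d\times d$ block of $\Sigma^{-1}$, which I denote $\left[\Sigma^{-1}\right]_{yy}$, equals $\left(S^\Sigma_x\right)^{-1}$. Since $\Sigma$ is p.s.d.\ and invertible we have $\Sigma\succ 0$, hence $\Sigma^{-1}\succ 0$, hence $\left[\Sigma^{-1}\right]_{yy}\succ 0$ (a principal submatrix of a positive definite matrix is positive definite), so $S^\Sigma_x\succ 0$ and
\begin{equation*}
    \evmin{S^\Sigma_x}=\frac{1}{\evmax{\left(S^\Sigma_x\right)^{-1}}}=\frac{1}{\evmax{\left[\Sigma^{-1}\right]_{yy}}}.
\end{equation*}
The second step is the elementary fact that the largest eigenvalue of a principal submatrix is at most the largest eigenvalue of the full matrix (Cauchy interlacing, or the Rayleigh-quotient characterization restricted to a coordinate subspace): $\evmax{\left[\Sigma^{-1}\right]_{yy}}\le\evmax{\Sigma^{-1}}=1/\evmin{\Sigma}$. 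Combining the two displays yields $\evmin{S^\Sigma_x}\ge\evmin{\Sigma}$.

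A second, equally short route avoids inverting $\Sigma$ and may be preferred for self-containedness: one first establishes the variational identity $v^\top S^\Sigma_x v=\min_{u\in\R^d}[u^\top,v^\top]\,\Sigma\,[u^\top,v^\top]^\top$ for every $v\in\R^d$, obtained by completing the square in $u$ (the minimizer is $u^\star=-\Sigma_{xx}^{-1}\Sigma_{xy}v$, which uses that $\Sigma_{xx}$ is invertible — immediate from $\Sigma\succ 0$), and then bounds, for a unit vector $v$ and any $u$, $[u^\top,v^\top]\,\Sigma\,[u^\top,v^\top]^\top\ge\evmin{\Sigma}\big(\norm{u}_2^2+\norm{v}_2^2\big)\ge\evmin{\Sigma}$; taking the minimum over $u$ gives $v^\top S^\Sigma_x v\ge\evmin{\Sigma}$ for every unit $v$, which is the claim.

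I do not anticipate any genuine obstacle — each argument is only a few lines. The one point I would be careful to state explicitly is that $\Sigma$ being p.s.d.\ and invertible forces $\Sigma\succ 0$, so that $\Sigma_{xx}$ is invertible and $S^\Sigma_x$ is well defined, and so that the block-inverse identity (resp.\ the completion of the square) applies.
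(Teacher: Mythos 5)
Your proposal is correct, and both of your routes differ genuinely from the paper's argument. The paper proves Lemma \ref{lemma:schurevmin} by writing $\Sigma = S + \Gamma\Gamma^\top$, where $S$ is the zero-padded Schur complement and $\Gamma^\top = \left[\Sigma_{xx}^{1/2}, \ \Sigma_{xx}^{-1/2}\Sigma_{xy}\right]$ is a rank-$d$ factor, and then applies the Courant--Fischer min--max principle, restricting the Rayleigh quotient to $\ker\left(\Gamma\Gamma^\top\right)$ to compare $\evmin{\Sigma}$ with $\lambda_d(S) = \evmin{S^\Sigma_x}$. Your first route instead uses the block-inverse identity $\left[\Sigma^{-1}\right]_{yy} = \left(S^\Sigma_x\right)^{-1}$ (already exploited in the paper's proof of Lemma \ref{lemma:schurcovariance}) together with the monotonicity of the largest eigenvalue under passing to a principal submatrix, giving $\evmin{S^\Sigma_x} = 1/\evmax{\left[\Sigma^{-1}\right]_{yy}} \geq 1/\evmax{\Sigma^{-1}} = \evmin{\Sigma}$; your second route uses the variational identity $v^\top S^\Sigma_x v = \min_{u} \left[u^\top, v^\top\right]\Sigma\left[u^\top, v^\top\right]^\top$, obtained by completing the square. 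Both are valid and arguably more elementary than the paper's: the second, in particular, is self-contained and is in fact the same algebra that underlies the paper's decomposition, since $\left[u^\top, v^\top\right]\Gamma\Gamma^\top\left[u^\top, v^\top\right]^\top = \norm{\Sigma_{xx}^{1/2}u + \Sigma_{xx}^{-1/2}\Sigma_{xy}v}_2^2$ vanishes exactly at the minimizer $u^\star = -\Sigma_{xx}^{-1}\Sigma_{xy}v$, so the paper's Courant--Fischer step is replaced by a pointwise Rayleigh-quotient bound. Your closing remark is also the right one to make explicit: p.s.d.\ plus invertible gives $\Sigma \succ 0$, hence $\Sigma_{xx} \succ 0$, so $S^\Sigma_x$ is well defined and both arguments go through.
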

\begin{proof}
    Let $\Gamma \in \R^{2d \times d}$ be the rank-$d$ matrix defined as
    \begin{equation}
        \Gamma = \left(\begin{array}{@{}c}
            \Sigma_{xx}^{1/2}  \\
            \hline
            \Sigma_{yx} \Sigma_{xx}^{-1/2}
        \end{array}\right),
    \end{equation}
    and $S \in \R^{2d \times 2d}$ as the matrix containing $S^\Sigma_x$ in its bottom-right $d \times d$ block, and 0 everywhere else. Then, we have that
    \begin{equation}
        \Sigma = S + \Gamma \Gamma^\top,
    \end{equation}
    where both $S$ and $\Gamma \Gamma^\top$ are rank-$d$ p.s.d. matrices.
    
    Denoting by $\lambda_k(S)$ the $k$-th largest eigenvalue of $S$, by the Courant–Fischer–Weyl min-max principle, we can write
    \begin{equation}
        \lambda_k(S) = \max_{W, \, \dim \left( W \right) = k} \min_{u \in W, \, \norm{u}_2 = 1} \left(u^\top S u\right),
    \end{equation}
    where with $W$ we denote a generic $k$-dimensional subspace of $\R^{2d}$. Thus, the desired result follows from
    \begin{equation}
    \begin{aligned}
        \evmin{\Sigma} &= \evmin{S + \Gamma \Gamma^\top} \\
        &= \min_{\norm{u}_2 = 1} u^\top \left( S + \Gamma \Gamma^\top \right) u \\
        &\leq \min_{u \in \ker\left( \Gamma \Gamma^\top \right), \, \norm{u}_2 = 1} u^\top \left( S + \Gamma \Gamma^\top \right) u \\
        &=\min_{u \in \ker\left( \Gamma \Gamma^\top \right), \, \norm{u}_2 = 1} u^\top S u \\
        &\leq \max_{W, \, \dim \left( W \right) = d} \min_{u \in W, \, \norm{u}_2 = 1} u^\top S u \\
        &=\lambda_d(S) \\
        &=\evmin{S^\Sigma_x},
    \end{aligned}
    \end{equation}
    where the last step holds since the $d$ smallest eigenvalues of $S$ are equal to $0$, and the $d$ largest correspond to the ones of $S^\Sigma_x$.
\end{proof}

\subsection{Remarks on Assumption \ref{ass:data}}\label{app:datageneral}

Our results on linear regression rely on Assumption \ref{ass:data}, and in particular on the training samples to be normally distributed. This assumption is made for technical convenience, as the concentration results in Theorem \ref{thm:C} and Proposition \ref{lemma:L} still hold under the following milder requirement.

\begin{assumption}[Data distribution]\label{ass:datageneral}
    The input samples $\{z_i\}_{i=1}^n$ are $n$ i.i.d.\ samples from a mean-0, sub-Gaussian distribution $\mathcal P_{XY}$, such that
    \begin{enumerate}
    \item its covariance $\Sigma \in \R^{2d \times 2d}$ is invertible, with $\evmax{\Sigma} = \bigO{1}$, $\evmin{\Sigma} = \Omega(1)$, and $\tr(\Sigma) = 2d$;
    \item for $z \sim P_Z$, the random variable $\Sigma^{-1/2} z$ has independent, mean-0, unit variance, sub-Gaussian entries.
    \end{enumerate}
\end{assumption}

This assumption resembles the requirements A-B in Section 2.2 in \cite{han2023distribution}, where we also included the scaling of the trace. To formally state the equivalent of Theorem \ref{thm:C} and Proposition \ref{lemma:L}, one also has to enforce the following technical condition on the true parameter $\theta^*$.

\begin{assumption}\label{ass:theta^*}
    Let $\delta = 1 / 72$, then we assume that
    \begin{equation}
        \theta^* \textup{ s.t. } \norm{\Sigma^{1/2} \tau (\Sigma + \tau I)^{-1} \theta^*}_\infty \leq C d^{\delta - 1/2}.
    \end{equation}    
\end{assumption}
In Proposition 10.3 in \cite{han2023distribution}, it is shown that this condition excludes a negligible fraction ($C e^{-n^{2\delta}/C}$) of the $\theta^*$ on the unit ball. Since we set $\delta = 1 / 72$, following the same arguments of the proofs of Theorem \ref{thm:C} and Proposition \ref{lemma:L}, we have that Theorems 2.4 and 3.1 in \cite{han2023distribution} imply the results below.

\begin{theorem}\label{thm:Cdatageneral}
    Let Assumptions \ref{ass:data2} and \ref{ass:theta^*} hold, and let $n = \Theta(d)$. Let $\hat \theta_{\textup{LR}}(\lambda)$ be defined as in \eqref{eq:hatthetalambda}, and let $\mathcal C(\hat \theta_{\textup{LR}}(\lambda))$ be the amount of spurious correlations learned by the model $f_{\textup{LR}} \left(\hat \theta_{\textup{LR}}(\lambda), \cdot \right)$ as defined in \eqref{eq:spurcov}. Then, for any $\lambda > 0$, 
    we have that, for every $t \in (0, 1/2)$, 
    \begin{equation}
    \P_{Z, G} \left( \left| \mathcal C(\hat \theta_{\textup{LR}}(\lambda)) - \mathcal C^\Sigma(\lambda) \right| \geq t \right)  \leq C t^{-13} d^{-1/8},
    \end{equation}
    where $\mathcal C^\Sigma(\lambda)$ is defined in \eqref{eq:CSigmatau}, and $C$ is a an absolute constant. 
\end{theorem}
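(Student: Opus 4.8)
The plan is to repeat, essentially verbatim, the proof of Theorem \ref{thm:C}, replacing the Gaussian concentration statement (Theorem 2.3 of \cite{han2023distribution}) with its sub-Gaussian counterpart (Theorem 2.4 of \cite{han2023distribution}), which under the sub-Gaussian data assumption (Assumption \ref{ass:datageneral}) together with Assumption \ref{ass:theta^*} delivers a polynomial-in-$t$, polynomial-in-$d$ deviation bound of exactly the form claimed.

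More precisely, I would first note that the identity $\mathcal C(\theta) = \theta^\top P_y \Sigma \theta^*$ derived in \eqref{eq:foralltheta} does not use Gaussianity of $\mathcal P_{XY}$: it only uses that $\tilde x$ is mean-zero and independent of $(x,y)$, that $\E[z z^\top] = \Sigma$, and that $P_y \theta^* = 0$. Hence $\mathcal C(\cdot) : \R^{2d} \to \R$ is still a \emph{linear} map with $\norm{P_y \Sigma \theta^*}_2 \le \opnorm{\Sigma} = \bigO{1}$, so $\varphi(\cdot) := \mathcal C(\cdot)/\opnorm{\Sigma}$ is $1$-Lipschitz. I would then introduce the Gaussian sequence model $\hat\theta^\rho$ exactly as in \eqref{eq:Gaussseq}, with $\tau(\lambda)$ and $\gamma$ given by the same fixed-point equations, and invoke Theorem 2.4 of \cite{han2023distribution}: its hypotheses are met because Assumption \ref{ass:datageneral} is precisely the sub-Gaussian data model of \cite{han2023distribution} (with the extra trace normalization $\tr(\Sigma) = 2d$, which only fixes scaling), and because the technical restriction on the signal is exactly Assumption \ref{ass:theta^*} with $\delta = 1/72$. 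This gives, for every $1$-Lipschitz $\varphi$ and every $t \in (0, 1/2)$,
\[
\P_{Z,G}\left( \left| \varphi(\hat\theta_{\textup{LR}}(\lambda)) - \E_\rho\left[ \varphi(\hat\theta^\rho) \right] \right| \ge t \right) \le C\, t^{-13}\, d^{-1/8},
\]
with $C$ depending only on $\evmin{\Sigma}$, $\opnorm{\Sigma}$, $\sigma^2$ and $n/d = \Theta(1)$. Since $\varphi$ is linear, $\E_\rho[\varphi(\hat\theta^\rho)] = \varphi(\E_\rho[\hat\theta^\rho]) = \varphi\big( (\Sigma + \tau(\lambda) I)^{-1} \Sigma \theta^* \big) = \mathcal C^\Sigma(\lambda)/\opnorm{\Sigma}$, exactly as in \eqref{eq:ErhoC}. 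Taking $\varphi = \mathcal C(\cdot)/\opnorm{\Sigma}$ and absorbing $\opnorm{\Sigma} \ge \tr(\Sigma)/(2d) = 1$ into $t$ (which only improves the bound and keeps $t \in (0,1/2)$ admissible) yields the statement after redefining $C$.

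The argument is therefore essentially a transcription of the proof of Theorem \ref{thm:C}, and the only genuine work is to make sure that Theorem 2.4 of \cite{han2023distribution} is literally applicable: (i) that linear/Lipschitz functionals of the ridge solution lie in the class of test functions it covers; (ii) that the normalizations (trace scaling, noise variance, the $1/\sqrt{2d}$ factor in the Gaussian sequence model) are matched between our conventions and theirs; and (iii) that the rate $C t^{-13} d^{-1/8}$ is exactly what that theorem outputs for the choice $\delta = 1/72$ — recalling that, by Proposition 10.3 of \cite{han2023distribution}, this restriction on $\theta^*$ discards only a $C e^{-n^{2\delta}/C}$ fraction of unit-norm signals, so Assumption \ref{ass:theta^*} is essentially generic. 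I expect the main obstacle to be purely this bookkeeping of constants and normalizations; there is no new probabilistic content beyond what \cite{han2023distribution} already provides.
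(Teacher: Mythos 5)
Your proposal is correct and follows essentially the same route as the paper: the paper also obtains Theorem \ref{thm:Cdatageneral} by rerunning the proof of Theorem \ref{thm:C} verbatim (linearity of $\mathcal C(\theta)=\theta^\top P_y\Sigma\theta^*$, the Gaussian sequence model, absorption of $\opnorm{\Sigma}\ge 1$ into $t$) with Theorem 2.3 of \cite{han2023distribution} replaced by its sub-Gaussian counterpart, Theorem 2.4, under the signal condition of Assumption \ref{ass:theta^*} with $\delta=1/72$, which yields the $C t^{-13} d^{-1/8}$ rate. Your bookkeeping remarks (test-function class, normalizations, genericity of $\theta^*$ via Proposition 10.3 of \cite{han2023distribution}) match the paper's brief justification, so there is nothing missing beyond what the paper itself leaves to the cited reference.
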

\begin{proposition}\label{lemma:Lgeneral}
    Let Assumptions \ref{ass:data2} and \ref{ass:theta^*} hold, and let $n = \Theta(d)$. Let $\hat \theta_{\textup{LR}}(\lambda)$ be defined as in \eqref{eq:hatthetalambda}, and let $\mathcal L(\hat \theta_{\textup{LR}}(\lambda))$ be the in-distribution test loss of the model $f_{\textup{LR}}(\hat \theta_{\textup{LR}}(\lambda), \cdot )$ as defined in \eqref{eq:lrmodel}. Then, , for any $\lambda > 0$,
    we have that, for every $t \in (0, 1/2)$,
    \begin{equation}
    \P_{Z, G} \left( \left| \mathcal L(\hat \theta_{\textup{LR}}(\lambda)) -\mathcal L^\Sigma(\lambda) \right| \geq t \right)  \leq C t^{-c} d^{-1 / 6.5},
    \end{equation}
    where $\mathcal L^\Sigma(\lambda)$ is defined in \eqref{eq:LSigmatau}, and $C$ and $c$ are positive absolute constants.
\end{proposition}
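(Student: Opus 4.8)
The plan is to repeat, almost verbatim, the argument used in the proof of Proposition~\ref{lemma:L}, replacing the Gaussian-design concentration result of \cite{han2023distribution} (their Theorem~3.1 for normal designs) by its counterpart for sub-Gaussian designs that become coordinate-wise i.i.d.\ after whitening by $\Sigma^{-1/2}$ -- precisely the structure granted by Assumption~\ref{ass:datageneral}. With a quadratic loss the in-distribution test loss of the ridge estimator is
\begin{equation*}
\mathcal L(\hat \theta_{\textup{LR}}(\lambda)) = \sigma^2 + \norm{\Sigma^{1/2}\bigl( \hat \theta_{\textup{LR}}(\lambda) - \theta^* \bigr)}_2^2,
\end{equation*}
which is exactly the prediction-risk functional analyzed in Section~3 of \cite{han2023distribution}. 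Since this functional is quadratic (not Lipschitz) in $\hat \theta_{\textup{LR}}(\lambda)$, one cannot push it through the Lipschitz-test-function bound used for $\mathcal C$ in Theorem~\ref{thm:C}; one must invoke the dedicated risk statement (their Theorem~3.1) instead, which is why the proof of Proposition~\ref{lemma:L} already proceeds this way.

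\paragraph{Steps.} First I would check that Assumption~\ref{ass:datageneral} supplies the structural hypotheses required by \cite{han2023distribution}: $\Sigma$ invertible with $\evmin{\Sigma}=\Omega(1)$, $\evmax{\Sigma}=\mathcal O(1)$, $\tr(\Sigma)=2d$, the proportional regime $n=\Theta(d)$, and rows of $Z$ whose whitenings $\Sigma^{-1/2}z_i$ have independent, mean-$0$, unit-variance, sub-Gaussian entries -- these match requirements A--B of Section~2.2 of \cite{han2023distribution}. Second, I would verify that Assumption~\ref{ass:theta^*}, stated with $\delta=1/72$, is precisely the $\ell_\infty$-incoherence condition that their general CLT imposes on the ridge-shrunk signal $\Sigma^{1/2}\tau(\Sigma+\tau I)^{-1}\theta^*$; by their Proposition~10.3 it fails only for a fraction $Ce^{-n^{2\delta}/C}$ of unit-norm $\theta^*$, so it is essentially costless. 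Third, I would apply their Theorem~3.1 to the risk functional, which in this sub-Gaussian regime (with the exponents induced by $\delta=1/72$) yields a polynomial bound $\P(|\mathcal L(\hat \theta_{\textup{LR}}(\lambda)) - m_n| \ge t) \le C t^{-c} d^{-1/6.5}$, where $m_n$ is their deterministic equivalent. Fourth, I would identify $m_n$ with $\mathcal L^\Sigma(\lambda)$ of \eqref{eq:LSigmatau}: this is the same bookkeeping as in the proof of Proposition~\ref{lemma:L}, using the fixed-point equation \eqref{eq:tau} defining $\tau(\lambda)$ together with the self-consistency identity \eqref{eq:gamma}, which expresses $n\gamma^2/(2d)$ -- the signal-plus-noise variance of the Gaussian sequence model \eqref{eq:Gaussseq} -- in the closed form \eqref{eq:LSigmatau}.

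\paragraph{Main obstacle.} The delicate part is not any single estimate but the translation between the fixed-point system of \cite{han2023distribution} and the closed-form objects used here: one must confirm that their effective-regularization parameter coincides with $\tau(\lambda)$ from \eqref{eq:tau} under the present normalizations (the $2d$ versus $d$ in $\tr(\Sigma)=2d$, and the $n\lambda$ versus $\lambda$ scaling in \eqref{eq:hatthetalambda}), and then that their deterministic risk equivalent matches \eqref{eq:LSigmatau} term by term. A second point requiring care is reading off the precise exponents $t^{-c}$ and $d^{-1/6.5}$ from their Theorem~3.1 given the choice $\delta=1/72$; since \cite{han2023distribution} states the risk concentration through moment bounds on a low-dimensional test function, the final rate is the outcome of optimizing those bounds rather than of any new probabilistic input. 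Everything else follows verbatim from the proof of Proposition~\ref{lemma:L}, and the bound $\P(|\mathcal L(\hat \theta_{\textup{LR}}(\lambda)) - \mathcal L^\Sigma(\lambda)| \ge t) \le C t^{-c} d^{-1/6.5}$ is obtained after absorbing the constant $\opnorm{\Sigma}=\mathcal O(1)$ into $t$.
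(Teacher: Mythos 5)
Your proposal is correct and follows essentially the same route as the paper: the paper's own argument for this generalized statement is precisely to check that Assumption \ref{ass:datageneral} matches requirements A--B of Section 2.2 of \cite{han2023distribution}, use Assumption \ref{ass:theta^*} (with $\delta = 1/72$, negligible by their Proposition 10.3), and invoke their Theorem 3.1 for the prediction risk, identifying the deterministic equivalent with $\mathcal L^\Sigma(\lambda)$ via \eqref{eq:tau} and \eqref{eq:gamma} exactly as in the proof of Proposition \ref{lemma:L}. Your observations about the quadratic (non-Lipschitz) nature of the risk functional and the normalization bookkeeping are consistent with how the paper handles the Gaussian case, so no gap remains.
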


\section{Proofs for Random Features}\label{app:rf}

\begin{lemma}\label{lemma:facts}
    We have that
    \begin{equation}
        \opnorm{V} = \bigO{\sqrt{\frac{p}{d}}},
    \end{equation}
        \begin{equation}
        \opnorm{Z} = \bigO{\sqrt{d}},
    \end{equation}
    with probability at least $1 - 2 \exp \left( -cd \right)$ over $V$ and $Z$, where $c$ is an absolute constant. Furthermore, for every $i \in [n]$, we have  
    \begin{equation}
        \subGnorm{\norm{z_i}_2 - \sqrt{2d}} = \bigO{1}.
    \end{equation}
\end{lemma}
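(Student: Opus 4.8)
The plan is to prove the three estimates independently, each via a routine concentration argument, and take a union bound at the end.

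\textbf{Operator norm of $V$.} First I would observe that $\sqrt{2d}\,V$ is a $p\times 2d$ matrix with i.i.d.\ standard Gaussian entries. The classical bound on the largest singular value of a Gaussian matrix (e.g.\ Corollary 5.35 in \cite{vershrandmat}) then gives $\opnorm{\sqrt{2d}\,V}\le \sqrt p+\sqrt{2d}+t$ with probability at least $1-2\exp(-t^2/2)$. Choosing $t=\sqrt d$ and using that Assumption \ref{ass:overparam} together with $n=\Theta(d)$ forces $p=\omega(d)$, so that $\sqrt{2d}+\sqrt d=o(\sqrt p)$, yields $\opnorm{V}=\bigO{\sqrt{p/d}}$ with probability at least $1-2\exp(-d/2)$.

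\textbf{Operator norm of $Z$.} Since $\mathcal P_{XY}$ is mean-$0$ and Lipschitz concentrated (Assumption \ref{ass:data2}), for every unit vector $u\in\R^{2d}$ the map $z\mapsto z^\top u$ is $1$-Lipschitz, hence $\subGnorm{z_i^\top u}=\subGnorm{z_i^\top u-\E[z_i^\top u]}=\bigO{1}$, uniformly over $u$. Therefore, for unit vectors $u\in\R^{2d}$ and $v\in\R^n$, the quantity $v^\top Z u=\sum_{i=1}^n v_i\,z_i^\top u$ is a sum of independent mean-$0$ sub-Gaussian terms with $\subGnorm{v^\top Z u}^2\le C\sum_i v_i^2=C$. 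I would then run the standard $\varepsilon$-net argument over the unit spheres of $\R^{2d}$ and $\R^n$, whose product net has cardinality $\exp(\bigO{n+d})=\exp(\bigO{d})$ since $n=\Theta(d)$; combining the sub-Gaussian tail with a union bound and converting back to the operator norm gives $\opnorm{Z}=\bigO{\sqrt{n+d}}=\bigO{\sqrt d}$ with probability at least $1-2\exp(-cd)$.

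\textbf{Concentration of $\norm{z_i}_2$.} Here $z\mapsto\norm{z}_2$ is $1$-Lipschitz, so Lipschitz concentration immediately gives $\subGnorm{\norm{z_i}_2-\E\norm{z_i}_2}=\bigO{1}$. It then remains to show that $\E\norm{z_i}_2$ lies within $\bigO{1}$ of $\sqrt{2d}$: by Jensen, $\E\norm{z_i}_2\le\sqrt{\E\norm{z_i}_2^2}=\sqrt{\tr(\Sigma)}=\sqrt{2d}$, while $(\E\norm{z_i}_2)^2=\E\norm{z_i}_2^2-\Var(\norm{z_i}_2)\ge 2d-\bigO{1}$ because the variance of a random variable with sub-Gaussian norm $\bigO{1}$ is $\bigO{1}$. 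Hence $|\E\norm{z_i}_2-\sqrt{2d}|=\bigO{1/\sqrt d}$, and since a constant shift changes the sub-Gaussian norm by at most $\bigO{1}$, the triangle inequality for $\subGnorm{\cdot}$ yields $\subGnorm{\norm{z_i}_2-\sqrt{2d}}=\bigO{1}$.

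None of these steps is a genuine obstacle. The only point that needs mild care is the bound on $\opnorm{Z}$: because Assumption \ref{ass:data2} controls only $\tr(\Sigma)$ and not $\opnorm{\Sigma}$, one cannot directly invoke an off-the-shelf bound for matrices with isotropic sub-Gaussian rows, and must instead carry out the net argument using the uniform sub-Gaussianity of the one-dimensional marginals $z_i^\top u$.
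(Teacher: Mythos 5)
Your proposal is correct, and two of its three parts coincide with the paper's argument: the bound on $\opnorm{V}$ is the same rescale-and-cite step (the paper invokes Theorem 4.4.5 of \cite{vershynin2018high} for sub-Gaussian entries where you use the Gaussian-specific Corollary 5.35 of \cite{vershrandmat}; either works since $V$ is Gaussian and $p=\omega(d)$), and the treatment of $\norm{z_i}_2$ is essentially identical (Lipschitz concentration around $\E\norm{z_i}_2$, then comparing $\E\norm{z_i}_2$ to $\sqrt{2d}$ via $\E\norm{z_i}_2^2=\tr(\Sigma)=2d$ and an $\bigO{1}$ variance). The only genuine difference is in the bound on $\opnorm{Z}$: the paper does \emph{not} run a net argument, but notes that Lipschitz concentrated mean-zero rows are sub-Gaussian, applies the covariance concentration bound of Remark 5.40 in \cite{vershrandmat} to get $\opnorm{Z^\top Z - n\Sigma}=\bigO{d}$, and then uses Weyl together with $\opnorm{\Sigma}=\bigO{1}$, the latter imported from Lemma C.1 of \cite{bombari2024privacy}. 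On this point your closing caveat is slightly overstated: the off-the-shelf row-sub-Gaussian bound \emph{can} be invoked here, because $\opnorm{\Sigma}=\bigO{1}$ follows from exactly the observation you make yourself, namely that $\subGnorm{z^\top u}=\bigO{1}$ uniformly over unit $u$, hence $u^\top\Sigma u=\Var(z^\top u)=\bigO{1}$. Your direct two-sided $\varepsilon$-net argument is nevertheless valid and has the advantage of being self-contained (no appeal to the external covariance bound or to the prior-work lemma), at the cost of redoing a standard computation; both routes deliver $\opnorm{Z}=\bigO{\sqrt{d}}$ with probability $1-2\exp(-cd)$, and the final union bound matches the stated lemma.
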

\begin{proof}
    $V$ has independent, mean-0, unit variance, sub-Gaussian entries. Then, the first statement is a direct consequences of Theorem 4.4.5 of \cite{vershynin2018high} and of the scaling $d = o(p)$.

    By Assumption \ref{ass:data2}, we have that $Z$ has i.i.d. mean-0, Lipschitz concentrated rows. This property also implies that the rows are i.i.d.\ sub-Gaussian. Thus, by Remark 5.40 in \cite{vershrandmat}, we have that
    \begin{equation}
        \opnorm{Z^\top Z - n \Sigma} = \bigO{n \frac{d}{n}} = \bigO{d},
    \end{equation}
    with probability at least $1 - 2 \exp \left( -c_1 d \right)$. Then, conditioning on this high probability event, by Weyl's inequality, we have
    \begin{equation}\label{eq:evminXeq1}
        \opnorm{Z^\top Z} \leq \opnorm{n \Sigma} + \opnorm{Z^\top Z - n \Sigma} = \bigO{d},
    \end{equation}
    where the last step follows from the argument used to prove $\opnorm{\Sigma} = \bigO{1}$ in Lemma C.1 in \cite{bombari2024privacy}.

    For the last statement, we have 
    \begin{equation}
        2d = \tr(\Sigma) = \tr\left(\E\left[ zz^\top \right]\right) = \E\left[ \tr \left( zz^\top \right) \right] = \E\left[ \tr \left( z^\top z \right) \right] = \E\left[ \norm{z}_2^2 \right],
    \end{equation}
    where we used the cyclic property of the trace. Furthermore, we have
    \begin{equation}\label{eq:concnormz}
        \subGnorm{\norm{z}_2 - \E\left[ \norm{z}_2 \right]} = \bigO{1},
    \end{equation}
    since $z$ is Lipschitz concentrated. Then,
    \begin{equation}
        0 \leq 2d - \E\left[\norm{z}_2\right]^2 = \E \left[ \left(\norm{z}_2 - \E\left[ \norm{z}_2 \right]\right)^2 \right] \leq C_1,
    \end{equation}
    for some absolute constant $C_1$. Thus, as $\sqrt{1 - x} \geq 1 - x$ for $x \in [0, 1]$, we obtain
    \begin{equation}
        1 - \frac{C_1}{2d} \leq \sqrt{1 - \frac{C_1}{2d}} \leq \frac{\E\left[\norm{z}_2\right]}{\sqrt{2d}} \leq 1.
    \end{equation}
    Plugging this last result in \eqref{eq:concnormz} 
    gives the desired claim.
\end{proof}

\begin{lemma}\label{lemma:long}
We have that, denoting with ${\tilde \mu}^2 =  \sum_{k \geq 2} \mu_k^2$, with $\mu_k$ denoting the $k$-th Hermite coefficient of $\phi$,
\begin{equation}
    \opnorm{\E_{V} \left[ \Phi \Phi^\top \right] -  p \left( \mu_1^2 \frac{ZZ^\top}{2d} + {\tilde \mu}^2 I \right)} = \bigO{\frac{p \log^3 d}{\sqrt d}},
\end{equation}
with probability at least $1 - 2 \exp \left( -c \log^2 d \right)$ over $Z$, where $c$ is an absolute constant.
\end{lemma}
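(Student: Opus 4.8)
The plan is to compute $\E_V[\Phi\Phi^\top]$ entrywise and then bound the operator-norm error of the claimed approximation. Write the $(i,j)$ entry of $\Phi\Phi^\top$ as $\sum_{k=1}^p \phi(v_k^\top z_i)\phi(v_k^\top z_j)$, where $v_k$ is the $k$-th row of $V$; since the rows are i.i.d., $\E_V[(\Phi\Phi^\top)_{ij}] = p\,\E_v[\phi(v^\top z_i)\phi(v^\top z_j)]$. For a fixed pair $(z_i,z_j)$, the vector $(v^\top z_i, v^\top z_j)$ is jointly Gaussian with covariance determined by $\|z_i\|_2^2/(2d)$, $\|z_j\|_2^2/(2d)$, and $z_i^\top z_j/(2d)$. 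Using the Hermite expansion $\phi=\sum_k \mu_k h_k$ together with the orthogonality of Hermite polynomials under the Gaussian measure, the expectation factorizes as a power series in the normalized inner product. First I would record the clean formula: when $\|z_i\|_2,\|z_j\|_2$ are close to $\sqrt{2d}$, the pre-activations are (approximately) unit-variance, and $\E_v[\phi(v^\top z_i)\phi(v^\top z_j)] \approx \sum_{k\ge 1}\mu_k^2 \big(z_i^\top z_j/(2d)\big)^k$. The linear ($k=1$) term gives $\mu_1^2 z_i^\top z_j/(2d)$, i.e.\ the $(i,j)$ entry of $\mu_1^2 ZZ^\top/(2d)$; the diagonal ($i=j$) picks up $\sum_{k\ge 1}\mu_k^2 = \mu_1^2\|z_i\|_2^2/(2d) + \tilde\mu^2$ in the idealized unit-variance case, which is why the claimed approximant is $p(\mu_1^2 ZZ^\top/(2d) + \tilde\mu^2 I)$.

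Next I would control the two sources of error. The first is the deviation of $\|z_i\|_2^2/(2d)$ from $1$ and of $z_i^\top z_j/(2d)$ from a small quantity: by Lemma~\ref{lemma:facts}, $\|z_i\|_2 - \sqrt{2d}$ is sub-Gaussian with $O(1)$ norm, so a union bound over $i\in[n]$ gives $\big|\|z_i\|_2^2/(2d) - 1\big| = O(\log d/\sqrt d)$ for all $i$ with probability $1 - 2\exp(-c\log^2 d)$; similarly $|z_i^\top z_j|/(2d) = O(\log d/\sqrt d)$ off-diagonal, using $\opnorm Z = O(\sqrt d)$ or a direct sub-Gaussian tail for $z_i^\top z_j$. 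Feeding these into the power series (and using that $\phi$, hence its Hermite tail, is Lipschitz so $\sum_k \mu_k^2 = \|\phi\|^2_{L^2(\gamma)} < \infty$ and the higher-order terms $\big(z_i^\top z_j/(2d)\big)^k$ for $k\ge 2$ are $O(\log^2 d/d)$, summable) shows that the entrywise error between $\E_V[\Phi\Phi^\top]/p$ and $\mu_1^2 ZZ^\top/(2d) + \tilde\mu^2 I$ is $O(\log^2 d/d)$ on the diagonal and off-diagonal entries close to the diagonal, with a similar bound elsewhere.

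The last step — and the one I expect to be the main obstacle — is converting the entrywise control into an operator-norm bound with the stated rate $O(p\log^3 d/\sqrt d)$. Writing $E := \E_V[\Phi\Phi^\top]/p - (\mu_1^2 ZZ^\top/(2d) + \tilde\mu^2 I)$, one natural route is to expand $E$ along the Hermite degrees: $E = \sum_{k\ge 2}\mu_k^2 \big((ZZ^\top/(2d))^{\circ k} - (\text{diagonal correction})\big) + (\text{variance-mismatch terms})$, where $\circ k$ is the $k$-th Hadamard power. The key inequality is that for a psd matrix $A$ with $A \succeq 0$, $\opnorm{A^{\circ k}} \le \opnorm{A}\,\maxnorm{A}^{k-1}$ (Schur product bound), applied with $A = ZZ^\top/(2d)$: here $\opnorm{A} = O(1)$ by \eqref{eq:evminXeq1}, and $\maxnorm{A} = O(\log^2 d/d)$ off-diagonal while the diagonal entries are $\approx 1$ — so one first peels off the diagonal (absorbed into the $\tilde\mu^2 I$ and the variance-mismatch correction, contributing an $O(\log^2 d/d)\cdot\opnorm{I}$ term, which is actually tiny) and then bounds the off-diagonal Hadamard powers by $O(\log^2 d/d)^{k-1}$, summing the geometric-type series in $k$ against $\sum_k\mu_k^2 < \infty$. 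The variance-mismatch terms (from $\|z_i\|_2^2/(2d)\ne 1$) are handled by writing them as $D_1 (ZZ^\top/(2d))^{\circ k} D_2$ with diagonal $D_1,D_2$ of bounded operator norm and the perturbation $\|D_\cdot - I\|_{\mathrm{op}} = O(\log d/\sqrt d)$; multiplying $\opnorm{ZZ^\top/(2d)} = O(1)$ by this $O(\log d/\sqrt d)$ factor and by a stray $O(\log^2 d)$ from the worst-case degree truncation gives the $\log^3 d/\sqrt d$ in the statement. Multiplying through by $p$ yields the claim; the place to be careful is making sure the Hermite-tail series genuinely converges uniformly in the relevant range of inner products, which is exactly where Assumption~\ref{ass:activation} (Lipschitz $\phi$, hence finite $\sum_k\mu_k^2$) is used.
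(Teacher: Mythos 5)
Your setup (entrywise Hermite expansion, then controlling the variance mismatch with diagonal matrices) matches the paper's start, but the step you yourself flag as the main obstacle — converting entrywise smallness into an operator-norm bound — is where the argument breaks. The inequality you invoke, $\opnorm{A^{\circ k}} \leq \opnorm{A}\,\maxnorm{A}^{k-1}$, is the Schur-multiplier bound $\opnorm{A \circ B} \leq (\max_i A_{ii})\opnorm{B}$, which requires the multiplier to be p.s.d.\ and uses its \emph{diagonal} entries. For $A = ZZ^\top/(2d)$ those diagonal entries are $\approx 1$, so the bound gives no decay in $k$; and once you ``peel off the diagonal'' the hollow matrix is no longer p.s.d., so you cannot rerun the bound with the maximal off-diagonal entry (which, incidentally, is $\bigO{\log d/\sqrt d}$, not $\bigO{\log^2 d/d}$). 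The claimed estimate is in fact false at $k=2$: the off-diagonal part of $\left(ZZ^\top/(2d)\right)^{\circ 2}$ has entries $\approx \tr(\Sigma^2)/(4d^2) = \Theta(1/d)$, all nonnegative, so testing with the all-ones vector gives operator norm $\Omega(n/d) = \Omega(1)$, not $o(1)$.

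This points to the second, related gap: you never use that $\phi$ is odd (Assumption \ref{ass:activation}), and your error sum starts at $k \geq 2$. Oddness is load-bearing here, not cosmetic: it forces $\mu_2 = 0$ (and all even coefficients of the rescaled activations $\phi^{(i)}$ to vanish), so the first surviving off-diagonal Hadamard power has degree $3$. Without that, the degree-$2$ term would contribute $\Theta(p)$ in operator norm and the lemma as stated (error $o(p)$) would be false. With it, the paper's simple route works: bound the off-diagonal part of the $k\geq 3$ terms by its Frobenius norm, $\opnorm{\cdot} \leq n\,\max_{i\neq j}\bigl(|z_i^\top z_j|/(\norm{z_i}_2\norm{z_j}_2)\bigr)^3 \cdot \sum_{k\geq 3}(\mu_k^{(i)})^2$-type control, which with $n=\Theta(d)$ and $\max_{i\neq j}|z_i^\top z_j|/(\norm{z_i}_2\norm{z_j}_2) = \bigO{\log d/\sqrt d}$ yields exactly the $\bigO{\log^3 d/\sqrt d}$ rate (times $p$), while the diagonal of the tail and the degree-$1$ mismatch are handled by the coefficient-perturbation estimates. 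To repair your proposal, replace the Schur-product step by this Frobenius-norm argument and invoke oddness to discard all even-degree terms; the rest of your outline then aligns with the paper's proof.
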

\begin{proof}
For all $i \in [n]$, we define the functions $\phi^{(i)}: \R \to \R$ as $\phi^{(i)}(\cdot) = \phi(\norm{z_i}_2 \cdot / \sqrt{2d})$. Note that $\phi^{(i)}$ is odd, since $\phi$ is odd by Assumption \ref{ass:activation}. Thus, denoting with $\mu_k^{(i)}$ the $k$-th Hermite coefficient of $\phi^{(i)}$, for every $i \in [n]$, we have that $\mu_k^{(i)} = 0$ for all even $k$. This implies that, by denoting with $v$ a random vector distributed as the rows of $V$, \emph{i.e.}, $\sqrt{2d} \, v$ is a standard Gaussian vector, we have
\begin{equation}
\begin{aligned}
    \left[ \E_{V} \left[ \Phi \Phi^\top \right] \right]_{ij} &= p \, \E_{v} \left[ \phi(z_i^\top v) \phi(z_j^\top v) \right] \\
    &= p \, \E_{v} \left[ \phi^{(i)} \left( \frac{z_i^\top}{\norm{z_i}_2} \sqrt{2d} v \right) \phi^{(j)} \left( \frac{z_j^\top}{\norm{z_j}_2} \sqrt{2d} v \right)\right] \\
    &= p \sum_{k = 0}^{+ \infty} \mu_k^{(i)} \mu_k^{(j)} \left( \frac{z_i^\top z_j }{\norm{z_i}_2\norm{z_j}_2}\right)^k \\
    &= p \mu_1^{(i)} \mu_1^{(j)} \frac{z_i^\top z_j }{\norm{z_i}_2\norm{z_j}_2} + p \sum_{k \geq 3} \mu_k^{(i)} \mu_k^{(j)} \left( \frac{z_i^\top z_j }{\norm{z_i}_2\norm{z_j}_2}\right)^k.
\end{aligned}
\end{equation}
Then, denoting with $D_k \in \R^{n \times n}$ the diagonal matrix containing $\mu_k^{(i)} / \norm{z_i}_2^k$ in its $i$-th entry, we can write
\begin{equation}\label{eq:EwithDk}
    \E_{V} \left[ \Phi \Phi^\top \right] = p D_1 ZZ^\top D_1 + p \sum_{k \geq 3} D_k \left( ZZ^\top\right)^{\circ k} D_k.
\end{equation}

Notice that, due to the last statement in Lemma \ref{lemma:facts}, we have that, jointly for all $i \in [n]$,
\begin{equation}\label{eq:allnormsconcentrate}
    \left| \frac{\norm{z_i}_2}{\sqrt{2d}} - 1 \right| = \bigO{\frac{\log d}{\sqrt d}},
\end{equation}
with probability at least $1 - 2 \exp (-c_1 \log^2 d)$. Then, conditioning on such high probability event and denoting with $\rho$ a standard Gaussian random variable, for all $i \in [n]$ we have
\begin{equation}\label{eq:mu1conc}
\begin{aligned}
    \left| \mu_1^{(i)} - \mu_1 \right| &= \left| \E_\rho \left[ \rho \phi^{(i)} (\rho) \right] - \E_\rho \left[ \rho \phi(\rho) \right] \right| \\
    &= \left| \E_\rho \left[ \rho \left( \phi \left( \frac{\norm{z_i}_2}{\sqrt{2d}} \rho  \right) -  \phi(\rho) \right) \right] \right|\\
    &= \left| \E_\rho \left[ \rho \left( \phi \left( \left( \frac{\norm{z_i}_2}{\sqrt{2d}} - 1 \right) \rho + \rho \right) - \phi(\rho) \right) \right]  \right| \\
    &\leq \E_\rho \left[ \left| \rho \right| \left| \phi \left( \left( \frac{\norm{z_i}_2}{\sqrt{2d}} - 1 \right) \rho + \rho \right) - \phi(\rho) \right| \right] \\
    &\leq L \E_\rho \left[ \left| \rho \right| \left| \frac{\norm{z_i}_2}{\sqrt{2d}} - 1 \right| \left| \rho \right| \right] \\
    &= L \left| \frac{\norm{z_i}_2}{\sqrt{2d}} - 1 \right| \E_\rho \left[  \rho^2 \right] \\
    &= \bigO{\frac{\log d}{\sqrt d}},
\end{aligned}
\end{equation}
where we used Jensen's inequality in the fourth line, the $L$-Lipschitzness of $\phi$ in the fifth line, and \eqref{eq:allnormsconcentrate} in the last step. With a similar approach, denoting with $\norm{\cdot}_{L^2}$ the $L^2$ norm with respect to the Gaussian measure, we have that for all $i \in [n]$
\begin{equation}\label{eq:L2conc}
\begin{aligned}
    \left| \norm{\phi^{(i)}}_{L^2} - \norm{\phi}_{L^2} \right| &\leq \norm{\phi^{(i)} - \phi}_{L^2} \\
    &= \E_\rho \left[ \left(\phi^{(i)}(\rho) - \phi(\rho)\right)^2 \right]^{1/2} \\
    &= \E_\rho \left[ \left( \phi \left( \left( \frac{\norm{z_i}_2}{\sqrt{2d}} - 1 \right) \rho + \rho \right) - \phi(\rho)\right)^2 \right]^{1/2} \\
    &\leq L \left| \frac{\norm{z_i}_2}{\sqrt{2d}} - 1 \right| \E_\rho \left[  \rho^2 \right]^{1/2} \\
    &= \bigO{\frac{\log d}{\sqrt d}},
\end{aligned}
\end{equation}
which directly implies that, for all $i \in [n]$, $\norm{\phi^{(i)}}_{L^2} = \sum_{k \geq 0} \left(\mu_k^{(i)}\right)^2 = \Theta(1)$, and that
\begin{equation}\label{eq:mu3conc}
    \left| \sum_{k \geq 3}  \left(\mu_k^{(i)}\right)^2 -  \sum_{k \geq 3} \mu_k^2 \right| \leq \left| \norm{\phi^{(i)}}^2_{L^2} - \norm{\phi}^2_{L^2} \right| + \left| \left(\mu^{(i)}_1\right)^2  - \mu_1^2 \right| = \bigO{\frac{\log d}{\sqrt d}}.
\end{equation}

Thus, we are ready to estimate the operator norm of the off-diagonal part of the second term on the RHS of \eqref{eq:EwithDk}, specifically
\begin{equation}\label{eq:offdiag}
    \begin{aligned}
        &\opnorm{\sum_{k \geq 3} D_k \left( ZZ^\top\right)^{\circ k} D_k - \diag \left( \sum_{k \geq 3} D_k \left( ZZ^\top\right)^{\circ k} D_k \right)} \\
        &\qquad \leq \sum_{k \geq 3} \norm{ D_k \left( ZZ^\top\right)^{\circ k} D_k - \diag \left( D_k \left( ZZ^\top\right)^{\circ k} D_k \right)}_F \\
        &\qquad \leq \sum_{k \geq 3} \max_{i \neq j} \left( \frac{ \left| z_i^\top z_j \right| }{\norm{z_i}_2\norm{z_j}_2}\right)^k \left( \sum_{i \in [n], j \in [n]} \left(\mu_k^{(i)} \mu_k^{(j)}\right)^2 \right)^{1/2} \\
        &\qquad \leq  \max_{i \neq j} \left( \frac{ \left| z_i^\top z_j \right| }{\norm{z_i}_2\norm{z_j}_2}\right)^3  \sum_{i = 0}^n \sum_{k \geq 3} \left(\mu_k^{(i)}\right)^2 \\
        &\qquad= \bigO{ \frac{1}{d^{3/2}} \log^3 d \, n} = \bigO{ \frac{\log^3 d}{\sqrt d}},
    \end{aligned}
\end{equation}
where in the first step we replaced the operator norm with the Frobenius norm, and used triangle inequality; in the fifth line we used that $\norm{z_i}_2 = \Theta(\sqrt d)$ for all $i \in [n]$ (true because of \eqref{eq:allnormsconcentrate}), and that jointly for all $i \neq j$ we have $\left| z_i^\top z_j \right| / \norm{z_j}_2  = \bigO{\log d}$ with probability at least $1 - 2 \exp (-c_2 \log^2 d)$ since the $z_i$-s are independent sub-Gaussian vectors (since they are mean-0 and Lipschitz concentrated). The diagonal part of the second term on the RHS of \eqref{eq:EwithDk} respects
\begin{equation}\label{eq:diag}
    \opnorm{\diag \left( \sum_{k \geq 3} D_k \left( ZZ^\top\right)^{\circ k} D_k \right) - {\tilde \mu}^2 I} = \max_{i\in [n]} \left| \sum_{k \geq 3}  \left(\mu_k^{(i)}\right)^2 -  \sum_{k \geq 3} \mu_k^2 \right| = \bigO{\frac{\log d}{\sqrt d}}, 
\end{equation}
because of \eqref{eq:mu3conc}. Lastly, notice that,
\begin{equation}\label{eq:linear1}
\begin{aligned}
    \opnorm{D_1 ZZ^\top D_1 - \mu_1^2 \frac{ZZ^\top}{2d}} &= \sup_{\norm{u}_2 = 1} \left| u^\top D_1 ZZ^\top D_1 u - \mu_1^2 u^\top \frac{ZZ^\top}{2d} u \right| \\
    &= \sup_{\norm{u}_2 = 1} \left| \norm{Z^\top D_1 u}_2^2 - \mu_1^2 \norm{\frac{Z^\top}{\sqrt{2d}} u}_2^2 \right| \\
    &\leq  \sup_{\norm{u}_2 = 1}  \left(  \norm{Z^\top D_1 u}_2 + \mu_1 \norm{\frac{Z^\top}{\sqrt{2d}} u}_2 \right) \sup_{\norm{u}_2 = 1}  \left(  \norm{Z^\top D_1 u - \mu_1 \frac{Z^\top}{\sqrt{2d}} u}_2 \right) \\
    &\leq \left( \opnorm{Z^\top D_1} + \mu_1 \opnorm{\frac{Z^\top}{\sqrt{2d}}} \right) \opnorm{Z^\top D_1 - \mu_1 \frac{Z^\top}{\sqrt{2d}}} \\
    &\leq \left( \opnorm{Z} \opnorm{D_1} + \mu_1 \frac{\opnorm{Z}}{\sqrt{2d}} \right) \opnorm{Z} \opnorm{D_1 - \frac{\mu_1}{\sqrt{2d}}}.
\end{aligned}
\end{equation}
By Lemma \ref{lemma:facts}, we have that $\opnorm{Z} = \bigO{\sqrt d}$ with probability at least $1 - 2 \exp \left( -c_2 d \right)$, and since $\norm{z_i}_2 = \Theta(\sqrt d)$ and $\mu_1^{(i)} = \bigO{1}$ for all $i \in [n]$ (true because of \eqref{eq:allnormsconcentrate} and \eqref{eq:mu1conc} respectively), we have that $\opnorm{D_1} = \bigO{1 / \sqrt d}$. Furthermore, we have
\begin{equation}
\begin{aligned}
     \opnorm{D_1 - \frac{\mu_1}{\sqrt{2d}}} &= \max_i \left| \frac{\mu_1^{(i)}}{\norm{z_i}_2} - \frac{\mu_1}{\sqrt{2d}} \right| \\
     &\leq \max_i \frac{1}{\norm{z_i}_2} \left( \left| \mu_1^{(i)} - \mu_1 \right| + \mu_1 \left| 1 - \frac{\norm{z_i}_2}{\sqrt{2d}} \right|\right) \\
     &= \bigO{\frac{\log d}{d}},
\end{aligned}
\end{equation}
where the last step is a consequence of \eqref{eq:allnormsconcentrate} and \eqref{eq:mu1conc}. Then, we have that \eqref{eq:linear1} reads
\begin{equation}\label{eq:linear2}
    \opnorm{D_1 ZZ^\top D_1 - \mu_1^2 \frac{ZZ^\top}{2d}} = \bigO{\frac{\log d}{\sqrt{d}}}.
\end{equation}
A
standard application of the triangle inequality to \eqref{eq:offdiag}, \eqref{eq:diag} and \eqref{eq:linear2} gives
\begin{equation}
    \opnorm{\left(D_1 ZZ^\top D_1 + \sum_{k \geq 3} D_k \left( ZZ^\top\right)^{\circ k} D_k \right) - \left( \mu_1^2 \frac{ZZ^\top}{2d} + {\tilde \mu}^2 I \right)} = \bigO{\frac{\log^3 d}{\sqrt d}},
\end{equation}
with probability at least $1 - 2 \exp \left( -c_3 \log^2 d \right)$ over $Z$ (where we used $\mu_2 = 0$ since $\phi$ is odd), which readily gives the thesis when plugged in \eqref{eq:EwithDk}.
\end{proof}

\begin{lemma}\label{lemma:conckernel}
    We have that
    \begin{equation}
        \opnorm{\Phi} = \bigO{\sqrt p},
    \end{equation}
    \begin{equation}
        \opnorm{\Phi \Phi^\top - \E_{V} \left[ \Phi \Phi^\top \right]} = \bigO{\sqrt {pd}},
    \end{equation}
    \begin{equation}
        \evmin{\Phi \Phi^\top} = \Omega \left( p \right),
    \end{equation}
    with probability at least $1 - 2 \exp \left( -c \log^2 d \right)$ over $Z$ and $V$, where $c$ is an absolute constant.
\end{lemma}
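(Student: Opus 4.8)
The plan is to write $\Phi \Phi^\top = \sum_{k=1}^p w_k w_k^\top$, where $w_k := \phi(Z v_k) \in \R^n$ and $v_k$ is the $k$-th row of $V$, so that — conditionally on $Z$ — $\Phi\Phi^\top$ is a sum of $p$ i.i.d.\ rank-one positive semi-definite matrices with common mean $\E_V[\Phi\Phi^\top]/p$. I would first establish the operator-norm concentration $\opnorm{\Phi\Phi^\top - \E_V[\Phi\Phi^\top]} = \bigO{\sqrt{pd}}$, and then deduce the bound on $\opnorm{\Phi}$ and the lower bound on $\evmin{\Phi\Phi^\top}$ by combining this estimate with Lemma \ref{lemma:long} (which identifies $\E_V[\Phi\Phi^\top]$ up to an error $\bigO{p\log^3 d / \sqrt d}$) and Lemma \ref{lemma:facts}.

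For the concentration step, I would condition on the high-probability event of Lemma \ref{lemma:facts} on which $\opnorm{Z} = \bigO{\sqrt d}$ and all $\norm{z_i}_2 = \Theta(\sqrt d)$. On this event the map $u \mapsto \phi(Z u/\sqrt{2d})$ is $\bigO{1}$-Lipschitz (since $\phi$ is $L$-Lipschitz by Assumption \ref{ass:activation} and $\opnorm{Z}/\sqrt{2d} = \bigO{1}$) and $\sqrt{2d}\,v_k$ is a standard Gaussian vector, so each $w_k$ is a Lipschitz image of a Gaussian; hence $w_k - \E_V[w_k]$ is sub-Gaussian in $\R^n$ with $\bigO{1}$ norm, giving $\opnorm{\E_V[w_k w_k^\top]} = \bigO{1}$ and $\norm{w_k}_2 = \bigO{\sqrt d}$ with sub-Gaussian tails. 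A union bound over the $p = \poly(d)$ indices (possible since $\log p = \Theta(\log n) = \Theta(\log d)$ by Assumption \ref{ass:overparam}) then yields $\max_k \norm{w_k}_2 = \bigO{\sqrt d \log d}$ with probability $1 - 2\exp(-c\log^2 d)$. On this truncated event I would apply a matrix concentration inequality to $\sum_{k=1}^p (w_k w_k^\top - \E_V[w_k w_k^\top])$ — either matrix Bernstein (each term has operator norm $\bigO{d\log^2 d}$ and matrix-variance proxy bounded by $\bigO{d\log^2 d}\cdot\opnorm{\E_V[\Phi\Phi^\top]} = \bigO{pd\log^2 d}$), or, more sharply, a covariance-estimation bound for vectors with the convex concentration property, which removes logarithmic factors. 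Either way the deviation is of order $\sqrt{pd}$ (logarithms being innocuous since $p = \omega(n\log^4 n)$ makes $\sqrt{pd} = o(p)$ with room to spare).

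Given this, the remaining claims are short. Lemma \ref{lemma:long} together with $\opnorm{ZZ^\top} = \opnorm{Z}^2 = \bigO{d}$ gives $\opnorm{\E_V[\Phi\Phi^\top]} = \bigO{p}$, whence $\opnorm{\Phi}^2 = \opnorm{\Phi\Phi^\top} \le \opnorm{\E_V[\Phi\Phi^\top]} + \opnorm{\Phi\Phi^\top - \E_V[\Phi\Phi^\top]} = \bigO{p} + o(p)$, i.e.\ $\opnorm{\Phi} = \bigO{\sqrt p}$. For the smallest eigenvalue, Weyl's inequality gives $\evmin{\Phi\Phi^\top} \ge \evmin{\E_V[\Phi\Phi^\top]} - \opnorm{\Phi\Phi^\top - \E_V[\Phi\Phi^\top]}$; since $\mu_1^2 ZZ^\top/(2d) \succeq 0$ and $\tilde\mu^2 = \sum_{k\ge 2}\mu_k^2 > 0$ (a non-linear odd activation must have a nonzero higher-order Hermite coefficient), Lemma \ref{lemma:long} yields $\evmin{\E_V[\Phi\Phi^\top]} \ge p\tilde\mu^2 - \bigO{p\log^3 d/\sqrt d} = \Omega(p)$, and subtracting the $\bigO{\sqrt{pd}} = o(p)$ error leaves $\evmin{\Phi\Phi^\top} = \Omega(p)$.

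The crux is the concentration step: controlling a sum of $p$ rank-one matrices built from vectors $w_k$ whose squared norm is only sub-exponential, with the correct $\sqrt{pd}$ dependence. The delicate points are truncating $\max_k \norm{w_k}_2$ so that a Bernstein-type estimate applies, bounding the matrix variance by $\opnorm{\E_V[\Phi\Phi^\top]}$ times the truncation level, and exploiting $p = \omega(n\log^4 n)$ to guarantee the resulting error is $o(p)$ — which is precisely what the downstream arguments, and ultimately Theorem \ref{thm:rf}, require.
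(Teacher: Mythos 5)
Your proposal is sound and, in its sharper variant, coincides with the paper's argument, so let me just flag where the two diverge. The paper views $\Phi^\top$ as a $p \times n$ matrix whose rows $\phi(Z v_k)$ are i.i.d.\ over $V$, \emph{mean-zero} (this uses the oddness of $\phi$ together with the symmetry of the Gaussian $v_k$ -- a point you should state explicitly, since the sub-Gaussian-rows machinery needs $\subGnorm{\phi(Zv_k)} = \bigO{1}$ for the row itself, not just for its centered version), and sub-Gaussian with $\bigO{1}$ norm after conditioning on $\opnorm{Z} = \bigO{\sqrt d}$; it then invokes the sub-Gaussian operator-norm and covariance-estimation bounds (Lemma B.7 of \cite{bombari2022memorization}, i.e.\ Theorem 5.39/Remark 5.40 of \cite{vershrandmat}) to get $\opnorm{\Phi} = \bigO{\sqrt n + \sqrt p} = \bigO{\sqrt p}$ directly and $\opnorm{\Phi\Phi^\top - \E_V[\Phi\Phi^\top]} = \bigO{p\sqrt{n/p}} = \bigO{\sqrt{pd}}$ in one shot, with no truncation. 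Your second option (``covariance-estimation bound for sub-Gaussian rows'') is exactly this route; your first option, truncation plus matrix Bernstein, only yields $\bigO{\sqrt{pd}\,\log^{3/2} d}$ (plus a $d\,\mathrm{polylog}$ term and the bookkeeping for the truncated expectation), which is strictly weaker than the $\bigO{\sqrt{pd}}$ stated in the lemma, even though it is still $o(p)$ under Assumption \ref{ass:overparam} and hence suffices for $\evmin{\Phi\Phi^\top} = \Omega(p)$ and for everything downstream. Your derivation of $\opnorm{\Phi} = \bigO{\sqrt p}$ from $\opnorm{\E_V[\Phi\Phi^\top]} = \bigO{p}$ (via Lemma \ref{lemma:long} and $\opnorm{ZZ^\top} = \bigO{d}$) plus the concentration is a valid alternative to the paper's direct bound, and your treatment of $\evmin{\Phi\Phi^\top}$ via Weyl's inequality and $\tilde\mu^2 > 0$ is identical to the paper's.
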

\begin{proof}
    $\Phi^\top$ is a matrix with i.i.d. rows in the probability space of $V$. In particular, its $i$-th row takes the form
    \begin{equation}
        \left[ \Phi^\top \right]_{i:} = \phi \left( Z V_{i:} \right) = \phi \left( Z V_{i:} \right) - \E_{V} \left[ \phi \left( Z V_{i:} \right) \right],
    \end{equation}
    where the last step holds since the (Gaussian) distribution of $V_{i:}$ is symmetric and $\phi$ is an odd function by Assumption \ref{ass:activation}. Then, since $\sqrt{2d} \, V_{i:}$ is a standard Gaussian (and hence Lipschitz concentrated) random vector, and $\phi$ is a Lipschitz continuous function, we have that
    \begin{equation}
        \subGnorm{\left[ \Phi^\top \right]_{i:}} = \bigO{\frac{\opnorm{Z}}{\sqrt d}} = \bigO{1},
    \end{equation}
    where the $\subGnorm{\cdot}$ is meant on the probability space of $V$, and the second step holds with probability at least $1 - 2 \exp \left( -c_1 d \right)$ over $Z$ due to Lemma \ref{lemma:facts}. Conditioning on this high probability event, $\Phi^\top$ is a  $p \times n$ matrix whose rows are i.i.d. mean-0 sub-Gaussian random vectors in $\R^n$. Then, by Lemma B.7 in \cite{bombari2022memorization}, we have
    \begin{equation}
        \opnorm{\Phi^\top} = \bigO{\sqrt n + \sqrt p} = \bigO{\sqrt p},
    \end{equation}
    with probability at least $1 - 2 \exp( - c_2 n)$ over $V$, where the second step holds because $n = o(p)$.

    For the second part of the proof, we again follow the argument in Lemma B.7 in \cite{bombari2022memorization}, which in turn exploits the discussion in Remark 5.40 in \cite{vershrandmat}, and conclude that
    \begin{equation}
        \opnorm{\Phi \Phi^\top - \E_{V} \left[ \Phi \Phi^\top \right]} = \bigO{p \sqrt{\frac{n}{p}}} =  \bigO{\sqrt {pn}} = \bigO{\sqrt {pd}},
    \end{equation}
    with probability at least $1 - 2 \exp( - c_3 n)$ over $Z$ and $V$.

    For the last statement, Lemma \ref{lemma:long} and Weyl's inequality imply that, with probability at least $1 - 2 \exp \left( -c_2 \log^2 d \right)$ over $Z$ we have
    \begin{equation}
    \begin{split}        
        \evmin{\Phi \Phi^\top} &\geq p \, \evmin{ \mu_1^2 \frac{ZZ^\top}{2d} + {\tilde \mu}^2 I } - \opnorm{\E_{V} \left[ \Phi \Phi^\top \right] -  p \left( \mu_1^2 \frac{ZZ^\top}{2d} + {\tilde \mu}^2 I \right)} -\opnorm{\Phi \Phi^\top - \E_{V} \left[ \Phi \Phi^\top \right]} \\ &\geq p {\tilde \mu}^2 -\bigO{\frac{p \log^3 d}{\sqrt d}} -\bigO{\sqrt{pd}}= \Omega(p),
    \end{split}
    \end{equation}
    where the last step is true since $\tilde \mu \neq 0$, as $\phi$ is non-linear by Assumption \ref{ass:activation}.
\end{proof}

\begin{lemma}\label{lemma:n'}
    Let $\tilde \phi: \R \to \R$ be defined as $\tilde \phi(\cdot) := \phi(\cdot) - \mu_1 (\cdot)$, and set
    \begin{equation}
        n' = \min \left( \left \lfloor \frac{p}{\log^4 p} \right \rfloor, \left \lfloor \frac{d^{3/2}}{\log^3 d} \right \rfloor \right).
    \end{equation}
    Let $\{ \hat z_i \}_{i = 1}^{n'}$ be $n'$ i.i.d.\ random variables sampled from a distribution respecting Assumption \ref{ass:data2}, not necessarily with the same covariance as $\mathcal P_{XY}$, and independent from $V$. Then, if $\tilde \Phi_{n'} \in \R^{n' \times p}$ is defined as the matrix containing $\tilde \phi( V \hat z_i)$ in its $i$-th row, we have that
    \begin{equation}
        \opnorm{\tilde \Phi_{n'}} = \bigO{\sqrt p},
    \end{equation}
    with probability at least $1 - 2 \exp \left( -c \log^2 d \right)$ over $\{ \hat z_i \}_{i = 1}^{n'}$ and $V$, where $c$ is an absolute constant.
\end{lemma}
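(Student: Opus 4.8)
The idea is to exploit that the \emph{columns} of $\tilde\Phi_{n'}$ — equivalently, the rows $r_j := \tilde\phi(\hat Z_{n'} V_{j:}^\top) \in \R^{n'}$ of $\tilde\Phi_{n'}^\top$, where $\hat Z_{n'} \in \R^{n' \times 2d}$ has $\hat z_i^\top$ as its $i$-th row — are i.i.d.\ over $j \in [p]$ in the probability space of $V$. Thus $\opnorm{\tilde\Phi_{n'}}^2 = \opnorm{\tilde\Phi_{n'}\tilde\Phi_{n'}^\top} = \opnorm{\sum_{j=1}^p r_j r_j^\top}$, and, conditionally on $\hat Z_{n'}$, I would compare this sum of i.i.d.\ rank-one matrices to its $V$-mean $p\,\Sigma_w$, with $\Sigma_w := \E_V[r_1 r_1^\top]$, via the matrix Bernstein inequality (Lemma \ref{lemma:matbernstein}); note $\E_V[r_j] = 0$ since $\tilde\phi$ is odd and $V_{j:}$ is symmetric. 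It then suffices to (i) show $\opnorm{\Sigma_w} = \bigO{1}$, (ii) obtain a deterministic bound on $\|r_j\|_2^2$, and (iii) run Bernstein.

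For (i), I would mirror the Hermite-coefficient computation of Lemma \ref{lemma:long}. Writing $\hat z_i^\top V_{j:}^\top = (\|\hat z_i\|_2/\sqrt{2d})\,\xi_i$ with $\xi_i$ standard Gaussian, the $(i,i')$ entry of $\Sigma_w$ equals $\sum_{k \geq 1} \tilde\mu_k^{(i)} \tilde\mu_k^{(i')} \rho_{ii'}^k$, with $\rho_{ii'} = \hat z_i^\top \hat z_{i'} / (\|\hat z_i\|_2 \|\hat z_{i'}\|_2)$ and $\tilde\mu_k^{(i)}$ the $k$-th Hermite coefficient of $t \mapsto \tilde\phi(\|\hat z_i\|_2 t/\sqrt{2d})$. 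On the event where $\|\hat z_i\|_2 = \Theta(\sqrt d)$ for all $i$ and $|\rho_{ii'}| = \bigO{\sqrt{\log d}/\sqrt d}$ for all $i \neq i'$ — which holds with probability $\geq 1 - 2\exp(-c\log^2 d)$, as the $\hat z_i$ are i.i.d.\ sub-Gaussian by Lipschitz concentration and one unions over $n'^2 = d^{\Theta(1)}$ pairs — the estimates \eqref{eq:mu1conc} and \eqref{eq:mu3conc} give $\tilde\mu_1^{(i)} = \bigO{\log d/\sqrt d}$, $\tilde\mu_2^{(i)} = 0$ (oddness), and $\sum_{k \geq 3}(\tilde\mu_k^{(i)})^2 = \tilde\mu^2 + \bigO{\log d/\sqrt d}$. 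Hence the diagonal of $\Sigma_w$ equals $\tilde\mu^2 + \bigO{\log d/\sqrt d}$ and, using the vanishing of the first two Hermite coefficients of $\tilde\phi$, its off-diagonal entries are $\bigO{\log^{5/2}d/d^{3/2}}$. Bounding the off-diagonal part in operator norm by its Frobenius norm, $\leq n' \cdot \bigO{\log^{5/2}d/d^{3/2}} = o(1)$ precisely because $n' \leq d^{3/2}/\log^3 d$, I get $\opnorm{\Sigma_w - \tilde\mu^2 I} = o(1)$, so $\opnorm{\Sigma_w} = \bigO{1}$ and $\evmin{\Sigma_w} = \Omega(1)$ (using $\tilde\mu^2 > 0$, Assumption \ref{ass:activation}).

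For (ii), since $\tilde\phi$ is Lipschitz with $\tilde\phi(0) = 0$, we have $\|r_j\|_2^2 \leq C\,\opnorm{\hat Z_{n'}}^2 \|V_{j:}\|_2^2$; on the high-probability events $\opnorm{\hat Z_{n'}} = \bigO{\sqrt{n'} + \sqrt d}$ (as in the proof of Lemma \ref{lemma:facts}) and $\max_{j \in [p]} \|V_{j:}\|_2^2 = \bigO{1}$ (a $\chi^2$ union bound over the $p = d^{\Theta(1)}$ rows of $V$), this gives $L := \max_{j} \|r_j\|_2^2 = \bigO{n' + d}$ deterministically. Since $\E_V[(r_j r_j^\top - \Sigma_w)^2] \preceq \E_V[\|r_j\|_2^2\, r_j r_j^\top] \preceq L\,\Sigma_w$ (after a harmless truncation of $r_j$ at level $L$), the Bernstein variance proxy is $v = \opnorm{\sum_j \E_V[(r_j r_j^\top - \Sigma_w)^2]} = \bigO{p\,L\,\opnorm{\Sigma_w}} = \bigO{p(n'+d)}$. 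Applying Lemma \ref{lemma:matbernstein} with deviation level $s = p/\log d$, the exponent is of order $s^2/(v + L s) = \Omega\!\big(p/((n'+d)\log^2 d)\big) = \Omega(\log^2 d)$, the last step using $n' + d = \bigO{p/\log^4 p} = o(p/\log^2 d)$ — this is exactly where $n' \leq p/\log^4 p$ enters. Hence $\opnorm{\sum_j r_j r_j^\top - p\Sigma_w} \leq p/\log d = o(p)$ with probability $\geq 1 - 2\exp(-c'\log^2 d)$, and together with (i), $\opnorm{\tilde\Phi_{n'}}^2 = \opnorm{\sum_j r_j r_j^\top} \leq p\,\opnorm{\Sigma_w} + o(p) = \bigO{p}$.

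I expect steps (ii)--(iii) to be the crux. A naive covariance-concentration bound (such as Remark 5.40 in \cite{vershrandmat}) applied to the i.i.d.\ rows $r_j$ is too lossy here, because the sub-Gaussian norm of $r_j$ scales like $\opnorm{\hat Z_{n'}}/\sqrt d$, which is $\omega(1)$ once $n' \gg d$, and fourth-moment bounds passing through it would force $n' = \bigO{d}$. The point is to bypass sub-Gaussian-norm-based moment estimates and instead feed matrix Bernstein the sharp \emph{deterministic} bound $\|r_j\|_2^2 = \bigO{n'+d}$ coming from $\opnorm{\hat Z_{n'}}$; combined with $\opnorm{\Sigma_w} = \bigO{1}$, this is what makes the two thresholds defining $n'$ — $d^{3/2}/\log^3 d$ for the covariance estimate and $p/\log^4 p$ for the concentration — exactly the right ones.
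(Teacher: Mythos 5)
Your proposal is correct in substance, but it takes a different route from the paper: the paper does not give a self-contained argument at all — it invokes Lemma C.8 of \cite{bombari2024privacy} and only verifies that the two ingredients it needs survive the weaker normalization $\norm{\hat z_i}_2 = \Theta(\sqrt d)$, namely the analogue of their Equation (C.78) and the bound $\opnorm{\E_V [ \tilde \Phi_{n'} \tilde \Phi_{n'}^\top ]} = \bigO{p}$, the latter obtained exactly by your step (i) computation (the \eqref{eq:offdiag}--\eqref{eq:diag} argument with $n$ replaced by $n'$, which is where $n' \leq d^{3/2}/\log^3 d$ enters). Your step (i) therefore coincides with what the paper checks explicitly, while your steps (ii)--(iii) replace the external concentration lemma with an explicit truncated matrix Bernstein bound over the $p$ i.i.d.\ columns $r_j$, fed the deterministic bound $\norm{r_j}_2^2 = \bigO{n'+d}$; this is where $n' \leq p/\log^4 p$ enters, playing the same role it plays in the cited proof. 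What your version buys is self-containedness (and it mirrors the truncation-plus-Bernstein pattern the paper itself uses in Lemma \ref{lemma:matbernstein}); what the paper's version buys is brevity by outsourcing precisely this step.

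Two minor corrections. First, the coherence bound $|\rho_{ii'}| = \bigO{\sqrt{\log d}/\sqrt d}$ is too optimistic for the stated failure probability: a per-pair deviation at level $\sqrt{\log d}$ only gives probability $d^{-c}$, which does not survive a union over $n'^2 = d^{\Theta(1)}$ pairs at level $1 - 2\exp(-c\log^2 d)$. Use $|\rho_{ii'}| = \bigO{\log d/\sqrt d}$ as in the paper; then the off-diagonal entries of $\Sigma_w$ are $\bigO{\log^3 d / d^{3/2}}$ and the off-diagonal block has operator norm $\bigO{1}$ rather than $o(1)$. This still yields $\opnorm{\Sigma_w} = \bigO{1}$, which is all the lemma needs, but your stronger claims ($\opnorm{\Sigma_w - \tilde\mu^2 I} = o(1)$ and $\evmin{\Sigma_w} = \Omega(1)$) are neither justified at that probability nor needed. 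Second, the "harmless truncation" should be spelled out as in Lemma \ref{lemma:matbernstein} (truncate each $r_j$ by an indicator on $\norm{V_{j:}}_2$ and $\norm{\tilde\phi(\hat Z_{n'} V_{j:}^\top)}_2$ so independence across $j$ is preserved, and show the bias of the truncated mean is negligible), and the inequality you apply at deviation level $s = p/\log d$ is the matrix Bernstein inequality itself (Exercise 5.4.15 in \cite{vershynin2018high}), not Lemma \ref{lemma:matbernstein}, which is the paper's specific bound on $\opnorm{\tilde\Phi V}$.
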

\begin{proof}
    The proof follows the same strategy as Lemma C.8 in \cite{bombari2024privacy}, with the only difference that they work under their Assumption 1.2, \emph{i.e.} that the data is normalized as $\norm{\hat z_i}_2 = \sqrt{2d}$ (in our notation). This difference, however, does not affect the result. We can in fact condition on the high probability event that all $\hat z_i$ are such that $\norm{\hat z_i}_2 = \Theta(\sqrt d)$, which holds with probability at least $1 - 2 \exp \left( -c d \right)$ by Lemma \ref{lemma:facts}, and proceed in the same way (as their Equation (C.78) now holds) until their Equation (C.81), which requires their Lemma C.7, \emph{i.e.} that
    \begin{equation}
        \opnorm{\E_V \left[ \tilde \Phi_{n'} {\tilde \Phi_{n'}}^\top  \right]} = \bigO{p}.
    \end{equation}
    This holds also in our case, as it can be proven following the argument in \eqref{eq:offdiag} and \eqref{eq:diag}, where now the $n$ in the last line of \eqref{eq:offdiag} has to be replaced with $n'$, making the RHS there being $\bigO{1}$, as $n' = \bigO{\frac{d^{3/2}}{\log^3 d}}$ by definition. Lastly, the normalization of the data is used one more time in their Equation (C.91), but it is not critical to obtain the result, as $\norm{\hat z_i}_2 = \Theta(\sqrt d)$ is sufficient. We remark that Assumption 1.2 in \cite{bombari2024privacy} also requires the covariance of the distribution to be well-conditioned, which however is not required for the purposes of the above mentioned lemmas.
\end{proof}

\begin{lemma}\label{lemma:Eopnormsmall}
Let $\tilde \phi: \R \to \R$ be defined as $\tilde \phi(\cdot) := \phi(\cdot) - \mu_1 (\cdot)$, and let $z\in \R^{2d}$ be sampled from a distribution respecting Assumption \ref{ass:data2}, not necessarily with the same covariance as $\mathcal P_{XY}$, and independent from $V$. Then we have that
\begin{equation}
    \opnorm{\E_{z} \left[ \tilde \phi( V z) \tilde  \phi( V z)^\top \right]} = \bigO{\log^4 d + \frac{p \log^3 d}{d^{3/2}}},
\end{equation}
with probability at least $1 - 2 p^2 \exp \left( -c \log^2 d \right)$ over $V$, where $c$ is an absolute constant.
\end{lemma}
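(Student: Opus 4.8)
The plan is to realize the $p\times p$ matrix $M := \E_{z}\!\left[\tilde\phi(Vz)\tilde\phi(Vz)^\top\right]$ as a rescaled \emph{average} of empirical Gram matrices of the type controlled in Lemma \ref{lemma:n'}, and then reduce to that lemma. Fix $n' = \min\!\left(\floor{p/\log^4 p},\,\floor{d^{3/2}/\log^3 d}\right)$ as in Lemma \ref{lemma:n'}, let $\hat z_1,\dots,\hat z_{n'}$ be i.i.d.\ samples from the distribution of $z$ (which respects Assumption \ref{ass:data2}), independent of $V$, and let $\tilde\Phi_{n'}\in\R^{n'\times p}$ be the matrix with rows $\tilde\phi(V\hat z_i)$. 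Then $\E_{\{\hat z_i\}}\!\left[\tilde\Phi_{n'}^\top\tilde\Phi_{n'}\right] = n'\,M$, so by convexity of the operator norm and $\opnorm{A^\top A}=\opnorm{A}^2$,
\begin{equation*}
\opnorm{M} \;=\; \frac{1}{n'}\,\opnorm{\E_{\{\hat z_i\}}\!\left[\tilde\Phi_{n'}^\top\tilde\Phi_{n'}\right]} \;\le\; \frac{1}{n'}\,\E_{\{\hat z_i\}}\!\left[\opnorm{\tilde\Phi_{n'}}^2\right],
\end{equation*}
where the last expectation is over $\{\hat z_i\}$ only, conditionally on $V$.

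It thus suffices to show $\E_{\{\hat z_i\}}\!\left[\opnorm{\tilde\Phi_{n'}}^2\mid V\right]=\bigO{p}$ for $V$ outside an event of small probability. Lemma \ref{lemma:n'} gives $\opnorm{\tilde\Phi_{n'}}=\bigO{\sqrt p}$ on an event of probability at least $1-2\exp(-c\log^2 d)$ jointly over $(\{\hat z_i\},V)$; by Fubini and Markov, outside a set of $V$ of probability $\bigO{\exp(-\tfrac c2\log^2 d)}$ this event has conditional probability (over $\{\hat z_i\}$) at least $1-\exp(-\tfrac c2\log^2 d)$. For such a $V$ split the expectation over this event and its complement: on the event we get the $\bigO{p}$ contribution, and on the complement we use the crude bound $\opnorm{\tilde\Phi_{n'}}^2\le\Fnorm{\tilde\Phi_{n'}}^2\le L^2\,\opnorm{V}^2\sum_i\norm{\hat z_i}_2^2$ (with $L$ a Lipschitz constant of $\tilde\phi$ and $\tilde\phi(0)=0$). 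Since $\opnorm{V}=\bigO{\sqrt{p/d}}$ by Lemma \ref{lemma:facts}, $\E\norm{\hat z_i}_2^2=2d$ (Assumption \ref{ass:data2}), and $\norm{\hat z_i}_2$ is sub-Gaussian around $\sqrt{2d}$ by Lipschitz concentration, $\opnorm{\tilde\Phi_{n'}}$ has a fourth moment that is polynomial in $p,n',d$; by Cauchy--Schwarz the complement then contributes $\poly(d)\cdot\exp(-\tfrac c4\log^2 d)=o(p)$, using $p=\poly(d)$ and $\log p=\Theta(\log d)$ from Assumption \ref{ass:overparam}. Hence $\E_{\{\hat z_i\}}\!\left[\opnorm{\tilde\Phi_{n'}}^2\mid V\right]=\bigO{p}$, and combining with the first display, $\opnorm{M}=\bigO{p/n'}$.

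To finish, since $\floor{p/\log^4 p}\ge\tfrac12\,p/\log^4 p$ and $\floor{d^{3/2}/\log^3 d}\ge\tfrac12\,d^{3/2}/\log^3 d$ for $p,d$ large, we get $p/n'=\bigO{\log^4 p+p\log^3 d/d^{3/2}}=\bigO{\log^4 d+p\log^3 d/d^{3/2}}$, using once more $\log p=\Theta(\log d)$ (Assumption \ref{ass:overparam} and $n=\Theta(d)$) --- which is exactly the claimed bound. The failure probability over $V$ is the one above plus the $\bigO{\exp(-cd)}$ from Lemma \ref{lemma:facts}; since $p$ is a fixed power of $d$, both are absorbed into $2p^2\exp(-c\log^2 d)$.

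The real obstacle is not this reduction but Lemma \ref{lemma:n'} itself, whose content is that the nonlinear feature vectors $\tilde\phi(V\hat z_i)$ are \emph{almost orthogonal} --- which is what keeps $\opnorm{\tilde\Phi_{n'}}^2$ at order $p$ rather than $p\,n'$. This hinges on $\tilde\phi=\phi-\mu_1(\cdot)$ having vanishing $0$th, $1$st and (by oddness of $\phi$, Assumption \ref{ass:activation}) $2$nd Hermite coefficients, so that averaging over the Gaussian rows of $V$ the off-diagonal Gram entries behave like $p\sum_{k\ge 3}\mu_k^2\big(\hat z_i^\top\hat z_j/(\norm{\hat z_i}_2\norm{\hat z_j}_2)\big)^k=\bigO{p(\log d/\sqrt d)^3}$ rather than $\Theta(p)$, cf.\ the estimates \eqref{eq:offdiag}--\eqref{eq:diag}; together with $n'\le d^{3/2}/\log^3 d$ this forces the Gram matrix to have operator norm $\bigO{p}$. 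Granting Lemma \ref{lemma:n'}, what remains is only the measure-theoretic bookkeeping to pass from its joint high-probability statement to one over $V$ alone, which the polynomial moment bound above makes routine.
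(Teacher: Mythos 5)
Your proof is correct, and while it shares the paper's key ingredient (Lemma \ref{lemma:n'}, which is indeed where the real content lies), the surrounding mechanism is genuinely different. The paper never writes $\opnorm{M}\le \frac{1}{n'}\E\bigl[\opnorm{\tilde\Phi_{n'}}^2\mid V\bigr]$; instead it builds a huge auxiliary sample of size $N=p^2n'$, controls $\opnorm{\tilde\Phi_N^\top\tilde\Phi_N}$ by applying Lemma \ref{lemma:n'} blockwise with a union bound over the $p^2$ blocks, and then invokes sub-Gaussian matrix concentration (Vershynin's Theorem 5.39) to show $\tilde\Phi_N^\top\tilde\Phi_N/N$ approximates $\E_z[\tilde\phi(Vz)\tilde\phi(Vz)^\top]$ to within $\bigO{p\sqrt{p/N}}$, concluding by the triangle inequality and transferring the probability to $V$ alone only at the end, using that the final inequality is $V$-measurable. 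Your route replaces the empirical-approximation-plus-concentration step with Jensen's inequality applied directly to the conditional expectation, and then handles the passage from the joint high-probability statement of Lemma \ref{lemma:n'} to a statement conditional on $V$ via Fubini--Markov together with a crude polynomial fourth-moment bound ($\opnorm{\tilde\Phi_{n'}}^2\le L^2\opnorm{V}^2\sum_i\norm{\hat z_i}_2^2$, using $\tilde\phi(0)=0$ and Lipschitzness) and Cauchy--Schwarz on the bad event; the super-polynomial decay $\exp(-c\log^2 d)$ then kills the polynomial factor, exactly because $\log p=\Theta(\log d)$. What your approach buys is the elimination of the $p^2$-fold union bound and of the matrix-concentration step (so a slightly cleaner and potentially sharper failure probability over $V$); what it costs is the measure-theoretic bookkeeping with conditional expectations and moment truncation, which the paper avoids by making the empirical approximation deterministic-looking before transferring probabilities. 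Both arguments land on $\opnorm{M}=\bigO{p/n'}$ and identify $p/n'$ with the claimed $\log^4 d + p\log^3 d/d^{3/2}$ in the same way, so the quantitative conclusion is identical.
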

\begin{proof}
The proof follows a similar path as the one in Lemma C.15 in \cite{bombari2024privacy}. In particular, set
\begin{equation}
    n' = \min \left( \left \lfloor \frac{p}{\log^4 p} \right \rfloor, \left \lfloor \frac{d^{3/2}}{\log^3 d} \right \rfloor \right) , \qquad N = p^2 n',
\end{equation} and let $\tilde \Phi_N \in \R^{N \times p}$ be a matrix containing $\tilde \phi(V \hat z_i)$ in its $i$-th row, where every $\{ \hat z_i \}_{i = 1}^N$ is sampled independently from the same distribution of $z$. 
Thus, $\tilde \Phi_N$ can be seen as the vertical stacking of $p^2$ matrices with size $n' \times p$. All these matrices respect the hypotheses of Lemma \ref{lemma:n'}, and hence have their operator norm bounded by $\bigO{\sqrt p}$ with probability at least $1 - 2 \exp \left( -c_1 \log^2 d \right)$. Thus, performing a union bound over these $p^2$ matrices, we get
\begin{equation}\label{eq:PhiNopnormstack}
    \opnorm{\tilde \Phi_N^\top \tilde \Phi_N} = \bigO{p^2 \, p} = \bigO{\frac{Np}{n'}} = \bigO{N \log^4 p + \frac{N p \log^3 d}{d^{3/2}} },
\end{equation}
with probability at least $1 - 2 p^2 \exp \left( -c_1 \log^2 d \right)$ over $V$ and $\{ \hat z_i \}_{i = 1}^N$.

Via the same argument used for the last statement of Lemma \ref{lemma:facts}, denoting with $v_k \in \R^{2d}$ the $k$-th row of $V$, we have that $\norm{v_k}_2 = \bigO{1}$ uniformly for every $k$ with probability at least $1 - 2 p \exp(-c_2 d)$. Conditioning on such event, we have that each entry of $\tilde \phi(V \hat z_1)$ is sub-Gaussian (with uniformly bounded sub-Gaussian norm), since $\hat z_1$ is sub-Gaussian (as it is mean-0 and Lipschitz concentrated) and $\tilde \phi$ is a Lipschitz function. Thus, we have that each entry of $\E_{\hat z_1} \left[ \tilde \phi(V \hat z_1) \right]$ is $\bigO{1}$ (see Proposition 2.5.2 in \cite{vershynin2018high}), and therefore that $\subGnorm{\E_{z_1} \left[ \tilde \phi(V \hat z_1) \right]} = \bigO{\norm{\E_{z_1} \left[ \tilde \phi(V \hat z_1) \right]}_2} = \bigO{\sqrt p}$. Then, conditioning on the high probability event $\opnorm{V} = \bigO{\sqrt{p/d}}$ given by Lemma \ref{lemma:facts}, we have
\begin{equation}\label{eq:notsolikely}
    \subGnorm{\tilde \phi(V \hat z_1)} \leq \subGnorm{\tilde \phi(V \hat z_1) - \E_{z_1} \left[ \tilde \phi(V \hat z_1) \right]} +\subGnorm{\E_{z_1} \left[ \tilde \phi(V \hat z_1) \right]} = \bigO{\sqrt{\frac{p}{d}} + \sqrt p } = \bigO{\sqrt p},
\end{equation}
where the second step holds because $\hat z_1$ is Lipschitz concentrated and $\tilde \phi$ is Lipschitz. Since the rows of $\tilde \Phi_N$ are identically distributed, this also holds jointly for all other $\hat z_i$-s, for $i \in [N]$. Then, $\tilde \Phi_N / \sqrt p$ is a matrix with independent sub-Gaussian rows, and by Theorem 5.39 in \cite{vershrandmat} (see their Remark 5.40 and Equation (5.25)), we have that
\begin{equation}\label{eq:PhiNconc}
    \frac{1}{p}  \opnorm{\frac{\tilde \Phi_N^\top \tilde \Phi_N}{N} - \E_{z} \left[  \tilde \phi( V z) \tilde  \phi( V z)^\top \right]} = \bigO{\sqrt{\frac{p}{N}}},
\end{equation}
with probability at least $1 - 2 \exp \left( -c_3 p \right)$ over $\{ \hat z_i \}_{i = 1}^N$.
Then, we have
\begin{equation}
    \begin{aligned}
        \opnorm{\E_{z} \left[ \tilde \phi( V z) \tilde  \phi( V z)^\top  \right]} &\leq \opnorm{\frac{\tilde \Phi_N^\top \tilde \Phi_N}{N} - \E_{z} \left[ \tilde \phi( V z) \tilde  \phi( V z)^\top  \right]} + \frac{\opnorm{\tilde \Phi_N^\top \tilde \Phi_N}}{N} \\
        &= \bigO{p \sqrt{\frac{p}{N}}} + \bigO{\log^4 p + \frac{p \log^3 d}{d^{3/2}}} \\
        &= \bigO{\sqrt p \, \sqrt{ \frac{\log^4 p}{p}} + \sqrt p \, \sqrt{\frac{\log^3 d}{d^{3/2}}}} + \bigO{\log^4 p + \frac{p \log^3 d}{d^{3/2}}} \\
        &= \bigO{\log^4 p + \frac{p \log^3 d}{d^{3/2}}},
    \end{aligned}
\end{equation}
where the first step follows from the triangle inequality, the second step is a consequence of \eqref{eq:PhiNconc} and \eqref{eq:PhiNopnormstack}, and the third step follows from the definition of $N$.

Taking the intersection between the high probability events in \eqref{eq:PhiNopnormstack}, \eqref{eq:notsolikely} and \eqref{eq:PhiNconc}, the previous equation then holds with probability at least $1 - 2 p^2 \exp \left( -c_4 \log^2 d \right)$ over $V$ and $\{ \hat z_i \}_{i = 1}^N$. Also note that its LHS does not depend on $\{ \hat z_i \}_{i = 1}^N$, which were introduced as auxiliary random variables. Thus, the high probability bound holds restricted to the probability space of $V$, and the desired result follows.
\end{proof}

\begin{lemma}\label{lemma:matbernstein}
    Let $\tilde \phi: \R \to \R$ be defined as $\tilde \phi(\cdot) := \phi(\cdot) - \mu_1 (\cdot)$ and $\tilde \Phi \in \R^{n \times p}$ as the matrix containing $\tilde \phi( V z_i)$ in its $i$-th row. Then, we have
    \begin{equation}
        \opnorm{\tilde \Phi V} = \bigO{\sqrt p \log d + \frac{p \log d}{d}},
    \end{equation}
    with probability at least $1 - 2 \exp \left( -c \log^2 d \right)$ over $Z$ and $V$, where $c$ is an absolute constant.
\end{lemma}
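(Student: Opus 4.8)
The plan is to condition on $Z$ and write $\tilde\Phi V = \sum_{k=1}^p X_k$, where $X_k := \tilde\phi(Zv_k)\,v_k^\top \in \R^{n\times 2d}$ and $v_k$ denotes the $k$-th row of $V$; these are i.i.d.\ rank-one matrices over the randomness of the rows of $V$, so the natural tool is a rectangular matrix Bernstein inequality (applied through the Hermitian dilation, e.g.\ Theorem~5.4.1 in \cite{vershynin2018high}) for $\sum_k(X_k - \E_{v_k}[X_k])$, together with a direct estimate of the mean term $\sum_k \E_{v_k}[X_k] = \E_V[\tilde\Phi V]$. Throughout I would condition on the high-probability events of Lemmas \ref{lemma:facts}, \ref{lemma:long} and \ref{lemma:conckernel}, in particular on $\opnorm{Z} = \bigO{\sqrt{d}}$ and on the Hermite-coefficient estimates \eqref{eq:allnormsconcentrate}, \eqref{eq:mu1conc}, \eqref{eq:L2conc}, \eqref{eq:mu3conc}.

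For the mean term I would use that $\tilde\phi$ is odd and $v_k$ is Gaussian: decomposing $v_k$ along $z_i$ and its orthogonal complement gives $\E_{v_k}[\tilde\phi(v_k^\top z_i)\,v_k] = \frac{z_i}{\norm{z_i}_2^2}\,\E[\tilde\phi(v_k^\top z_i)(v_k^\top z_i)] = \frac{z_i}{\norm{z_i}_2 \sqrt{2d}}\,(\mu_1^{(i)} - \mu_1 s_i)$, with $s_i := \norm{z_i}_2/\sqrt{2d}$, since $\rho \mapsto \tilde\phi(s_i\rho)$ has first Hermite coefficient $\mu_1^{(i)} - \mu_1 s_i$. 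By \eqref{eq:allnormsconcentrate} and \eqref{eq:mu1conc} this scalar is $\bigO{\log d/\sqrt{d}}$, so each row of $\E_{v_k}[X_k]$ has norm $\bigO{\log d/d^{3/2}}$; summing the squares over the $n=\Theta(d)$ rows gives $\opnorm{\E_{v_k}[X_k]} \le \norm{\E_{v_k}[X_k]}_F = \bigO{\log d/d}$, hence $\opnorm{\E_V[\tilde\Phi V]} = p\,\opnorm{\E_{v_1}[X_1]} = \bigO{p\log d/d}$, which is the second term of the claim.

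For the fluctuation I need a uniform bound on the summands and the matrix variance. Since $v \mapsto \norm{\tilde\phi(Zv)}_2$ is $\bigO{\sqrt{d}}$-Lipschitz (using $\opnorm{Z} = \bigO{\sqrt{d}}$ and that $\tilde\phi$ is Lipschitz) and $\sqrt{2d}\,v_k$ is standard Gaussian, $\norm{\tilde\phi(Zv_k)}_2$ is sub-Gaussian with parameter $\bigO{1}$ around its mean $\bigO{\sqrt{d}}$; a union bound over $k \in [p]$ (using $\log p = \Theta(\log n)$) together with $\max_k \norm{v_k}_2 = \bigO{1}$ yields $\max_k \opnorm{X_k} = \bigO{\sqrt{d}}$ with probability at least $1 - 2\exp(-c\log^2 d)$, so after the standard truncation to this event (which perturbs each $\E_{v_k}[X_k]$ by an exponentially small amount) I may take the uniform bound $R = \bigO{\sqrt{d}}$. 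For the variance, $\E[X_kX_k^\top] = \E[\norm{v_k}_2^2\,\tilde\phi(Zv_k)\tilde\phi(Zv_k)^\top] \preceq \bigO{1}\cdot\E[\tilde\phi(Zv_1)\tilde\phi(Zv_1)^\top]$ (splitting on $\norm{v_k}_2^2 \le C_0$ and bounding the complement by Cauchy--Schwarz), and expanding the rescaled activation in Hermite polynomials exactly as in \eqref{eq:offdiag}--\eqref{eq:diag} gives $\opnorm{\E[\tilde\phi(Zv_1)\tilde\phi(Zv_1)^\top] - {\tilde\mu}^2 I} = \bigO{\log^3 d/\sqrt{d}} = \bigO{1}$; similarly $\E[X_k^\top X_k] = \E[\norm{\tilde\phi(Zv_k)}_2^2\,v_kv_k^\top]$, and since $\norm{\tilde\phi(Zv_k)}_2^2$ concentrates around $\tr(\E[\tilde\phi(Zv_1)\tilde\phi(Zv_1)^\top]) = \Theta(d)$ while $\E[v_kv_k^\top] = \frac{1}{2d}I$, I obtain $\opnorm{\E[X_k^\top X_k]} = \bigO{1}$ (the correlation correction being $\bigO{1/\sqrt{d}}$ by Cauchy--Schwarz). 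Hence the matrix variance proxy $\nu := \max\big(\opnorm{\sum_k \E[(X_k - \E[X_k])(X_k - \E[X_k])^\top]},\,\opnorm{\sum_k \E[(X_k - \E[X_k])^\top(X_k - \E[X_k])]}\big) = \bigO{p}$, and matrix Bernstein gives $\opnorm{\sum_k (X_k - \E_{v_k}[X_k])} = \bigO{\sqrt{\nu}\,\log d + R\log^2 d} = \bigO{\sqrt{p}\,\log d + \sqrt{d}\,\log^2 d}$ with probability at least $1 - 2\exp(-c\log^2 d)$.

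Combining the mean and fluctuation estimates, and using $p = \omega(n\log^4 n)$ with $n = \Theta(d)$ (so that $\sqrt{d}\,\log^2 d = \bigO{\sqrt{p}\,\log d}$), gives $\opnorm{\tilde\Phi V} = \bigO{\sqrt{p}\,\log d + p\log d/d}$, as claimed. I expect the main obstacle to be the matrix-variance computation: the scalar factors $\norm{v_k}_2^2$ and $\norm{\tilde\phi(Zv_k)}_2^2$ are correlated with the rank-one directions in $X_k$, so one must interleave the Hermite-coefficient estimates of Lemma \ref{lemma:long} with truncation and Cauchy--Schwarz arguments to confirm that both variance proxies are only $\bigO{p}$; a secondary subtlety is that $\opnorm{X_k} = \Theta(\sqrt{d})$ is not uniformly bounded, which forces the truncation step before matrix Bernstein can be applied.
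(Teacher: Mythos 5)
Your overall architecture (condition on $Z$, split $\tilde\Phi V$ into the mean $\E_V[\tilde\Phi V]$ plus a fluctuation, truncate the summands, and apply matrix Bernstein with variance proxies of order $p$ and uniform bound $\bigO{\sqrt d}$) matches the paper's proof, and the fluctuation part is sound. However, your estimate of the mean term contains a genuine error. From your own formula, the $i$-th row of $\E_{v_k}[X_k]$ is $\frac{z_i^\top}{\norm{z_i}_2\sqrt{2d}}\,(\mu_1^{(i)}-\mu_1 s_i)$, whose Euclidean norm is $|\mu_1^{(i)}-\mu_1 s_i|/\sqrt{2d} = \bigO{\log d/\sqrt d}\cdot\bigO{1/\sqrt d} = \bigO{\log d/d}$, not $\bigO{\log d/d^{3/2}}$ as you claim. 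There is no second-order cancellation to rescue the smaller exponent: the derivative of $s\mapsto \E[\rho\,\phi(s\rho)]-\mu_1 s$ at $s=1$ equals $\E[(\rho^2-1)\phi'(\rho)]$, which is generically nonzero (e.g.\ for $\phi$ containing an $h_3$ component), so $|\mu_1^{(i)}-\mu_1 s_i|=\Theta(|s_i-1|)$ in general. With the corrected row norm, your Frobenius bound gives $\norm{\E_{v_k}[X_k]}_F = \bigO{\sqrt{n}\,\log d/d} = \bigO{\log d/\sqrt d}$, hence $\opnorm{\E_V[\tilde\Phi V]} = \bigO{p\log d/\sqrt d}$ — a factor $\sqrt d$ larger than the claimed $\bigO{p\log d/d}$, which is not only insufficient for the lemma but would also destroy the downstream estimate in Lemma \ref{lemma:3terms}, where the bound must be $o(1)$ after multiplying by $\norm{\hat G}_2=\bigO{\sqrt d/p}$ and $\log d$.

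The fix is to not pass to the Frobenius norm at all: exploit the structure of the mean. You have $\E_V[\tilde\Phi V] = \frac{p}{\sqrt{2d}}\,\tilde D Z$, where $\tilde D$ is diagonal with entries $(\mu_1^{(i)}-\mu_1 s_i)/\norm{z_i}_2 = \bigO{\log d/d}$, so $\opnorm{\E_V[\tilde\Phi V]} \le \frac{p}{\sqrt{2d}}\,\opnorm{\tilde D}\,\opnorm{Z} = \bigO{\frac{p}{\sqrt d}\cdot\frac{\log d}{d}\cdot\sqrt d} = \bigO{p\log d/d}$, using $\opnorm{Z}=\bigO{\sqrt d}$ from Lemma \ref{lemma:facts}. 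This is exactly how the paper proceeds; the operator norm of $\tilde D Z$ is genuinely a factor $\sqrt d$ smaller than its Frobenius norm here, and that gap is precisely what your argument, as written, gives away. The remaining ingredients of your proposal (the variance bounds via $\opnorm{\E[\tilde\phi(Zv)\tilde\phi(Zv)^\top]}=\bigO{1}$ and $\E[vv^\top]=\frac{1}{2d}I$, the truncation before Bernstein, and the absorption of $\sqrt d\log^2 d$ into $\sqrt p\log d$ via Assumption \ref{ass:overparam}) are in line with the paper and need no change.
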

\begin{proof}
Note that $\tilde \phi$ is Lipschitz (since $\phi$ is Lipschitz by Assumption \ref{ass:activation}). During all the proof, we condition on the event $\opnorm{Z} = \bigO{\sqrt d}$ and $\norm{z_i}_2 = \Theta \left(\sqrt d \right)$ for all $i \in [n]$, which holds with probability at least $1 - 2 \exp \left( -c_1 d \right)$ by Lemma \ref{lemma:facts}. During the proof we also use the shorthand $v \in \R^{2d}$ to denote a random vector such that $\sqrt{2d} \, v$ is a standard Gaussian vector, \emph{i.e.}, it has the same distribution as the rows of $V$. This implies
\begin{equation}
    \mathbb E_v \left[ \norm{\tilde \phi \left( Z v \right)}_2 \right] = \bigO{\sqrt n}, \qquad \subGnorm{\norm{\tilde \phi \left( Z v \right)}_2 - \mathbb E_v \left[ \norm{\tilde \phi \left( Z v \right)}_2 \right]} = \bigO{1},
\end{equation}
and
\begin{equation}
    \mathbb E_v \left[ \norm{v}_2 \right] = \bigO{1}, \qquad \subGnorm{\norm{v}_2 - \mathbb E_v \left[\norm{v}_2 \right]} = \bigO{\frac{1}{\sqrt d}},
\end{equation}
where both sub-Gaussian norms are meant on the probability space of $v$, and where the very first equation follows from the discussion in Lemma C.3 in \cite{bombari2022memorization}. 
Then, there exists an absolute constant $C_1$ such that we jointly have
\begin{equation}
    \norm{\tilde \phi \left( Z v \right)}_2 \leq C_1 \sqrt d, \qquad \norm{v}_2 \leq C_1
\end{equation}
with probability at least $1 - 2 \exp \left( -c_2 d \right)$ over $v$.

Let $E_k$ be the indicator defined on the high probability event above with respect to the random variable $v_k := V_{:k}$ (the $k$-th row of $V$), \emph{i.e.}
\begin{equation}\label{eq:ek}
    E_{k} := \mathbf 1 \left( \norm{v_k}_2 \leq C_1 \textup{ and } \; \norm{\tilde \phi \left( Z v_k \right)}_2 \leq C_1 \sqrt d \right),
\end{equation}
and we define $E \in \R^{p \times p}$ as the diagonal matrix containing $E_k$ in its $k$-th entry. Notice that we have $\opnorm{I - E} = 0$ with probability at least $1 - 2 p \exp \left( -c_2 d \right)$, and $\E_V \left[ \opnorm{I - E }\right] \leq 2p \exp \left( -c_2 d \right)$.

Thus, we have
\begin{equation}\label{eq:polyvsexp}
\begin{aligned}
    \opnorm{\E_V \left[ \tilde \Phi \left( I - E \right) V \right]} &\leq \E_V \left[ \opnorm{\tilde \Phi} \opnorm{1 - E} \opnorm{V} \right] \\
    &\leq \E_V \left[ \opnorm{\tilde \Phi}^2  \opnorm{V}^2 \right]^{1 / 2}  \E_V \left[ \opnorm{I - E}^2 \right]^{1/2} \\
    &\leq \E_V \left[ \opnorm{\tilde \Phi}^4 \right]^{1/4} \E_V \left[ \opnorm{V}^4 \right]^{1 / 4} \left( 2p \exp \left( -c_2 d \right) \right) ^{1/2} \\
    &\leq \E_V \left[ \norm{\tilde \Phi}^4_F \right]^{1/4} \E_V \left[ \norm{V}^4_F \right]^{1 / 4} \left( 2p \exp \left( -c_2 d \right) \right) ^{1/2} \\
    & = o(1),
\end{aligned}
\end{equation}
where the last step holds because of our initial conditioning on $Z$: the first two terms are the sum of finite powers of sub-Gaussian random variables (the entries of $\tilde \Phi$ and $V$), and thus (see Proposition 2.5.2 in \cite{vershynin2018high}) the first two factors in the third line of the previous equation will be $\bigO{p^\alpha}$ for some finite $\alpha$, which gives the last line due to Assumption \ref{ass:overparam}.

As in Lemma \ref{lemma:long}, we introduce the notation (for all $i \in [n]$) $\tilde \phi^{(i)}: \R \to \R$ such that $\tilde \phi^{(i)}(\cdot) = \tilde \phi(\norm{z_i}_2 \cdot / \sqrt{2d})$. Thus, denoting with $v \in \R^{2d}$ a random vector such that $\sqrt{2d} v$ is standard Gaussian (\emph{i.e.}, distributed as the rows of $V$), we can write
\begin{equation}
    \left[ \E_V \left[ \tilde \Phi V \right] \right]_{ij} = p \left[ \E_v \left[ \tilde \phi (Zv) v^\top \right] \right]_{ij} = \frac{p}{\sqrt{2d}} \E_v \left[ \tilde\phi^{(i)} \left( \frac{z_i^\top}{\norm{z_i}_2} \sqrt{2d} v \right)  \left( e_j^\top \left( \sqrt{2d} v\right) \right) \right] = \frac{p}{\sqrt{2d}} \frac{ {\tilde \mu}_1^{(i)} z_i^\top e_j}{\norm{z_i}_2},
\end{equation}
where ${\tilde \mu}_1^{(i)}$ is the first Hermite coefficient of $\tilde \phi^{(i)}$. Then, denoting with $\tilde D \in \R^{n \times n}$ the diagonal matrix containing ${\tilde \mu}_1^{(i)} / \norm{z_i}_2$ in its $i$-th entry, we can write
\begin{equation}
    \opnorm{\E_V \left[ \tilde \Phi V \right] } = \frac{p}{\sqrt{2d}} \opnorm{\tilde D Z} \leq \frac{p}{\sqrt{2d}} \opnorm{\tilde D} \opnorm{Z}.
\end{equation}
Then, since we conditioned on $\norm{z_i}_2 = \Theta \left(\sqrt d \right)$ for all $i \in [n]$, following the same argument as in \eqref{eq:mu1conc}, and since the first Hermite coefficient of $\tilde \phi$ is 0 by definition, we have 
\begin{equation}
    \opnorm{\E_V \left[ \tilde \Phi V \right] } = \bigO{\frac{p}{\sqrt{d}} \, \frac{\log d}{d} \, \sqrt d} = \bigO{\frac{p \log d}{d}}, 
\end{equation}
with probability at least $1 - 2 \exp \left( -c_3 \log^2 d\right)$ over $Z$. A standard application of the triangle inequality to this last equation and \eqref{eq:polyvsexp} then gives
\begin{equation}\label{eq:forlater}
    \opnorm{\E_V \left[ \tilde \Phi E V \right] } \leq \opnorm{\E_V \left[ \tilde \Phi V \right] } + \opnorm{\E_V \left[ \tilde \Phi \left( I - E \right) V \right]} = \bigO{\frac{p \log d}{d}}, 
\end{equation}
with probability at least $1 - 2 \exp \left( -c_4 \log^2 d\right)$ over $Z$.

Let's now look at
\begin{equation}\label{eq:sumforB}
    \tilde \Phi E V  - \E_V \left[ \tilde \Phi E V \right] = \sum_{k = 1}^p \tilde \phi (Z v_k) E_k v_k^\top - \E_{v_k} \left[ \tilde \phi (Z v_k) E_k v_k^\top \right] =: \sum_{k = 1}^p W_k,
\end{equation}
where we defined the shorthand $W_k = \tilde \phi (Z v_k) E_k v_k^\top - \E_{v_k} \left[ \tilde \phi (Z v_k) E_k v_k^\top \right]$. \eqref{eq:sumforB} is the sum of $p$ i.i.d.\ mean-0 random matrices $W_k$ (in the probability space of $V$), such that
\begin{equation}
\begin{aligned}
    \sup_{v_k} \opnorm{\tilde \phi (Z v_k) E_k v_k^\top - \E_{v_k} \left[ \tilde \phi (Z v_k) E_k v_k^\top \right]} &\leq 2 \sup_{v_k} \opnorm{\tilde \phi (Z v_k) E_k v_k^\top} \\
    &= 2 \sup_{v_k} \norm{\tilde \phi (Z v_k)}_2 \norm{v_k}_2 E_k \\
    &\leq 2C_1^2 \sqrt d,
\end{aligned}
\end{equation}
because of \eqref{eq:ek}. Then, by matrix Bernstein's inequality for rectangular matrices (see Exercise 5.4.15 in \cite{vershynin2018high}), we have that
\begin{equation}\label{eq:bernstein}
    \P_V \left( \opnorm{\tilde \Phi E V  - \E_V \left[ \tilde \Phi E V \right]} \geq t  \right) \leq \left( n + d \right) \exp \left( - \frac{t^2 / 2}{\sigma^2 +2C_1^2 \sqrt d t / 3 } \right),
\end{equation}
where $\sigma^2$ is defined as
\begin{equation}\label{eq:sigmamax}
    \sigma^2 = p \max\left( \opnorm{\E_{v_k} \left[ W_k W_k^\top \right]}, \opnorm{\E_{v_k} \left[ W_k^\top W_k \right]} \right).
\end{equation}
For every matrix $A$, we have $\E \left[ \left(A - \E [A] \right) \left(A - \E [A] \right)^\top \right] = \E \left[ AA^\top \right] - \E [A] \E[A]^\top \preceq \E \left[ AA^\top \right]$. Thus,
\begin{equation}
\begin{aligned}
     \opnorm{\E_{v_k} \left[ W_k W_k^\top \right]} &\leq \opnorm{\E_{v_k} \left[ \tilde \phi (Z v_k) E_k v_k^\top v_k E_k \tilde \phi (Z v_k)^\top \right]} \\
     &\leq \opnorm{\E_{v_k} \left[ \tilde \phi (Z v_k) \tilde \phi (Z v_k)^\top \right]} \sup_{v_k} \left( E_k \norm{v_k}_2^2 \right) \\
     &\leq C_1^2 \opnorm{\E_{v_k} \left[ \tilde \phi (Z v_k) \tilde \phi (Z v_k)^\top \right]} \\
     &= \bigO{1},
\end{aligned}
\end{equation}
where the last step is a direct consequence of Lemma \ref{lemma:long}, applied to $\tilde \Phi$ instead of to $\Phi$, and holds with probability at least $1 - 2 \exp \left( -c_5 \log^2 d\right)$ over $Z$. For the other argument in the $\max$ in \eqref{eq:sigmamax} we similarly have
\begin{equation}
\begin{aligned}
     \opnorm{\E \left[ W_k^\top W_k \right]} &\leq \opnorm{\E_{v_k} \left[ v_k E_k \tilde \phi (Z v_k)^\top \tilde \phi (Z v_k) E_k v_k^\top  \right]} \\
     &\leq \opnorm{\E_{v_k} \left[ v_k v_k^\top \right]} \sup_{v_k} \left( E_k \norm{\tilde \phi (Z v_k)}_2^2 \right) \\
     &\leq \frac{1}{d} \, C_1^2 d  \\
     &= \bigO{1}.
\end{aligned}
\end{equation}
Then, plugging these last two equations in \eqref{eq:bernstein} we get
\begin{equation}\label{eq:cnew}
\begin{aligned}
    \P_V \left( \opnorm{\tilde \Phi E V  - \E_V \left[ \tilde \Phi E V \right]} \geq \sqrt p \log d  \right) &\leq \left( n + d \right) \exp \left( - \frac{p \log^2 d / 2}{C_2 p + 2C_1^2 \sqrt d \sqrt p \log d / 3 } \right) \\
    &\leq 2 \exp \left( -c_6 \log^2 d \right),
\end{aligned}
\end{equation}
where we used Assumption \ref{ass:overparam}. Then, applying a triangle inequality and using \eqref{eq:forlater} and  \eqref{eq:cnew}, we get
\begin{equation}
    \opnorm{\tilde \Phi E V} = \bigO{\sqrt p \log d + \frac{p \log d}{d}},
\end{equation}
with probability at least $1 - 2 \exp \left( -c_7 \log^2 d \right)$ over $Z, V$.
Then, since $E = I$ with probability at least $1 - 2 p \exp \left( -c_3 d \right)$, using  Assumption \ref{ass:overparam} we get the desired result.

\end{proof}

\begin{lemma}\label{lemma:3terms}
    Let $z \in \R^{2d}$ be sampled from a distribution respecting Assumption \ref{ass:data2}, not necessarily with the same covariance as $\mathcal P_{XY}$, independent from everything else, and let $f_{\textup{RF}}(\hat \theta_{\textup{RF}}(\lambda), z)$ be the RF model defined in \eqref{eq:rf} with $\hat \theta_{\textup{RF}}(\lambda)$ defined in \eqref{eq:hatthetarf}, with $\lambda \geq 0$. Then, we have that 
    \begin{equation}
        \left| f_{\textup{RF}}(\hat \theta_{\textup{RF}}(\lambda), z) - \mu_1^2 p \frac{z^\top Z^\top}{2d}  \left( \Phi \Phi^\top + n \lambda I \right)^{-1} G \right| = \bigO{\frac{d^{1/4} \log d}{p^{1/4}} + \frac{\log^{3/2} d}{d^{1/8}}} = o(1),
    \end{equation}
    with probability $1 - C \sqrt d \log^2 d / \sqrt p - C \log^3 d / d^{1/4}$ over $Z$, $G$, $V$ and $z$, where $C$ is an absolute constant
\end{lemma}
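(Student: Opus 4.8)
The plan is to peel off, one at a time, the three error terms separating $f_{\textup{RF}}(z, \hat\theta_{\textup{RF}}(\lambda))$ from the target, controlling each by combining the operator-norm estimates of Lemmas \ref{lemma:facts}, \ref{lemma:conckernel}, \ref{lemma:matbernstein} and \ref{lemma:Eopnormsmall} with the fact that the fresh test point $z$ is independent of $(Z, V, G)$ and sub-Gaussian (Assumption \ref{ass:data2}). First I would set $M := (\Phi\Phi^\top + n\lambda I)^{-1} G$ and record the elementary bounds, each valid with probability $1 - 2\exp(-c\log^2 d)$: $\|G\|_2 = \bigO{\sqrt n}$ (sub-Gaussian labels), $\evmin{\Phi\Phi^\top} = \Omega(p)$ and $\opnorm{\Phi} = \bigO{\sqrt p}$ (Lemma \ref{lemma:conckernel}), and $\opnorm{Z} = \bigO{\sqrt d}$, $\opnorm{V} = \bigO{\sqrt{p/d}}$, $\|z\|_2 = \bigO{\sqrt d}$ (Lemma \ref{lemma:facts}). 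Since $n = \Theta(d)$, these give $\|M\|_2 \le \|G\|_2 / \evmin{\Phi\Phi^\top} = \bigO{\sqrt d / p}$ and, using $\hat\theta_{\textup{RF}}(\lambda) = \Phi^\top M$ together with $\opnorm{\Phi^\top(\Phi\Phi^\top + n\lambda I)^{-1}} \le \evmin{\Phi\Phi^\top}^{-1/2}$, also $\|\hat\theta_{\textup{RF}}(\lambda)\|_2 = \bigO{\sqrt{d/p}}$.

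Using $\phi = \mu_1(\cdot) + \tilde\phi$, $\Phi = \mu_1 Z V^\top + \tilde\Phi$ and $\hat\theta_{\textup{RF}}(\lambda) = \Phi^\top M$, I obtain the exact algebraic decomposition
\begin{equation*}
f_{\textup{RF}}(z, \hat\theta_{\textup{RF}}(\lambda)) = \phi(Vz)^\top \Phi^\top M = \tilde\phi(Vz)^\top \hat\theta_{\textup{RF}}(\lambda) + \mu_1\, z^\top V^\top \tilde\Phi^\top M + \mu_1^2\, z^\top V^\top V Z^\top M .
\end{equation*}
Subtracting the target $\mu_1^2 p\, z^\top Z^\top M / (2d)$ from the last summand, the quantity to bound is at most $|T_1| + |\mu_1|\,|T_2| + \mu_1^2\,|T_3|$, with $T_1 := \tilde\phi(Vz)^\top \hat\theta_{\textup{RF}}(\lambda)$, $T_2 := z^\top V^\top \tilde\Phi^\top M$ and $T_3 := z^\top (V^\top V - \tfrac{p}{2d} I) Z^\top M$; since $|\mu_1| = \bigO{1}$ it suffices to bound $T_1, T_2, T_3$ separately.

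For $T_2$ and $T_3$ I would condition on $(Z, V, G)$ and use that, by Lipschitz concentration of $z$, the map $z \mapsto z^\top w$ is mean-zero sub-Gaussian with parameter $\bigO{\|w\|_2}$, so each of $T_2, T_3$ is $\bigO{\|w\|_2 \log d}$ with probability $1 - 2\exp(-c\log^2 d)$. For $T_2$, $w = V^\top \tilde\Phi^\top M$ satisfies $\|w\|_2 \le \opnorm{\tilde\Phi V}\,\|M\|_2 = \bigO{\bigl(\sqrt p \log d + \tfrac{p\log d}{d}\bigr)\tfrac{\sqrt d}{p}}$ by Lemma \ref{lemma:matbernstein}, so $|T_2| = \bigO{\tfrac{\sqrt d \log^2 d}{\sqrt p} + \tfrac{\log^2 d}{\sqrt d}}$. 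For $T_3$, $w = (V^\top V - \tfrac{p}{2d} I) Z^\top M$; since $\sqrt{2d}\,V$ is a $p \times 2d$ matrix with i.i.d.\ standard Gaussian entries and $d = o(p)$, standard Gaussian concentration for sample covariances (e.g.\ Theorem 5.39 in \cite{vershrandmat}) gives $\opnorm{V^\top V - \tfrac{p}{2d} I} = \bigO{\sqrt{p/d}}$, hence $\|w\|_2 \le \opnorm{V^\top V - \tfrac{p}{2d} I}\opnorm{Z}\,\|M\|_2 = \bigO{\sqrt{d/p}}$ and $|T_3| = \bigO{\sqrt{d/p}\,\log d}$. By Assumption \ref{ass:overparam} and $n = \Theta(d)$, both are $o(1)$ and each is dominated by the two terms of the claimed rate.

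The term $T_1$ is the main obstacle: here $z$ enters through the nonlinearity and cannot be linearized, which is where Lemma \ref{lemma:Eopnormsmall} enters. Conditioning on $(Z, V, G)$, I would bound the second moment over $z$,
\begin{equation*}
\E_z\!\left[ T_1^2 \right] = \hat\theta_{\textup{RF}}(\lambda)^\top \E_z\!\left[ \tilde\phi(Vz) \tilde\phi(Vz)^\top \right] \hat\theta_{\textup{RF}}(\lambda) \le \opnorm{\E_z\!\left[ \tilde\phi(Vz) \tilde\phi(Vz)^\top \right]} \|\hat\theta_{\textup{RF}}(\lambda)\|_2^2 = \bigO{ \frac{d\log^4 d}{p} + \frac{\log^3 d}{\sqrt d} } ,
\end{equation*}
combining Lemma \ref{lemma:Eopnormsmall} with $\|\hat\theta_{\textup{RF}}(\lambda)\|_2^2 = \bigO{d/p}$, and then apply Markov's inequality to the two summands separately, with thresholds $\tfrac{d^{1/4}\log d}{p^{1/4}}$ (failure probability $\bigO{\sqrt d \log^2 d / \sqrt p}$) and $\tfrac{\log^{3/2} d}{d^{1/8}}$ (failure probability $\bigO{\log^3 d / d^{1/4}}$), which gives $|T_1| = \bigO{\tfrac{d^{1/4}\log d}{p^{1/4}} + \tfrac{\log^{3/2} d}{d^{1/8}}}$. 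The weaker powers $p^{1/4}, d^{1/8}$ (rather than $p^{1/2}, d^{1/4}$) in the final rate — and hence the two polynomial terms in the stated failure probability — are exactly the price of this Markov step. Finally, combining the three bounds by the triangle inequality and taking a union bound over all the high-probability events above (the $\exp(-c\log^2 d)$- and $p^2\exp(-c\log^2 d)$-type failures being absorbed into the constant, since $\log p = \Theta(\log d)$ by Assumption \ref{ass:overparam}) yields the claim; the final $o(1)$ then follows because $p = \omega(n\log^4 n)$ and $n = \Theta(d)$ force both terms of the rate to vanish.
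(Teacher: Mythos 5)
Your proposal is correct and follows essentially the same route as the paper: the identical decomposition into the target term plus the three errors $z^\top(V^\top V - \tfrac{p}{2d}I)Z^\top \hat G$, $z^\top V^\top \tilde\Phi^\top \hat G$, and $\tilde\phi(Vz)^\top \Phi^\top \hat G$, bounded respectively via the covariance concentration of $V$, Lemma \ref{lemma:matbernstein}, and a second-moment-plus-Markov argument using Lemma \ref{lemma:Eopnormsmall}, with the same $\bigO{\sqrt d/p}$ control of $\hat G$ from Lemma \ref{lemma:conckernel}. Your split of the Markov step into two thresholds is only cosmetically different from the paper's single threshold $t = \min(\sqrt p/(\sqrt d\log^2 d),\, d^{1/4})$ and yields the same rate and failure probability.
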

\begin{proof}
    Let $\tilde \phi: \R \to \R$ be defined as $\tilde \phi(\cdot) := \phi(\cdot) - \mu_1 (\cdot)$, and let $\tilde \Phi \in \R^{n \times p}$ be defined as the matrix containing $\tilde \phi \left( V z_i \right)$ in its $i$-th row. Then, introducing the shorthand $\hat G = \left( \Phi \Phi^\top + n \lambda I \right)^{-1} G$, we can write
    \begin{equation}\label{eq:3terms}
    \begin{aligned}
        f_{\textup{RF}}(\hat \theta_{\textup{RF}}(\lambda), z) &= \left( \mu_1 V z + \tilde \phi \left( V z \right) \right)^\top \left( \mu_1 V Z^\top + \tilde \Phi^\top \right) \hat G \\
        &= \mu_1^2 p \frac{z^\top Z^\top}{2d}  \hat G + \mu_1^2 z^\top \left( V^\top V - \frac{p}{2d} I \right) Z^\top \hat G + \mu_1 z^\top V^\top \tilde \Phi^\top \hat G + \tilde \phi \left( V z \right)^\top \Phi \hat G.
    \end{aligned}
    \end{equation}
    Notice that since every entry of $G$ is sub-Gaussian and independent by Assumption \ref{ass:data2}, Theorem 3.1.1 in \cite{vershynin2018high} readily gives $\norm{G}_2 = \bigO{\sqrt n} = \bigO{\sqrt d}$ with probability at least $1 - 2 \exp (-c_1 d)$ over $G$. Then, conditioning on the high probability event described by Lemma \ref{lemma:conckernel}, we get
    \begin{equation}\label{eq:hatG}
        \norm{\hat G}_2 \leq \left(\evmin{\Phi \Phi^\top + n \lambda I}\right)^{-1} \norm{G}_2 \leq \left(\evmin{\Phi \Phi^\top}\right)^{-1} \norm{G}_2 = \bigO{\frac{\sqrt{d}}{p}},
    \end{equation}
    with probability at least $1 - 2 \exp \left( -c_2 \log^2 d \right)$ over $V$, $Z$ and $G$. We will condition on this high probability event until the end of the proof. Let's then investigate the last 3 terms on the RHS of \eqref{eq:3terms} separately:
    \begin{itemize}
        \item[(i)] A direct application of Theorem 5.39 of \cite{vershrandmat} (see their Equation 5.23) gives
        \begin{equation}
            \opnorm{ \frac{2d}{p} V^\top V - I} = \bigO{\sqrt{\frac{d}{p}}},
        \end{equation}
        with probability at least  $1 - 2 \exp \left( -c_3 d \right)$ over $V$. Then, since $z$ is sub-Gaussian and independent from everything else, with probability $1 - 2 \exp \left( -c_4 \log^2 d \right)$ over itself we have
        \begin{equation}
        \begin{aligned}
            \left|  \mu_1^2 z^\top \left( V^\top V - \frac{p}{2d} I \right) Z^\top \hat G \right| &\leq \log d \norm{\left( V^\top V - \frac{p}{2d} I \right) Z^\top \hat G}_2 \\
            &\leq \log d \opnorm{V^\top V - \frac{p}{2d} I} \opnorm{Z} \norm{\hat G}_2 \\
            &=\bigO{\log d \, \sqrt{\frac{p}{d}} \, \sqrt d \, \frac{\sqrt{d}}{p}} = \bigO{\frac{\sqrt d \log d}{\sqrt p}},
        \end{aligned}
        \end{equation}
        where the third step holds with probability at least $1 - 2 \exp \left( -c_5  d \right)$ over $Z$ due to Lemma \ref{lemma:facts}.
        \item[(ii)] As before, since $z$ is sub-Gaussian and independent from everything else, with probability $1 - 2 \exp \left( -c_4 \log^2 d \right)$ we have
        \begin{equation}\label{eq:forproofsketch2}
        \begin{aligned}
            \left|\mu_1 z^\top V^\top \tilde \Phi^\top \hat G  \right| &\leq \log d \opnorm{V^\top \tilde \Phi^\top} \norm{\hat G}_2 \\
            &= \bigO{\log d \left( \sqrt p \log d + \frac{p \log d}{d} \right) \frac{\sqrt d}{p}} = \bigO{\log^2 d 
            \left( \sqrt{\frac{d}{p}} + \frac{1}{\sqrt d}\right)},
        \end{aligned}
        \end{equation}
        where the second step holds because of Lemma \ref{lemma:matbernstein}, and holds with probability at least $1 - 2 \exp \left( -c_6 \log^2 d \right)$ over $Z$ and $V$.
        \item[(iii)] For the last term of the RHS of \eqref{eq:3terms}, its second moment in the probability space of $z$ reads
        \begin{equation}
        \begin{aligned}
            \E_z \left[ \hat G^\top \Phi^\top \tilde \phi \left( V z \right) \tilde \phi \left( V z \right)^\top \Phi \hat G \right] &\leq \opnorm{\E_z \left[ \tilde \phi \left( V z \right) \tilde \phi \left( V z \right)^\top \right]}  \opnorm{\Phi}^2 \norm{\hat G}_2^2 \\
            &= \bigO{\left(\log^4 d + \frac{p \log^3 d}{d^{3/2}} \right) \sqrt d \frac{\sqrt d}{p}} = \bigO{\frac{d \log^4 d}{p} + \frac{\log^3 d}{\sqrt d}},
        \end{aligned}
        \end{equation}
        where the second step follows from Lemmas \ref{lemma:Eopnormsmall} and \ref{lemma:conckernel}, and holds with probability at least $1 - 2 \exp \left( -c_7 \log^2 d \right)$ over $Z$ and $V$. Then, by Markov inequality, we have that there exists a constant $C_1$ such that 
        \begin{equation}
            \left( \tilde \phi \left( V z \right)^\top \Phi \hat G \right)^2 < C_1 \left(\frac{d \log^4 d}{p} + \frac{\log^3 d}{\sqrt d}\right) t,
        \end{equation}
        with probability at least $1 - 1 / t$ over $z$. Setting
        \begin{equation}
            t = \min\left(\frac{\sqrt p}{\sqrt d \log^2 d}, d^{1/4} \right) = \omega(1),
        \end{equation}
        since $p =\omega \left(d \log^4 d \right)$ by Assumption \ref{ass:overparam}, we have
        \begin{equation}\label{eq:forproofsketch}
            \left| \tilde \phi \left( V z \right)^\top \Phi \hat G \right| = \bigO{\frac{d^{1/4} \log d}{p^{1/4}} + \frac{\log^{3/2} d}{d^{1/8}}},
        \end{equation}
        with probability at least $1 - \sqrt d \log^2 d / \sqrt p - \log^3 d / d^{1/4}$.
    \end{itemize}
    Then, plugging (i), (ii) and (iii) in \eqref{eq:3terms} gives
    \begin{equation}
        \left| f_{\textup{RF}}(\hat \theta_{\textup{RF}}(\lambda), z) - \mu_1^2 p \frac{z^\top Z^\top}{2d}  \hat G \right| = \bigO{\frac{d^{1/4} \log d}{p^{1/4}} + \frac{\log^{3/2} d}{d^{1/8}}},
    \end{equation}
    with probability at least $1 - c_8 \frac{\sqrt d \log^2 d}{\sqrt p} - c_8 \frac{\log^3 d}{d^{1/4}}$, which gives the desired result.
\end{proof}

\paragraph{Proof of Theorem \ref{thm:rf}}
Let $E \in \R^{n \times n}$ be the matrix defined as
\begin{equation}
    E = \Phi \Phi^\top - p \left( \mu_1^2 \frac{ZZ^\top}{2d} + {\tilde \mu}^2 I \right).
\end{equation}
Note that
\begin{equation}\label{eq:opnormE}
    \opnorm{E} \leq \opnorm{\Phi \Phi^\top - \E_{V} \left[ \Phi \Phi^\top \right]} + \opnorm{\E_{V} \left[ \Phi \Phi^\top \right] - p \left( \mu_1^2 \frac{ZZ^\top}{2d} + {\tilde \mu}^2 I \right)} = \bigO{p \left( \sqrt{\frac{d}{p}} + \frac{\log^3 d}{\sqrt d}\right)},
\end{equation}
with probability at least $1 - 2 \exp \left( -c_1 \log^2 d \right)$ over $Z, V$ due to Lemmas \ref{lemma:long} and \ref{lemma:conckernel}. By the Woodbury matrix identity (or Hua's identity), we have
\begin{equation}
\begin{aligned}
    \left( \Phi \Phi^\top + n \lambda I \right)^{-1} &= \left( p \left( \mu_1^2 \frac{ZZ^\top}{2d} + {\tilde \mu}^2 I \right) + E + n \lambda I \right)^{-1} \\
    &= \left(  \mu_1^2  p \frac{ZZ^\top}{2d} + \left( {\tilde \mu}^2 p  + n \lambda \right) I + E \right)^{-1} \\
    &= \left(  \mu_1^2  p \frac{ZZ^\top}{2d} + \left( {\tilde \mu}^2 p  + n \lambda \right) I \right)^{-1} \\
    & \qquad - \left(  \mu_1^2  p \frac{ZZ^\top}{2d} + \left( {\tilde \mu}^2 p  + n \lambda \right) I \right)^{-1} E  \left( \mu_1^2  p \frac{ZZ^\top}{2d} + \left( {\tilde \mu}^2 p  + n \lambda \right) I + E  \right)^{-1},
\end{aligned}
\end{equation}
which gives
\begin{equation}\label{eq:usingEop}
\begin{aligned}
     &\left| \mu_1^2 p \frac{z^\top Z^\top}{2d} \left( \Phi \Phi^\top + n \lambda I \right)^{-1} G - \mu_1^2 p \frac{z^\top Z^\top}{2d}\left(  \mu_1^2  p \frac{ZZ^\top}{2d} + \left( {\tilde \mu}^2 p  + n \lambda \right) I \right)^{-1} G \right| \\
     &\qquad \leq \left|  \mu_1^2 p \frac{z^\top Z^\top}{2d} \left(  \mu_1^2  p \frac{ZZ^\top}{2d} + \left( {\tilde \mu}^2 p  + n \lambda \right) I \right)^{-1} E  \left( \mu_1^2  p \frac{ZZ^\top}{2d} + \left( {\tilde \mu}^2 p  + n \lambda \right) I + E  \right)^{-1} G \right| \\
     &\qquad \leq \log d \norm{\frac{p}{2d} Z^\top \left(  \mu_1^2  p \frac{ZZ^\top}{2d} + \left( {\tilde \mu}^2 p  + n \lambda \right) I \right)^{-1} E  \left( \mu_1^2  p \frac{ZZ^\top}{2d} + \left( {\tilde \mu}^2 p  + n \lambda \right) I + E  \right)^{-1} G }_2 \\
     &\qquad \leq \log d \, \frac{p}{2d} \opnorm{Z} \frac{1}{{\tilde \mu}^2 p} \opnorm{E} \frac{1}{\evmin{\Phi \Phi^\top}} \norm{G}_2 \\
     &\qquad = \bigO{\log d \, \frac{p}{d} \, \sqrt d \, \frac{1}{p} \, p \left( \sqrt{\frac{d}{p}} + \frac{\log^3 d}{\sqrt d} \right) \, \frac{1}{p} \, \sqrt{d}} \\
     &\qquad = \bigO{\log d \sqrt{\frac{d}{p}} + \frac{\log^4 d}{\sqrt d}}.
\end{aligned}
\end{equation}
Here, the second step holds with probability at least $1 - 2 \exp \left( -c_2 \log^2 d \right)$ since $z$ is sub-Gaussian and independent from everything else;  the fourth step is a consequence of Lemma \ref{lemma:facts}, \eqref{eq:opnormE}, Lemma \ref{lemma:conckernel}, and $\norm{G}_2 = \bigO{\sqrt d}$ (see the argument prior to \eqref{eq:hatG}), and as a whole holds with probability $1 - 2 \exp \left( -c_3 \log^2 d \right)$ over $Z$, $G$, and $V$.

Note that the second term in the LHS of \eqref{eq:usingEop} can be written as
\begin{equation}\label{eq:equaltoLR}
\begin{aligned}
    \mu_1^2 p \frac{z^\top Z^\top}{2d}\left(  \mu_1^2  p \frac{ZZ^\top}{2d} + \left( {\tilde \mu}^2 p  + n \lambda \right) I \right)^{-1} G &= z^\top Z^\top \left( Z Z^\top + n \left( \frac{2{\tilde \mu}^2 d}{\mu_1^2 n} +  \frac{2d}{\mu_1^2 p} \lambda \right) I \right)^{-1} G \\
    &= z^\top \left( Z^\top Z + n \left( \frac{2{\tilde \mu}^2 d}{\mu_1^2 n} +  \frac{2d}{\mu_1^2 p} \lambda \right) I \right)^{-1} Z^\top G \\
    &= f_{\textup{LR}}(\hat \theta_{\textup{LR}}(\tilde \lambda), z),
\end{aligned}
\end{equation}
where the second line is due to the classical identity $A^\top \left(AA^\top +\kappa I \right)^{-1} = \left(A^\top A +\kappa I \right)^{-1} A^\top$, and the third line uses the definition in \eqref{eq:lrmodel}, with $\hat \theta_{\textup{LR}}(\tilde \lambda)$ defined in \eqref{eq:hatthetalambda} and
\begin{equation}
    \tilde \lambda = \frac{2{\tilde \mu}^2 d}{\mu_1^2 n} + \frac{2d}{\mu_1^2 p} \lambda.
\end{equation}
Furthermore, the first term of the LHS of \eqref{eq:usingEop} satisfies
\begin{equation}\label{eq:equaltoRF}
    \left|  \mu_1^2 p \frac{z^\top Z^\top}{2d}  \left( \Phi \Phi^\top + n \lambda I \right)^{-1} G - f_{\textup{RF}}(\hat \theta_{\textup{RF}}(\lambda), z)  \right| = \bigO{\frac{d^{1/4} \log d}{p^{1/4}} + \frac{\log^{3/2} d}{d^{1/8}}},
    \end{equation}
    with probability $1 - C_1 \sqrt d \log^2 d / \sqrt p - C_1 \log^3 d / d^{1/4}$ over $Z$, $G$, $V$ and $z$, due to Lemma \ref{lemma:3terms}. Thus, an application of the triangle inequality together with \eqref{eq:usingEop}, \eqref{eq:equaltoLR} and \eqref{eq:equaltoRF} gives the desired result.
\qed

\section[case z is x plus y]{Features Composed as $z = x + y$}\label{app:zisxplusy}
\simone{In this work, we consider the data to be composed as $ z = [x^\top, y^\top]^\top$. However, some of the high level results we obtain can be recovered for settings where the features are composed differently. As an example, let us consider the model
\begin{equation}
    z = x + y, \qquad  g = x^\top \theta^* + \varepsilon.
\end{equation}
Then, according to \eqref{eq:spurcov}, we have
\begin{equation}
    \mathcal C(\hat \theta) = \Cov \left( (\tilde x + y)^\top \hat \theta, x^\top \theta^* \right) = \hat \theta^\top \Sigma_{yx} \theta^*,
\end{equation}
which provides a quantity that could be studied again via the analysis in \cite{han2023distribution}. 
In this new setup, we remark that the full data covariance takes the form
\begin{equation}
    \Sigma_{zz} = \Sigma_{xx} + \Sigma_{yy} + \Sigma_{xy} + \Sigma_{yx}.
\end{equation}
In a nutshell, we expect the analysis for this setting to provide a qualitative behavior similar to that unveiled by the case $z = [x^\top, y^\top]^\top$. In fact, the experiments on Color-MNIST (which does not strictly follow the model $z = [x^\top, y^\top]^\top$, as the color overlaps with the core feature pattern as in the model $z = x + y$) suggest that our conclusions hold beyond the setting of ``orthogonal features''. We further remark that in the setting $z = [x^\top, y^\top]^\top$ the optimal solution $\hat \theta = \theta^*$ gives $\mathcal C = 0$, while this is not necessarily the case in the setting $z = x + y$.}

\section{\texorpdfstring{Proof of \eqref{eq:oodlossbody}: connection with out-of-distribution loss}{Proof of (4): connection with out-of-distribution loss}\label{app:ood}}


Let $\tilde x$ and $[x^\top, y^\top]^\top$ be sampled independently from $\mathcal P_X$ and $\mathcal P_{XY}$ respectively. For simplicity, assume that $\E \left[ f(\hat \theta, [\tilde x^\top, y])^2 \right] = \E \left[ f^*_x(\tilde x)^2 \right] = 1$ 
and $\E \left[ f(\hat \theta, [\tilde x^\top, y]) \right] = \E \left[ f^*_x(\tilde x) \right] = 0$. Thus, for the quadratic loss, we readily get
\begin{equation}\label{eq:oodloss}
\begin{aligned}
    \E_{\tilde x, y} \left[ \left( f(\hat \theta, [\tilde x^\top, y]) - f^*_x(\tilde x) \right)^2 \right] &= 2 - 2 \E_{\tilde x, y} \left[ f(\hat \theta, [\tilde x^\top, y]) f^*_x(\tilde x) \right] \\
    &= 2 - 2 \Cov \left( f(\hat \theta, [\tilde x^\top, y]) , f^*_x(\tilde x) \right).
\end{aligned}
\end{equation}
Denoting with $S$ the covariance matrix of the three random variables $f(\hat \theta, [\tilde x^\top, y])$, $f^*_x(\tilde x)$, and $f^*_x(x)$, we have
\begin{equation}\label{eq:Sblocks}
S = \left(\begin{array}{@{}c c c@{}}
  1 & \rho & \mathcal C \\
  \rho & 1 & 0 \\
  \mathcal C & 0 & 1
\end{array}\right),
\end{equation}
where we introduced 
$\mathcal C = \Cov \left( f(\hat \theta, [\tilde x^\top, y]) , f^*_x(x) \right)$ and $\rho = \Cov \left( f(\hat \theta, [\tilde x^\top, y]) , f^*_x(\tilde x) \right)$. Since $S$ is p.s.d., its determinant has to be non-negative, hence
\begin{equation}
    1 - \rho^2 - \mathcal C^2 \geq 0,
\end{equation}
which, when plugged in \eqref{eq:oodloss}, gives \eqref{eq:oodlossbody}. 

This bound suggests the close connection between $\mathcal C$ and the out-of-distribution test loss. 
In Figure \ref{fig:out}, we repeat the same experiments of Figures \ref{fig:simplicitysynthetic}-\ref{fig:simplicitycifar} and report in black the out-of-distribution test loss. The plots clearly show that $\mathcal C$ and the out-of-distribution test loss follow a similar trend, for both linear regression and random features.

\begin{figure}
    \centering
    \begin{minipage}{0.49\textwidth}
        \centering
        \includegraphics[width=\linewidth]{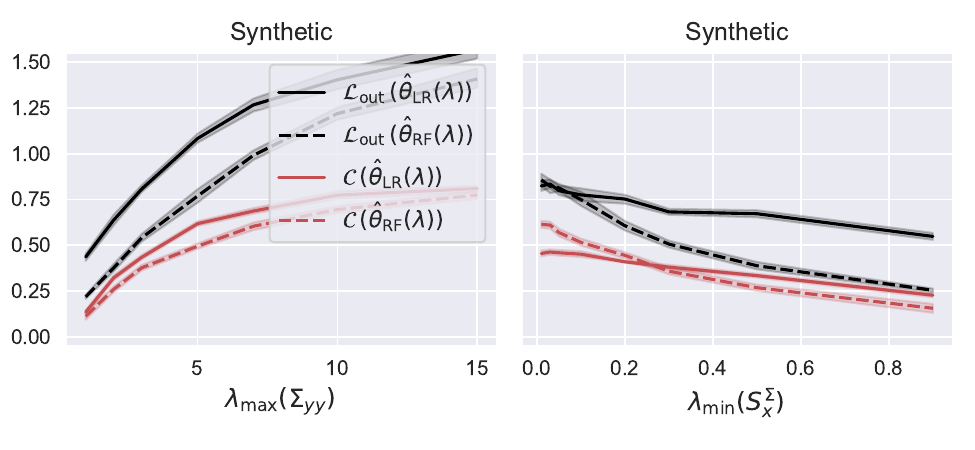}
    \end{minipage}
    \hfill
    \begin{minipage}{0.49\textwidth}
        \centering
        \includegraphics[width=\linewidth]{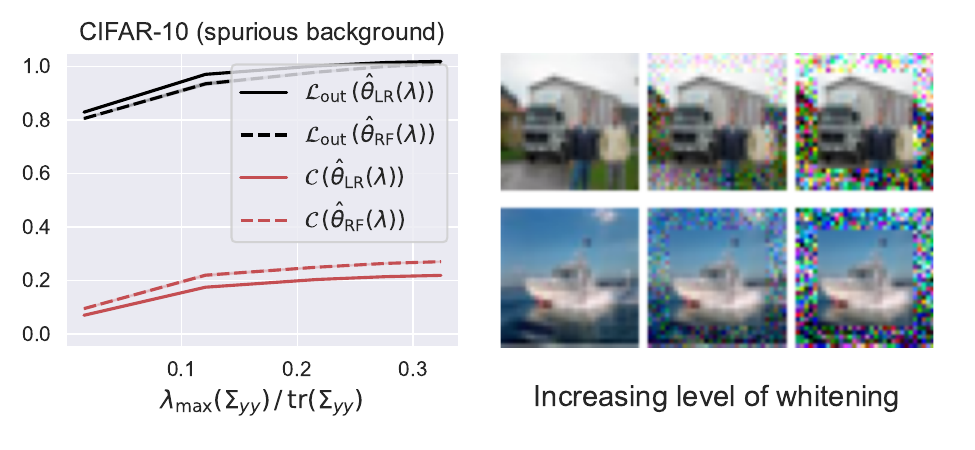}
    \end{minipage}
    \caption{Out-of-distribution test loss $\mathcal L(\hat \theta_{\textup{LR}/\textup{RF}}(\lambda))$ (black) and spurious correlations $\mathcal C(\hat \theta_{\textup{LR}/\textup{RF}}(\lambda)$ (red) as a function of $\evmax{\Sigma_{yy}}$ (first panel) and $\evmin{S_x^\Sigma}$ (second panel) on a Gaussian synthetic dataset, and for the CIFAR-10 experiment (third panel). We consider the same set-up as Figures \ref{fig:simplicitysynthetic} and \ref{fig:simplicitycifar}, for Gaussian and CIFAR-10 data, respectively.}
    \label{fig:out}
\end{figure}

\section{Experimental details}\label{app:experiments}

All the plots in the figures report the average over 10 independent trials, with a shaded area describing a confidence interval of 1 standard deviation. For the Gaussian and Color-MNIST datasets, every iteration involves re-generating (or re-coloring) the data, while for the CIFAR-10 dataset the randomness comes from the model and the training algorithm.

\paragraph{Synthetic Gaussian data generation.} This follows the same model across all the numerical experiments presented in the paper. In particular, we fix $d = 400$ and set $\Sigma_{xx} = I$. $\Sigma_{yy}$ is a diagonal matrix, such that its first entry equals $\evmax{\Sigma_{yy}}$ and all the other entries equal $\left(d - \evmax{\Sigma_{yy}}\right) / (d - 1)$. In this way, $\tr(\Sigma) = 2d$. Then, we set the off-diagonal blocks $\Sigma_{xy}$ and $\Sigma_{yx}$ to the same diagonal matrix, so that
\begin{equation}
    \Sigma_{xy} = \Sigma_{yx} = (\Sigma_{yy} - \beta I)^{1/2},
\end{equation}
which implies that the Schur complement $S^\Sigma_{x} = \beta I$ and, therefore, $\evmin{S^\Sigma_x} = \beta$. To conclude, we set the ground truth $\theta^*_x = e_1$, \emph{i.e.}, the first element of the canonical basis in $\R^d$. This design choice is motivated by our interest in capturing the role of $\evmax{\Sigma_{yy}}$ and to have an easy control on the  Schur complement $S^\Sigma_x$ (which is therefore chosen to be proportional to the identity).

Unless differently stated in the figure, $n=2000$, $\evmax{\Sigma_{yy}} = 2$, $\beta = 0.5$, and $\lambda = 1$. Furthermore, to generate the labels, we add an independent noise with variance $\sigma^2 = 0.25$, and we subtract this quantity from the test loss, so that the optimal predictor $\theta^*$ has a test loss equal to 0.

When we use an RF model, on every dataset, unless differently stated in the figure, we use $\tanh$ as activation function, with $p = 20000$ neurons.

\paragraph{Binary color MNIST.} This dataset is graphically shown in Figure \ref{fig:intro}. To generate it, we take a subset of the MNIST training dataset ($n = 1000$ samples as default, unless differently specified) made only of zeros and ones. Then, for every training image, we color the white portion in blue (red) with probability $(1 + \alpha) / 2$ if the digit is a zero (one), and red (blue) otherwise. For the test set, we proceed in the same way, but setting $\alpha = 0$, to make the core feature (the digit) effectively independent from the spurious one (the color). For all the experiments, we set $\beta = 1 - \alpha^2 = 0.25$.

\paragraph{CIFAR-10.} For the experiments on CIFAR-10, we implicitly suppose that the middle $22 \times 22$ square contains the core, predictive feature $x$. Thus, we sum to all the channels of the outer region white noise with increasing variance, and we later clamp the pixels to ensure their value is between 0 and 1. Increasing the variance of the noise, this progressively makes the outer portion being dominated by random noise, thus reducing its value of $\evmax{\Sigma_{yy}} / \tr(\Sigma_{yy})$ when estimating the covariance on the perturbed training set. At test time, we take the images from the CIFAR-10 test set, and we add the same level of noise. To compute $\mathcal C$, we create an out-of-distribution dataset where the core features are randomly permuted across different backgrounds. We always consider the subset of boats and trucks, which contains $n=10000$ images.

\paragraph{2-layer neural network.} In the experiments shown in Figure \ref{fig:nn}, we consider a 2-layer neural network trained with gradient descent and quadratic loss on the Color-MNIST and CIFAR-10 datasets. For both datasets, we train for 1000 epochs, with learning rate 0.003, and batch size 1000.

\subsection{Additional Experiments}

\begin{figure}[t]
  \centering
  \begin{subfigure}[b]{0.49\textwidth}
    \centering
    \includegraphics[width=\linewidth]{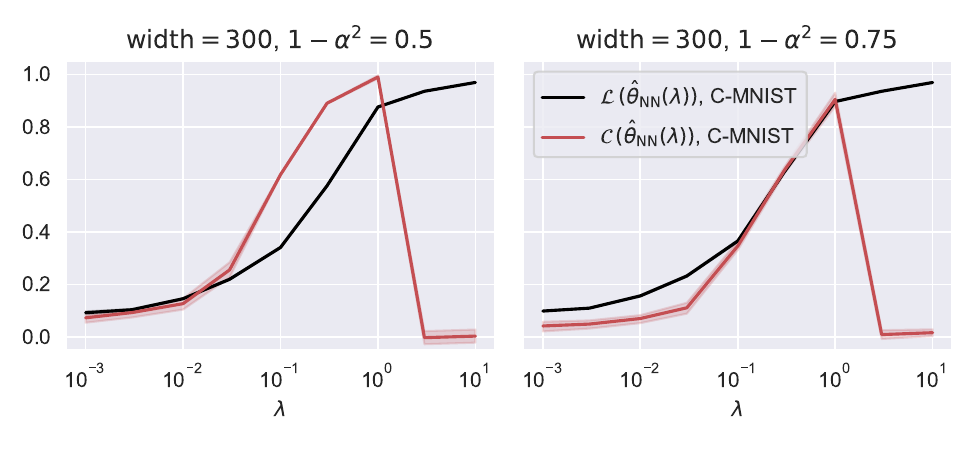}
  \end{subfigure}
  \hfill
  \begin{subfigure}[b]{0.49\textwidth}
    \centering
    \raisebox{-0.5em}{\includegraphics[width=\linewidth]{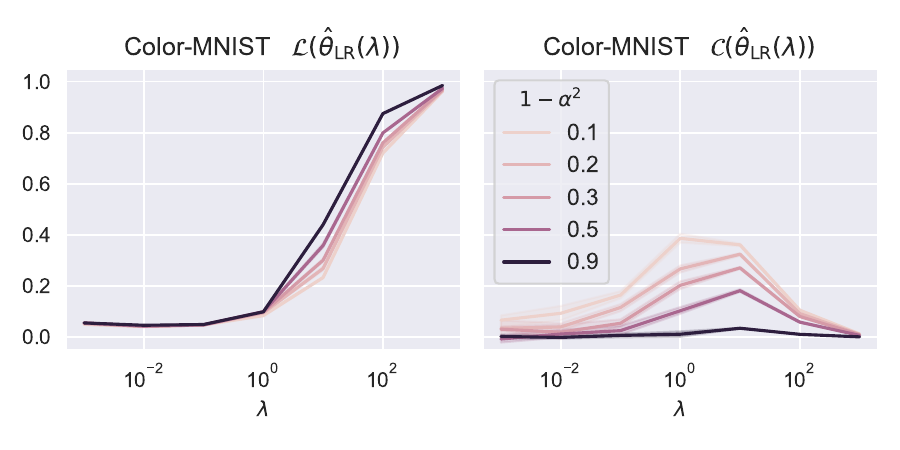}}
  \end{subfigure}
  \caption{Test loss $\mathcal L(\hat \theta_{\textup{NN}/\textup{LR}}(\lambda))$ (black) and spurious correlations $\mathcal C(\hat \theta_{\textup{NN}/\textup{LR}}(\lambda)$ (red) as a function of $\lambda$. \emph{First and second panel:} 2-layer fully connected ReLU network, trained on the multi-class color(C)-MNIST, for two different values of $\alpha$.
  \emph{Third and fourth panel:} Same setup as in Figure \ref{fig:lambda}, for the binary C-MNIST dataset, with multiple values of $\alpha$.
  }
  \label{fig:rebuttal}
\end{figure}

\begin{wrapfigure}{r}{0.5\textwidth}
  \vspace{-0.5cm}        
  \centering
  \includegraphics[width=0.5\textwidth]{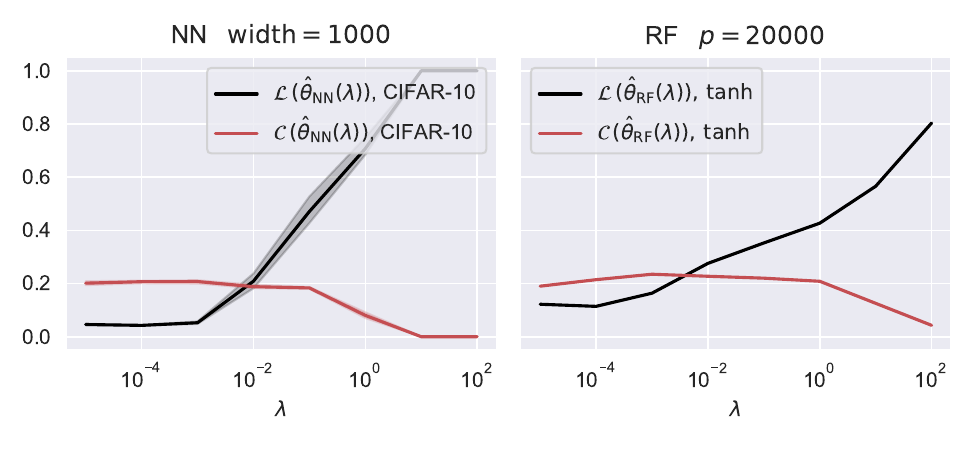}
  \vspace{-0.5cm}
  \caption{Test loss $\mathcal L(\hat \theta_{\textup{NN}/\textup{RF}}(\lambda))$ (black) and spurious correlations $\mathcal C(\hat \theta_{\textup{NN}/\textup{RF}}(\lambda)$ (red) as a function of $\lambda$. \emph{Left:} 2-layer fully connected ReLU network, trained on the Corrupted CIFAR-10 dataset (boats and trucks, with added textures ``brightness'' and ``glass blur''). \emph{Right:} RF model with $\tanh$.}
  \vspace{-0.5cm}
  \label{fig:rebuttalcifar}
\end{wrapfigure}

\simone{In the left two panels of Figure \ref{fig:rebuttal} we consider Color-MNIST including all 10 digits. We train a 2-layer network on all classes with one-hot encoding and MSE loss. Odd (even) digits are red (blue) with probability $(1+\alpha)/2$, and blue (red) with probability $(1-\alpha)/2$. To compute $\mathcal C$, at test time we consider the parity of the logit with the highest value, with respect to the color of the image. The first and the second panel correspond to two values of $\alpha$-s and follow a similar profile as Figure \ref{fig:nn} (left), showing the same qualitative behavior of $\mathcal L$ and $\mathcal C$ with respect to $\lambda$ for the full Color-MNIST dataset (i.e., in the multi-class setting). In the third and fourth panel of Figure \ref{fig:rebuttal} we repeat the experiment in Figure \ref{fig:lambda} (right) for multiple values of $\alpha$, reporting $\mathcal L$ and $\mathcal C$ with respect to $\lambda$ for linear regression. The curves behave as expected: for any value of $\lambda$, as $\alpha$ decreases, $\mathcal C$ decreases. Furthermore, the (in-distribution) test loss decreases as $\alpha$ increases, in agreement with our discussion at the end of Section \ref{sec:regsimp}.}


\simone{In Figure \ref{fig:rebuttalcifar} we train a 2-layer network and a random feature model on the classes ``trucks'' and ``boats'' from the Corrupted CIFAR-10 dataset (used in the context of spurious correlations in \cite{liuavoiding}), enforcing a correlation in the training set with respectively the textures ``brightness'' and ``glass blur'', with a correlation $\alpha = 0.95$. In both panels, we see a mild increase in $\mathcal C$ as $\lambda$ initially increases, until the later decrease predicted by Proposition \ref{prop:boundsC}. The profiles are also qualitatively similar to the ones of Figure \ref{fig:nn} (left).}

\end{document}